\bfseries\contentslabel{2em}}%
\footnotesize\contentslabel{3em}}%
\numberwithin{equation}{section}
\DeclareMathOperator*{\argmin}{arg\,min}
\newcommand{\TF}{{\sf TF}} 
\newcommand{\softmax}{{\sf softmax}} 
\newcommand{\relu}{{\sf relu}} 
\newcommand{\dist}{{\sf dist}} 
\DeclareMathOperator{\E}{\mathbb{E}}
\newcommand{\N}{\mathcal{N}}
\renewcommand{\P}{\mathcal{P}}
\newcommand*{\twocase}[4]{\left\{\begin{array}{ll}
        #1 & \text{for } #2\\
        #3 & \text{for } #4
        \end{array}\right.}
\newcommand{\R}{\mathbb{R}}
\renewcommand{\S}{{\mathcal S}}
\newtheorem{theorem}{Theorem} 
\newtheorem{definition}{Definition}
\newtheorem{lemma}{Lemma}
\newtheorem{corollary}[theorem]{Corollary}
\newtheorem{proposition}{Proposition}
\newtheorem{remark}{Remark}
\newtheorem{assumption}{Assumption}
\newtheorem{example}{Example}
\renewcommand{\paragraph}[1]{\ \newline\noindent\textbf{#1}\quad}
\DeclareMathOperator{\Dist}{Dist}
\newcommand{\att}{\mathrm{Attn}}
\newcommand{\wstar}{w_\star}
\newcommand{\tz}[1]{{z^{(#1)}}}
\newcommand{\tx}[1]{{x^{(#1)}}}
\newcommand{\ty}[1]{y^{(#1)}}
\newcommand{\tY}[1]{Y^{(#1)}}
\newcommand{\tS}{\tilde{S}}
\newcommand{\tT}{\tilde{T}}
\newcommand*\bmat[1]{\begin{bmatrix} #1 \end{bmatrix}}
\DeclareMathOperator{\tr}{Tr}
\newcommand*\lin[1]{\left\langle #1 \right\rangle}
\newcommand*\lrb[1]{\left[ #1 \right]}
\newcommand*\lrn[1]{\left\| #1 \right\|}
\newcommand*\lrp[1]{\left( #1 \right)}
\newcommand*\lrbb[1]{\left\{ #1 \right\}}
\renewcommand\th{{\tilde{h}}}
\newcommand\jA{{\mathcal{J}_{\th}}}
\newcommand\K{\mathcal{K}}
\newcommand\Kmat{\mathbb{K}}
\newcommand\HH{\mathbb{H}}
\newcommand\X{\mathcal{X}}
\newcommand{\diag}{\mathrm{diag}}
\newcommand\numberthis{\addtocounter{equation}{1}\tag{\theequation}}
\newcommand*\at[2]{\left.#1\right|_{#2}}
\newcommand{\US}{U_\Sigma}
\newcommand{\ostar}{\mathbin{\mathpalette\make@circled\star}}
\newcommand{\make@circled}[2]{%
\ooalign{$\m@th#1\smallbigcirc{#1}$\cr\hidewidth$\m@th#1#2$\hidewidth\cr}%
}
\newcommand{\smallbigcirc}[1]{%
\vcenter{\hbox{\scalebox{0.77778}{$\m@th#1\bigcirc$}}}%
}
\title{Transformers Implement Functional Gradient Descent to Learn Non-Linear Functions In Context}
\begin{document}

\linespread{1.03}              

\author{\name{Xiang Cheng$^\ast$} \email{chengx@mit.edu}\\
\name{Yuxin Chen$^\dagger$}
\email{yxxchen@ucdavis.edu}\\
\name{Suvrit~Sra$^{\ast,\ddagger}$} \email{suvrit@mit.edu}\\[2mm]
\addr{$^\ast$Massachusetts Institute of Technology, USA}\\
\addr{$^\dagger$University of Calfornia, Davis, USA}\\
\addr{$^\ddagger$Technical University of Munich, Germany}
}

\maketitle

\vskip12pt
\begin{abstract}
\noindent Many neural network architectures are known to be Turing Complete, and can thus, in principle implement arbitrary algorithms. However, Transformers are unique in that they can implement gradient-based learning algorithms \emph{under simple parameter configurations}. 
This paper provides theoretical and empirical evidence that (non-linear) Transformers naturally learn to implement gradient descent \emph{in function space}, which in turn enable them to learn non-linear functions in context. Our results apply to a broad class of combinations of non-linear architectures and non-linear in-context learning tasks. Additionally, we show that the optimal choice of non-linear activation depends in a natural way on the class of functions that need to be learned.
\end{abstract}

\section{Introduction}

Transformers \citep{vaswani2017attention} have been observed to produce the correct output based on contextual demonstrations provided in the prompt alone, a phenomenon commonly known as in-context learning (ICL)~\citep{brown2020language}. Understanding ICL and the mechanism underlying may hold the key to explaining the success of the Transformer architecture, and has therefore attracted great attention. 

\vskip2pt
A promising conjecture is that Transformers learn in-context by implementing algorithms in their forward pass \citep{akyurek2022learning,von2022transformers,ahn2023transformers,zhang2023trained,mahankali2023one,von2023uncovering,lin2023transformers,bai2023transformers}. A subset of this work focuses on ICL for  \emph{linear functions} 
using \emph{linear Transformers} (i.e., the attention module contains no nonlinear activations). In this setting, there exists a \emph{simple parameter configuration} under which the Transformer implements gradient descent. Subsequently, \citep{ahn2023transformers,mahankali2023one} were able to verify the local and global optimality of similar parameter configurations, and \citet{zhang2023trained} showed convergence to this parameter configuration when training the Transformer with gradient descent. 

\vskip2pt
The above works provide a convincing explanation of how linear Transformers learn linear functions in context; their theoretical conclusions are also well supported by experiments~\citep{von2022transformers, ahn2023transformers}. An important aspect of this setting is that linear Transformers are \emph{very well suited} to learning linear functions---by simply setting the Query and Key matrices to the identity matrix, a single linear attention can implement one step of gradient descent for the least squares loss. 

\vskip2pt
But in real Transformers, emph{non-linear activations such as softmax are very important}; moreover, the training data are more likely generated by \emph{complicated non-linear functions}. It is unclear whether there exists a similarly elegant construction for non-linear Transformers that explains how they could learn non-linear functions in context. These observations raise two central motivating questions:
\begin{tcolorbox}[enhanced,title=,
                    frame hidden,
                    colback=Goldenrod!20,
                    breakable,
                    left=0pt,
                    right=0pt,
                    top=3pt,
                    bottom=3pt,
                    ]
\begin{enumerate}[label=(Q\arabic*),leftmargin=2cm]
\setlength{\itemsep}{0pt}
    \item \emph{What learning algorithms are implemented by Transformers with \textbf{non-linear activations}?}
    \item \emph{Can Transformers learn \textbf{non-linear functions} of data in context?}
\end{enumerate}
\end{tcolorbox}
\noindent This paper aims to answer both questions, so as to shed light on the inner workings of Transformers, and in turn advance our understanding on what makes Transformers powerful learners. 

\vskip2pt
More specifically, we simultaneously consider both \textbf{\emph{non-linear architectures}}---Attention modules with arbitrary non-linear activations $\th$ (e.g., softmax or ReLU) and \textbf{\emph{non-linear data}}---where labels are sampled from a non-linear process (e.g., a Gaussian Process, or certain more general processes, to be clarified later) conditioned on the covariates. Surprisingly, we show that the answers to questions~(Q1) and~(Q2) are \emph{deeply intertwined}: there exists \textbf{\emph{a simple parameter configuration that makes Transformers implement gradient descent in function space}}; moreover, we show that this functional gradient descent converges to the \textbf{\emph{Bayes optimal predictor if the non-linearity of the attention module matches the underlying data distribution.}} Beyond our construction, we also provide \textbf{\emph{theoretical and empirical evidence}} that Transformers do indeed learn to implement functional gradient descent via training. Our analysis applies to a broad range of functions (such as labels generated from two layer ReLU networks, see Ex.~\ref{ex:2_layer_relu}) and common architectures (such as ReLU and Softmax Transformers, see Ex.~\ref{ex:relu}, \ref{ex:softmax}).

\subsection{Main Contributions}
The main contributions of this work are as follows:
\begin{enumerate}
\setlength{\itemsep}{1pt}
\item In Proposition \ref{p:rkhs_descent_transformer_construction}, we show that when the non-linearity in the Attention module matches a kernel $\K$, then \textbf{\emph{Transformers can implement gradient descent in function space wrt the Reproducing Kernel Hilbert Space (RKHS) metric induced by $\K$}}. In Sections \ref{ss:construction_linear} and \ref{ss:construction_softmax_rbf}, we discuss the connection between the construction in Proposition \ref{p:rkhs_descent_transformer_construction} and several common Transformer variants. Our result generalizes the least-squares gradient descent construction from \cite{von2022transformers}.
    \item In Proposition \ref{p:matching_h_k_optimality}, we consider a general setting when the data labels $\ty{i}$ are generated from a Kernel Gaussian Process. We show that when the non-linear module $\th$ matches the generating kernel $\K$, \textbf{\emph{the functional gradient descent construction converges to the Bayes optimal predictor as the number of layers increases.}} In Section \ref{ss:experiments_for_optimality}, we \emph{verify experimentally} that the highest accuracy is indeed achieved when the \textbf{\emph{non-linear module matches the generating kernel}}.

    \item In Proposition \ref{c:multihead}, we present a generalization of Propositions \ref{p:rkhs_descent_transformer_construction} and \ref{p:matching_h_k_optimality} to \textbf{multi-head} attention. A multi-head Transformer with different activation $\th$ per-head can implement the Bayes-optimal functional gradient descent algorithm for any RKHS that is obtainable by composition of the kernels of each individual $\th$.

    \item We analyze the loss landscape of a Transformer on non-linear data. In Theorem \ref{t:informal_master_sparse}, we characterize certain stationary points of the in-context loss under a sparsity constraint on the value matrix. When $\th$ coincides with a kernel,  \textbf{\emph{this stationary point is exactly the functional gradient descent construction of Proposition \ref{p:rkhs_descent_transformer_construction}}}. We verify empirically that this stationary point is consistently learned during training. 

    \item In Theorem \ref{t:informal_master_full}, we characterize stationary points of the in-context loss without the sparsity constraint. Our proposed stationary point implements an algorithm that interleaves steps of covariate transformation with functional gradient descent. Once again, we verify empirically that the stationary point is consistently learned during training.

    \item Less importantly, but possibly of independent interest, our experiments in Section \ref{ss:experiments_for_optimality} identify a simple scenario where ReLU Transformers appears to out-perform softmax Transformers (and vice-versa).
\end{enumerate}

\noindent The table below summarizes the main theoretical results of this paper, along with their key assumptions. We emphasize that \textbf{Theorems \ref{t:informal_master_sparse} and \ref{t:informal_master_full} apply to the commonly used softmax and ReLU attentions.}
\begin{center}
\vspace{-6pt}
\colorbox{gray!20}{
\begin{tabular}{l|l l l l | l}
Results  & $\tx{i}$& $\ty{i}$ &  $\th$ & \begin{tabular}{@{}l@{}}Transformer\\ 
Parameters\end{tabular} & Basic Description\\
\hline
\hline
Prop. \ref{p:rkhs_descent_transformer_construction} & None & None & $\th$ a kernel & None & \begin{tabular}{@{}l@{}}Transformer can implement functional\\ GD in RKHS induced by kernel $\th$.\end{tabular}\\
\hline
Prop. \ref{p:matching_h_k_optimality} & None & \begin{tabular}{@{}l@{}}$\K$-GP~\eqref{d:k_gaussian_process}\end{tabular} & $\th$ matches $\K$   & None & \begin{tabular}{@{}l@{}}Transformer prediction can be Bayes\\ opt.~if $\th$ matches $\K$, with suff.~layers.\end{tabular}\\
\hline
Prop. \ref{c:multihead} & None & \begin{tabular}{@{}l@{}}$\K$-GP~\eqref{d:k_gaussian_process}\end{tabular} & \begin{tabular}{@{}l@{}}$\K$ composed \\from $\th$'s \end{tabular} & None & \begin{tabular}{@{}l@{}}Multi-head Transformer can implement \\functional GD on RKHS for composite \\kernels, which is Bayes optimal.\end{tabular}\\
\hline
Thm. \ref{t:informal_master_sparse} & Asm. \ref{ass:x_distribution} & Asm. \ref{ass:y_distribution} & Asm. \ref{ass:th} & 
\begin{tabular}{@{}l@{}} Asm. \ref{ass:full_attention}\\ $\&$ $A_\ell=0$ \end{tabular} & 
\begin{tabular}{@{}l@{}}Functional GD is stationary point of\\ IC loss under $A_\ell=0$ constraint. \end{tabular}
\\
\hline
Thm. \ref{t:informal_master_full} & Asm. \ref{ass:x_distribution} & Asm. \ref{ass:y_distribution} & Asm. \ref{ass:th} & 
Asm. \ref{ass:full_attention}
& \begin{tabular}{@{}l@{}}Characterizing stationary point\\ of in-context loss (unconstrained). \end{tabular}
\\
\end{tabular}
}
\end{center}
\subsection{Related Work}

\citet{garg2022can} show experimentally that Transformers can learn \emph{simple functions} in context, including linear functions, decision trees, and two layer neural networks. \citet{akyurek2022learning,dai2022can} propose that Transformers learn in-context by implementing learning algorithms. Building upon \citet{akyurek2022learning}, \citet{lin2023transformers} propose more efficient constructions for a broader range of learning algorithms. \citet{bai2023transformers} apply a similar technique to study the in-context reinforcement learning problem. Independent of the ICL motivation, numerous other authors have also studied the algorithmic power of transformers \cite{perez2021attention,wei2022statistically,giannou2023looped,olsson2022context}. 

\vskip2pt
Many of the above papers propose some form of \emph{construction} (i.e., a specific parameter configuration), under which the Transformer implements the desired algorithm. It is however often unclear if the Transformer actually learns these constructions during training. Motivated by the question of \emph{``what do Transformers actually learn,''} a line of recent work turned their attention to linear Transformers \cite{schlag2021linear,von2022transformers}. 

\vskip2pt
In~\citep{von2022transformers}, the authors devise a \emph{simple weight construction} for the linear Transformer, which can be shown to implement gradient descent (as well as a more sophisticated algorithm known as GD++). Subsequently, \cite{ahn2023transformers,zhang2023trained,mahankali2023one} show that for 1-layer Transformers, there exists a global minimum of the in-context loss that closely resembles the construction in \cite{von2022transformers}.  \cite{zhang2023trained} further show that training a 1-layer Transformer with gradient descent converges in polynomial time to the proposed global minimum. For multi-layer Transformers, \cite{ahn2023transformers} show the local optimality of preconditioned GD and preconditioned GD++, under different parameter sparsity assumptions. We note that Assumptions \ref{ass:QK_sparsity} and \ref{ass:full_attention} in this paper closely parallel the parameter sparsity assumptions in \cite{ahn2023transformers}. Furthermore, Theorems \ref{t:informal_master_sparse} and \ref{t:informal_master_full} can be viewed as generalizations of Theorems 3 and 4 of \cite{ahn2023transformers} to non-linear architectures and functions. Finally, Theorem 5 of \cite{ahn2023transformers} establishes the global optimality of gradient descent for \emph{one-layer ReLU-activated Transformers}, which was shown in \cite{wortsman2023replacing} to perform comparably to softmax transformers on certain tasks. More recently, \cite{von2023uncovering} studied the ability of linear Transformers to perform auto-regressive next-token prediction for sequential data. \cite{wu2023many} study the statistical complexity of ICL with a 1-layer linear Transformer for linear regression. \cite{huang2023context} uses a similar framework to study 1-layer softmax-activated Transformers, when the covariates are all orthonormal. 

\vskip2pt
Distinct from the above, another relevant line of work views the attention module as a kernel operation, and proposes alternatives to standard attention based on various kernels \citep{tsai2019transformer,choromanski2020rethinking,ali2021xcit,nguyen2022improving,nguyen2022fourierformer,chi2022kerple}. In \citep{wright2021transformers,chen2023primal}, authors consider the connection between Transformers and \emph{asymmetric kernels}.

\section{Setup: In-context learning with non-linear Transformers}
\label{s:define_icl}

\paragraph{Input Data for In-Context Learning}\\
We begin by defining the in-context learning problem. We are given $n$ demonstrations $\tz{i} := (\tx{i}, \ty{i})$, for $i=1...n$. $\tx{i} \in \R^d$ are covariates and $\ty{i}\in \R$ are scalar labels. We are also given a query $\tx{n+1} \in \R^d$, an the goal is to predict its label $\ty{n+1}$, which is unobserved. In general, $X := [\tx{1}\ldots \tx{n+1}] \in \R^{d\times (n+1)}$ have joint distribution $\P_X$. We assume that $Y = [\ty{1}...\ty{n+1}] \in \R^{1\times n+1}$ have joint distribution $\P_{Y|X}$ conditional on $X$. An important example of $\P_{Y|X}$ is when $\ty{i} = \phi(\tx{i})$ for some unknown function $\phi$. For instance, $\phi(x) = \lin{\theta, x}$ gives rise to linear regression. We will discuss specific choices of $\P_{X}$ and $\P_{Y|X}$ in Sections \ref{ss:optimality} and \ref{ss:distributional_assumptions}. Thus the input of the in-context learning problem is given by
\begin{align}
\label{d:Z_0}
Z_0 = \begin{bmatrix}
\tz{1} \ \tz{2} \ \cdots \ \tz{n}  \ \tz{n+1}
\end{bmatrix} = \begin{bmatrix}
\tx{1} & \tx{2} & \cdots & \tx{n} &\tx{n+1} \\ 
\ty{1} & \ty{2} & \cdots &\ty{n}& 0
\end{bmatrix} \in \R^{(d+1) \times (n+1)}.
\end{align}

\paragraph{Transsformers with general non-linear attention}\\
We define the \emph{generalized attention module} as 
\begin{align*}
    \numberthis \label{d:att_th}
    \att_{V,B,C}^{\th}(Z) := V Z M \th\lrp{B X, C X},
\end{align*}
$V\in \R^{(d+1)\times(d+1)},B\in \R^{d\times d},C\in \R^{d\times d}$ are the value, query, and key matrices respectively, and they parameterize the attention module. $M:= \begin{bmatrix}
    I_{n\times n}&0\\0&0
\end{bmatrix}$ is a mask matrix, and $\th: \R^{d\times(n+1)}\times \R^{d\times(n+1)}\to\R^{(n+1)\times(n+1)}$ denotes a matrix-valued function. The matrix $X$ is shorthand for $[\tx{1}\ldots \tx{n+1}]\in \R^{d\times(n+1)}$, the first $d$ rows of $Z$. 

\begin{remark}
The Attention definition in \eqref{d:att_th} differs from standard attention, in that $X$ should be replaced by $Z$. We discuss in Section \ref{ss:th_examples} how the common attention modules maps to \eqref{d:att_th} under a sparsity assumption on the Query, Key matrices (Assumption \ref{ass:QK_sparsity}).
\end{remark}

We construct a $k$-layer Transformer by stacking $k$ layers of the attention module (with residual). To be precise, let $Z_\ell$ denote the output of the $(\ell-1)^{th}$ layer of the Transformer. Then $Z_{\ell+1} := Z_\ell + \att^{\th}_{V_\ell, B_\ell, C_\ell} \lrp{Z_\ell}$, or equivalently:
\begin{align*}
    \numberthis \label{e:dynamics_Z}
    Z_{\ell+1} = Z_\ell + V_\ell Z_\ell M \th\lrp{B_\ell X_\ell, C_\ell X_\ell }
\end{align*}
where $V_\ell,B_\ell,C_\ell$ are the value, query and key matrices of the attention module at layer $\ell$. $\th: \R^{d\times(n+1)} \times \R^{d\times(n+1)}\to\R^{(n+1)\times(n+1)}$ denotes a non-linear activation function. 

Consider a $k$ layer Transformer. For the rest of the paper, we let $V:=\lrbb{V_\ell}_{\ell=0\ldots k}$, $B:=\lrbb{B_\ell}_{\ell=0\ldots k}$, $C:=\lrbb{C_\ell}_{\ell=0\ldots k}$ denote collections of the attention parameters across layers. For $\ell = 0\ldots k+1$, define 
\begin{align*}
    \numberthis \label{e:transformer_prediction}
    \TF_\ell(\textcolor{blue}{x}; (V,B,C)\vert \tz{1}\ldots \tz{n}):=\lrb{Z_{\ell}}_{(d+1),(n+1)},
\end{align*}
where $Z_i$ evolves as \eqref{e:dynamics_Z}, initialized at
{$Z_0 = 
{\lrb{\begin{smallmatrix}
\tx{1} & \tx{2} & \cdots & \tx{n} &\color{blue}{x} \\ 
\ty{1} & \ty{2} & \cdots &\ty{n}& 0
\end{smallmatrix}}}$}. We interpret $\TF_\ell(x; (V,B,C)\vert \tz{1}\ldots \tz{n})$ as \emph{``The predictor for $-\ty{n+1}$ at layer $\ell$, given $\tx{n+1} = x$, conditioned on demonstrations $\tz{1}\ldots \tz{n}$, parameterized by weight matrices $V,B,C$''}. This definition is consistent with the setup in \cite{von2022transformers,ahn2023transformers}.

\paragraph{The In-Context Loss}\\
Given the input $Z_0$ and the Transformer parameterized by $V,B,C$, we define the in-context loss as
\begin{align} \label{d:ICL_loss}
f\left(V,B,C\right) 
=& \E_{Z_0, \ty{n+1}} \Bigl[ \left( \TF_{k+1}(\tx{n+1}; (V,B,C)\vert \tz{1}\ldots \tz{n}) + \ty{n+1}  \right)^2\Bigr] \\
:=& \E_{Z_0, \ty{n+1}} \Bigl[ \left( \lrb{Z_{k+1}}_{(d+1),(n+1)} + \ty{n+1}  \right)^2\Bigr].
\end{align}

\subsection{Examples of Attention Modules}
\label{ss:th_examples}
To motivate definition~\eqref{d:att_th}, we show in Examples \ref{ex:linear}, \ref{ex:relu}, \ref{ex:softmax} below how the most common variants of the attention module can be realized via specific choices of non-linearity $\th$, assuming the following sparsity constraints: 
\begin{assumption}[$QK$ last column and row sparsity]
    \label{ass:QK_sparsity}
    For $Q,K\in \R^{(d+1)\times (d+1)}$, exist $B,C \in \R^{d\times d}$ such that $Q = {\small\begin{bmatrix}
    B & 0 \\ 0 & 0
\end{bmatrix}}$ and $K = {\small\begin{bmatrix}
    C & 0 \\ 0 & 0
\end{bmatrix}}.$
\end{assumption}
In words, Assumption \ref{ass:QK_sparsity} restricts the last row/column of $Q,K$ to be $0$; this restriction was considered in \cite{von2022transformers,ahn2023transformers}. and is naturally satisfied by the global minimum of 1-layer Linear Transformers in \cite{ahn2023transformers,mahankali2023one,zhang2023trained}.

\begin{tcolorbox}[enhanced,title=,
                    frame hidden,
                    colback=gray!5,
                    breakable,
                    left=1pt,
                    right=1pt,
                    top=1pt,
                    bottom=1pt,
                ]
\begin{example}[Linear Transformer] 
    \label{ex:linear}
The linear attention module of~\citep{von2022transformers}, with parameters $V,Q,K$, is given by 
\begin{align*}
    \att^{linear}_{V,Q,K}(Z) := V Z M Z^\top Q^\top K Z.
\end{align*}
Assume $Q,K,B,C$ satisfy Assumption \ref{ass:QK_sparsity}. By choosing $\th(U,W):= U^\top W$, $\att^{\th}_{VBC}$ from \eqref{d:att_th} equals $\att^{linear}_{VQK}$. 
\end{example}
\end{tcolorbox}

\begin{tcolorbox}[enhanced,title=,
                    frame hidden,
                    colback=gray!5,
                    breakable,
                    left=1pt,
                    right=1pt,
                    top=1pt,
                    bottom=1pt,
                ]
\begin{example}[ReLU Transformer]
    \label{ex:relu}
    The ReLU attention module, with parameters $V,Q,K$, is given by
    \begin{align*}
        \att^{relu}_{V,Q,K}(Z) := V Z M\relu\lrp{Z^\top Q^\top K Z}.
    \end{align*}
    In the above, $\relu$ denotes element-wise ReLU function. Assume $Q,K,B,C$ satisfy Assumption \ref{ass:QK_sparsity}. By choosing $\th\lrp{U,W} = \relu\lrp{U^\top W}$, $\att^{\th}_{VBC}$ from \eqref{d:att_th} equals $\att^{relu}_{VQK}$. 
\end{example}
\end{tcolorbox}

\begin{tcolorbox}[enhanced,title=,
                    frame hidden,
                    colback=gray!5,
                    breakable,
                    left=1pt,
                    right=1pt,
                    top=1pt,
                    bottom=1pt,
                ]
\begin{example}[Softmax Transformer]
    \label{ex:softmax}
    The softmax attention module, with parameters $V,Q,K$, is given by
    \begin{align*}
        \att^{softmax}_{V,Q,K}(Z) := V Z \softmax\lrp{Z^\top Q^\top K Z}.
    \end{align*}
    In the above, $\softmax: \R^{(n+1)\to (n+1)}$ is the masked softmax function:
    \begin{align*}
        \lrb{\softmax(W)}_{ij} = \twocase{\frac{\exp\lrp{W_{ij}}}{\sum_{k=1}^n \exp\lrp{W_{kj}}}}{i\neq n+1}{0}{i=n+1}.
    \end{align*}
    Let $Q,K,B,C$ satisfy Assumption \ref{ass:QK_sparsity}. Let us define
    \begin{align*}
        \lrb{\th\lrp{U,W}}_{ij} := \twocase{\frac{\exp\lrp{\lrb{U^\top W}_{ij} }}{\sum_{k=1}^n \exp\lrp{ \lrb{U^\top W}_{kj}}}}{i\neq n+1}{0}{i=n+1}.
        \numberthis \label{e:softmax_th}
    \end{align*}
    Then $\att^{\th}_{VBC}$ from \eqref{d:att_th} equals $\att^{softmax}_{VQK}$. Note that the mask matrix $M$ from \eqref{d:att_th} is unnecessary here as $M\th(\cdot) = \th(\cdot)$.
\end{example}
\end{tcolorbox}

\section{Transformers \emph{can} implement gradient descent in function space.}
In this section, we show that under a choice of $V,B,C$ and $\tilde{h}$, the forward pass of the Transformer defined in \eqref{e:dynamics_Z} can implement \emph{Kernel Regression} for a kernel $\K$. We present the necessary background on RKHS in Section \ref{s:rkhs_basics}.

We begin by defining \emph{``gradient descent in function space.''} Let $\HH$ denote a Hilbert space of functions mapping from $\R^d\to \R$, equipped with the metric $\lrn{\cdot}_{\HH}$. Let $L(f): \HH \to \R$ denote some loss. The gradient descent of $L(f)$ with respect to $\lrn{\cdot}_{\HH}$ is defined as the sequence
\begin{align*}
    f_{\ell+1} = f_\ell - r_\ell \nabla L (f_\ell),
    \numberthis \label{d:functional_gradient_descent}
\end{align*}
where $\nabla L(f) := \at{\argmin_{\lrn{g}_{\HH}=1} \frac{d}{dt} L(f + tg)}{t=0}$, and $r_\ell$ is a sequence of stepsizes.

\subsection{Transformers \emph{can} implement gradient descent in function space.}
\label{s:can_implement_function_descent}
The first main result of this section is Proposition \ref{p:rkhs_descent_transformer_construction} below, which shows that a Transformer can implement the functional descent sequence \eqref{d:functional_gradient_descent}. We highlight that Proposition \ref{p:rkhs_descent_transformer_construction} \textbf{works for any kernel $\K$} -- as long as the choice of $\th$ coincides with $\K$. In Sections \ref{ss:construction_linear} and \ref{ss:construction_softmax_rbf}, we motivate Proposition \ref{p:rkhs_descent_transformer_construction} using specific examples. 

\begin{proposition}
    \label{p:rkhs_descent_transformer_construction}
    Let $\K$ be an arbitrary kernel. Let $\HH$ denote the Reproducing Kernel Hilbert space induced by $\K$. Let $\tz{i} = (\tx{i},\ty{i})$ for $i=1\ldots n$ be an arbitrary set of in-context examples. Denote the empirical loss functional by $L(f):= \sum_{i=1}^n \lrp{f(\tx{i}) - \ty{i}}^2$. Let $f_0=0$ and let $f_\ell$ denote the gradient descent sequence of $L$ wrt $\lrn{\cdot}_{\HH}$, as defined in \eqref{d:functional_gradient_descent}. Then there exist scalars stepsizes $r_0'\ldots r_k'$ such that the following holds:
    
    Let $\th$ be the function defined as $\lrb{\tilde{h}(U,W)}_{i,j} := \K\lrp{U^{(i)}, W^{(j)}}$, where $U^{(i)}$ and $W^{(i)}$ denote the $i^{th}$ column of $U$ and $W$ respectively. Let 
    $V_\ell = \begin{bmatrix}
    0 & 0 \\ 
    0 & -r_\ell'
    \end{bmatrix}$, $B_\ell = I_{d\times d}$, $C_\ell = I_{d\times d}$. Then for any $x:=\tx{n+1}$, the Transformer's prediction for $\ty{n+1}$ at each layer $\ell$ matches the prediction of the functional gradient sequence \eqref{d:functional_gradient_descent} at step $\ell$, i.e. for all $\ell=0\ldots k$,
    \begin{align*}
        \TF_\ell(x; (V,B,C)\vert \tz{1}\ldots \tz{n}) = - f_\ell(x).
        \numberthis \label{e:tf_fi_equivalence}
    \end{align*}
\end{proposition}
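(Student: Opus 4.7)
The plan is to prove \eqref{e:tf_fi_equivalence} by induction on $\ell$, using the reproducing property of $\HH$ to match one step of functional gradient descent with one layer of the Transformer. Before starting the induction, I would compute $\nabla L(f)$ explicitly: since $L(f) = \sum_i (f(\tx{i}) - \ty{i})^2$ and $f(\tx{i}) = \inp{f}{\K(\tx{i},\cdot)}_{\HH}$, the Fr\'echet derivative yields $\nabla L(f) = 2\sum_{i=1}^n (f(\tx{i}) - \ty{i})\,\K(\tx{i},\cdot)$, so the functional GD recursion \eqref{d:functional_gradient_descent} is
\begin{align*}
f_{\ell+1}(x) \;=\; f_\ell(x) \;+\; r_\ell'\sum_{i=1}^n \bigl(\ty{i} - f_\ell(\tx{i})\bigr)\,\K(\tx{i},x),
\qquad \text{with } r_\ell' := 2 r_\ell.
\end{align*}
This pins down the constants $r_\ell'$ promised by the statement.

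Next I would analyze the Transformer dynamics under the prescribed weights. Since $V_\ell$ has zeros in its first $d$ rows, the top $d$ rows of $Z_\ell$ are never updated, so $X_\ell \equiv X$ for all $\ell$; in particular the $\th$-matrix at every layer has entries $[\th(X,X)]_{i,j} = \K(\tx{i},\tx{j})$, i.e.\ the kernel Gram matrix of the in-context covariates. The mask $M$ kills its last row, so the $(n+1)$-th column of $M\th(X,X)$ has entries $\K(\tx{i},x)$ for $i\le n$ and $0$ otherwise. Combined with the fact that $V_\ell$ only acts nontrivially on the last row, this means the update \eqref{e:dynamics_Z} affects only the label row, and for each column $j$ it reads
\begin{align*}
[Z_{\ell+1}]_{d+1,j} \;=\; [Z_\ell]_{d+1,j} \;-\; r_\ell' \sum_{i=1}^n [Z_\ell]_{d+1,i}\,\K(\tx{i},\tx{j}),
\end{align*}
with the convention $\tx{n+1}=x$.

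The inductive hypothesis I would propagate is stronger than \eqref{e:tf_fi_equivalence}: namely
\begin{align*}
[Z_\ell]_{d+1,j} \;=\; \begin{cases} \ty{j} - f_\ell(\tx{j}), & j = 1,\dots,n,\\ -\,f_\ell(x), & j = n+1.\end{cases}
\end{align*}
The base case $\ell=0$ is immediate from the initial $Z_0$ in \eqref{d:Z_0} and $f_0 \equiv 0$. For the inductive step, substitute the hypothesis into the Transformer update above; for $j \le n$ this gives
\begin{align*}
[Z_{\ell+1}]_{d+1,j} \;=\; \bigl(\ty{j} - f_\ell(\tx{j})\bigr) - r_\ell'\sum_{i=1}^n \bigl(\ty{i} - f_\ell(\tx{i})\bigr)\K(\tx{i},\tx{j}) \;=\; \ty{j} - f_{\ell+1}(\tx{j}),
\end{align*}
by the functional GD recursion displayed above; the $j=n+1$ case is identical with $\tx{n+1}=x$ and $\ty{n+1}$ replaced by $0$. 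Taking negatives in the $(n+1)$-th column recovers \eqref{e:tf_fi_equivalence}.

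This proof is essentially a bookkeeping exercise, and I do not see a genuine obstacle. The only subtlety worth flagging is that the induction requires tracking the entire last row of $Z_\ell$ and not just its $(n+1)$-th entry, since the functional GD step depends on the current residuals $\ty{i}-f_\ell(\tx{i})$ at all in-context points; conveniently, the chosen $V_\ell$, $B_\ell$, $C_\ell$ cause the Transformer to maintain exactly these residuals in-place, which is the structural reason behind the identification.
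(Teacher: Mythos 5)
Your proof is correct and follows essentially the same route as the paper: an induction that tracks the entire last row of $Z_\ell$ (your ``stronger inductive hypothesis'') is exactly what the paper establishes via its identity $\tY{i}_\ell = \ty{i} + \TF_\ell(\tx{i};\cdot)$ before running the induction, and the layer-by-layer match-up with one functional GD step is identical. The only point worth flagging is your claim $r_\ell' = 2r_\ell$: the paper's definition of $\nabla L$ in \eqref{d:functional_gradient_descent} is the unit-norm constrained steepest descent direction, not the Fr\'echet gradient, so by Lemma \ref{l:rkhs_descent} the scalar $r_\ell'$ also absorbs a normalizing factor depending on $f_\ell$ (the current residuals). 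This does not affect the conclusion because the statement is existential in $r_0',\dots,r_k'$, but the explicit formula you wrote is not the one consistent with the paper's conventions.
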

In Theorem \ref{t:informal_master_sparse} below, we show that the above choices of $V_\ell, B_\ell, C_\ell$ form a stationary point of the Transformer training objective. In Section \ref{ss:experiments_for_sparse}, we emprically verify that these parameter choices are consistently learned in experiments. In Sections \ref{ss:construction_linear} and \ref{ss:construction_softmax_rbf}, we show how Proposition \ref{p:rkhs_descent_transformer_construction} applies to two common settings.

\begin{proof}[Proof of Proposition \ref{p:rkhs_descent_transformer_construction}] 
We will first write down the explicit expression for \eqref{d:functional_gradient_descent}. By Lemma \ref{l:rkhs_descent}, for any $f\in \HH$, $\nabla L(f) = - c \sum_{i=1}^n \lrp{\ty{i} - f(\tx{i})} \K(\cdot, \tx{i})$, thus \eqref{d:functional_gradient_descent} is equivalent to
\begin{align*}
    f_{\ell+1}(\cdot)  = f_\ell(\cdot) + r_\ell' \sum_{i=1}^n \lrp{\ty{i} - f_\ell(\tx{i})} \K(\cdot, \tx{i})
    \numberthis \label{e:t:oaimda:1}.
\end{align*}
See proof of Lemma \ref{l:rkhs_descent} for the explicit relation between $r_\ell'$ and $r_\ell$.

Let $X_\ell$ and $Y_\ell$ denote the first $d$ rows and the last row of $Z_\ell$ respectively, for any layer $\ell=0\ldots k+1$ and for any $i=0\ldots n$, 
\begin{align*}
    \tY{i}_\ell = \ty{i} + \TF_\ell(\tx{i}; (V,B,C), \tz{1}\ldots \tz{n}).
    \numberthis \label{e:t:oaimda:0}
\end{align*}
In words: "$\ty{i}-\tY{i}_\ell$ is equal to the predicted label for $\tx{n+1}$, if $\tx{i} = \tx{n+1}$." \eqref{e:t:oaimda:0} follows immediately from \eqref{e:dynamics_Z}, by setting $\tx{n+1} = \tx{i}$, and verifying that $\lrb{Z_\ell}_{(d+1),i}$ and $\lrb{Z_\ell}_{(d+1),(n+1)}$ have identical updates across layers $\ell =0\ldots k$.

We will now prove the lemma statement by induction. For the input, $\lrb{Z_0}_{(d+1),(n+1)}:=0 = f_0(x)$ by definition in \eqref{d:Z_0}, so that \eqref{e:tf_fi_equivalence} holds. Now assume that $\TF_\ell(x; (V,B,C)\vert \tz{1}\ldots \tz{n}) = -f_\ell(x)$ up to some layer $\ell$. 

By definition of the dynamics on $Z_i$ in \eqref{e:dynamics_Z}, and plugging in our choice of $V, B, C$, we verify that
\begin{align*}
    & \TF_{\ell+1}(x; (V,B,C)\vert \tz{1}\ldots \tz{n}) \\
    =& \TF_{\ell}(x; (V,B,C)\vert \tz{1}\ldots \tz{n}) - r_\ell \sum_{i=1}^n \tY{i}_\ell \lrb{\th\lrp{X_0, X_0}}_{i,(n+1)}
    \numberthis \label{e:alskda:0}\\
    =& \TF_{\ell}(x; (V,B,C)\vert \tz{1}\ldots \tz{n}) - r_\ell \sum_{i=1}^n \tY{i}_\ell \K(\tx{i},x)
    \numberthis \label{e:alskda:1}\\
    =& -f_{\ell} (x) - r_\ell \sum_{i=1}^n \lrp{\ty{i} - f_{\ell} (x)} \K(\tx{i},x)\\
    =& -f_{\ell+1}(x).
\end{align*}
In the above, the first line is by plugging in our choice of $V,B,C$ into \eqref{e:dynamics_Z}. The second line is by our assumption on $\th$ in the lemma statement. The third line is by inductive hypothesis, along with \eqref{e:t:oaimda:0}. The fourth line is by \eqref{e:t:oaimda:1}. This concludes the proof.
\end{proof}
\noindent
As an immediate consequence of step \eqref{e:alskda:1} of the proof above, the functional gradient descent sequence \eqref{d:functional_gradient_descent} is equivalent to
\begingroup\abovedisplayskip=0.4em\belowdisplayskip=0.4em
\begin{align*}
    f_{\ell+1}(\cdot)  = f_\ell(\cdot) + r_\ell' \sum_{i=1}^n \lrp{\ty{i} - f_\ell(\tx{i})} \K(\cdot, \tx{i})
    \numberthis \label{e:t:oaimda:2}
\end{align*}
\endgroup
for some stepsizes $r_\ell'$. 

\subsubsection{Case Study: Linear Kernel}
\label{ss:construction_linear}
We first consider the simplest setting of the \textbf{Euclidean inner product Kernel}, i.e. $\K^{linear}(u,w) := \lin{u,w}$. In this setting, the choice of query, key, value matrices in Proposition \ref{p:rkhs_descent_transformer_construction} essentially match constructions in \cite{von2022transformers,ahn2023transformers}. It is also worth noting that \emph{functional gradient descent in the RKHS induced by $\K^{linear}$} in fact follows the same trajectory as \emph{(Euclidean) gradient descent of the linear regression parameter vector}: Let $\theta \in \R^d$ be the linear regression parameter to learn. Let $R(\theta):= \displaystyle\frac{1}{2} \sum_{i=1}^n \lrp{\lin{\tx{i}, \theta} - \ty{i}}^2$ denote the empirical least squares loss. Let $\theta_\ell$ denote the $\ell^{th}$ iterate of gradient descent with stepsize $r_\ell'$, thus
\begin{align*}
    \theta_{\ell+1} = \theta_\ell - r_\ell' \nabla R(\theta_\ell).
\end{align*}
Let $f_\ell(x) := \lin{\theta_\ell, x}$. Notice that $\nabla R(\theta_\ell) = - \sum_{i=1}^n \lrp{\ty{i} - f_k(\tx{i})} \tx{i}$, so that
\begin{align*}
    f_{\ell+1}(x) 
    =& f_{\ell}(x) + r_\ell' \sum_{i=1}^n \lrp{\ty{i} - f_\ell(\tx{i})} \lin{\tx{i},x}
\end{align*}
which is exactly the same as \eqref{d:functional_gradient_descent} (or more specifically, \eqref{e:t:oaimda:2}), by noting that $\lin{\tx{i},x} = \K^{linear}(\tx{i}, x)$.

\subsubsection{Case Study: Exponential Kernel, and connection to Softmax activation}
\label{ss:construction_softmax_rbf}
We will now show that the construction in Proposition \ref{p:rkhs_descent_transformer_construction}, when $\K$ is the exponential kernel, bears remarkable similarity to the softmax Transformer with identity weights.

Let $\K$ denote the exponential kernel with bandwidth $\sigma$, i.e. $\K(x,x') := \exp\lrp{\frac{1}{\sigma^2}\lin{x,x'}}$.
Choose $\lrb{\th^{exp}\lrp{U,W}}_{ij}:= \K(U_i, W_j)$ (where $U_i$ is the $i^{th}$ column of $U$), and choose Transformer parameters
\begin{align*}
    V_\ell = \begin{bmatrix}
    0 & 0 \\ 
    0 & - r_\ell
    \end{bmatrix}, 
    B_\ell = \frac{1}{\sigma} I_{d\times d}, 
    C_\ell = \frac{1}{\sigma}I_{d\times d}.
\end{align*}
Recall that $f_\ell(\tx{n+1})$ denotes the Transformer's prediction for $\ty{n+1}$ at layer $\ell$. The $\th^{exp}$ Transformer's prediction at each layer follows the functional gradient descent sequence from \eqref{e:t:oaimda:2}. 

On the other hand, recall from Example \ref{ex:softmax} the standard $\th^{softmax}$ Transformer with $\th^{softmax}$ defined in \eqref{e:softmax_th}. By choosing parameters
\begin{align*}
    V_\ell = \begin{bmatrix}
    0 & 0 \\ 
    0 & - r_\ell
    \end{bmatrix}, 
    Q_\ell = \begin{bmatrix}
    \frac{1}{\sigma} I_{d\times d} & 0 \\ 
    0 & 0
    \end{bmatrix}, K_\ell = \begin{bmatrix}
    \frac{1}{\sigma} I_{d\times d} & 0 \\ 
    0 & 0
    \end{bmatrix},
\end{align*}
The softmax-activated Transformer implements the update 
\begin{align*}
\numberthis \label{e:t:adnsnadkjnk}
& f_{\ell+1}(\cdot)  = f_\ell(\cdot)  + r_\ell' \color{blue}\tau(\cdot)\color{black} \sum_{i=1}^n \lrp{\ty{i} - f_\ell(\tx{i})} \K(\cdot, \tx{i}),
\end{align*}
where $\tau(\cdot) = 1/\sum_{j=1}^n \K\lrp{\cdot,\tx{j}}$ is the normalization factor in softmax. Comparing \eqref{e:t:adnsnadkjnk} to \eqref{e:t:oaimda:2}, we see that the algorithms implemented by $\th^{exp}$ and $\th^{softmax}$ Transformers are very similar, with the only difference being the normalization factor $\tau$.

\begin{remark}
    When $\|\tx{i}\|_2 = 1$ for all $i=1\ldots n+1$, the exponential kernel is up to scaling equal to the RBF kernel.
\end{remark}

\subsection{Optimality of $\th$ for matching $\K$.}
\label{ss:optimality}
We will show in Proposition \ref{p:matching_h_k_optimality} below, that the functional gradient descent algorithm \eqref{d:functional_gradient_descent}, which is implemented by the Transformer in Proposition \ref{p:rkhs_descent_transformer_construction}, can in fact lead to a \emph{nearly statistically optimal} prediction when the non-linear activation $\th$ matches the data distribution. We begin by defining a general class of data distributions. Let $\K:\R^d\times \R^d \to \R$ denote a symmetric function.  We define a conditional distribution for $Y|X$ as follows:

\begin{definition}[$\K$ Gaussian Process]
    \label{d:k_gaussian_process}
    Given symmetric $\K:\R^d\times \R^d \to \R$, we define the $\K$ Gaussian Process as the conditional distribution
    \begin{align*}
        Y|X \sim \N(0,\Kmat_+(X)),
    \end{align*}
    where $Y= [\ty{1}\ldots \ty{n+1}]$, $X = [\tx{1}\ldots \tx{n+1}]$, and $\Kmat_{ij}(X) := \K\lrp{\tx{i},\tx{j}}$. Let $U D U^\top$ be the Eigenvalue decomposition of $\Kmat(X)$. $\Kmat_+(X) := U |D| U^\top$, where $|D|_{ii} := |D_{ii}|$ is entry-wise equal to the absolute value of $D$. 
\end{definition}
Definition \ref{d:k_gaussian_process} generalizes the notion of a Gaussian process, to when the "metric" is given by the function $\K$. In Example \ref{ex:gaussian_process_kernel_examples} from Section \ref{ss:distributional_assumptions}, we discuss a few concrete examples of $\K$ Gaussian Processes. 
Note that Definition \ref{d:k_gaussian_process} \textbf{\emph{does not assume that $\K$}} is a kernel (specifically, $\K$ may not be PSD). However, if $\K$ is a kernel, then $\Kmat$ is always positive semidefinite, so that $\Kmat_+ = \Kmat$. 

In the following result, we see that when the data labels are generated by a $\K$-Gaussian Process for some kernel $\K$, then the Transformer prediction \eqref{e:transformer_prediction}, for the construction from Proposition \ref{p:rkhs_descent_transformer_construction}, is statistically optimal if $\th$ matches $\K$:

\begin{proposition}
    \label{p:matching_h_k_optimality}
    Let $X = [\tx{1}\ldots \tx{n+1}]$, $Y = [\ty{1}\ldots \ty{n+1}]$. Let $\K: \R^d\times \R^d\to\R$ \emph{be a kernel}. Assume that $Y|X$ is drawn from the $\K$ Gaussian Process. Let the attention activation $[\th(U,W)]_{ij} := \K\lrp{U_i,W_j}$, and consider the functional gradient descent construction in Proposition \ref{p:rkhs_descent_transformer_construction}. Then as the number of layers $\ell \to \infty$, the Transformer's prediction for $\ty{n+1}$ at layer $\ell$ \eqref{e:transformer_prediction} approaches the \textbf{Bayes (optimal) estimator} that minimizes the in-context loss \eqref{d:ICL_loss}.
\end{proposition}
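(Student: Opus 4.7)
My plan is to reduce the claim to two ingredients: an explicit conditional-mean formula for the $\K$-Gaussian Process, and convergence of the RKHS gradient-descent iterates $\{f_\ell\}$ from Proposition \ref{p:rkhs_descent_transformer_construction} to the minimum-RKHS-norm interpolant of the demonstrations. First I would identify the Bayes-optimal predictor: because $\K$ is a kernel, $\Kmat(X)$ is PSD and $\Kmat_+(X)=\Kmat(X)$, so the $\K$-Gaussian Process reduces to $Y\mid X\sim\N(0,\Kmat(X))$. The minimum-MSE estimator of $\ty{n+1}$ given $(\tz{1},\ldots,\tz{n},\tx{n+1})$ is therefore the conditional mean, and the Gaussian conditioning formula yields
$$
\widehat{y}^{\text{Bayes}}\;=\;\Kmat_{n+1,1:n}\,\Kmat_{1:n,1:n}^{+}\,Y_{1:n},
$$
where $(\cdot)^{+}$ denotes the Moore--Penrose pseudoinverse (reducing to $(\cdot)^{-1}$ whenever the Gram matrix is non-singular). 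This is the target the Transformer output needs to match.

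Next I would invoke Proposition \ref{p:rkhs_descent_transformer_construction} to identify the Transformer's predicted label at layer $\ell$ with $-\TF_\ell(\tx{n+1};(V,B,C)\mid\tz{1}\ldots\tz{n})=f_\ell(\tx{n+1})$, where $f_\ell$ satisfies the RKHS gradient-descent recursion
$$
f_{\ell+1}\;=\;f_\ell+r_\ell'\sum_{i=1}^n\lrp{\ty{i}-f_\ell(\tx{i})}\K(\cdot,\tx{i}),\qquad f_0=0.
$$
A one-line induction gives $f_\ell=\sum_{i=1}^n\alpha_i^\ell\,\K(\cdot,\tx{i})$, and the coefficient vector $\alpha^\ell\in\R^n$ obeys the Landweber-type recursion
$$
\alpha^{\ell+1}\;=\;(I-r_\ell'\,\Kmat_{1:n,1:n})\,\alpha^\ell+r_\ell'\,Y_{1:n},\qquad \alpha^0=0.
$$

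Finally I would show $\alpha^\ell\to\Kmat_{1:n,1:n}^{+}Y_{1:n}$ by diagonalising $\Kmat_{1:n,1:n}=UDU^\top$ and tracking each eigencoordinate separately. On the range of $\Kmat_{1:n,1:n}$, the error $\epsilon^\ell:=\alpha^\ell-\Kmat_{1:n,1:n}^{+}Y_{1:n}$ satisfies $\epsilon^{\ell+1}=(I-r_\ell'\Kmat_{1:n,1:n})\epsilon^\ell$, which contracts in each eigendirection by the factor $|1-r_\ell'\lambda_i|<1$ provided $r_\ell'\in(0,\,2/\|\Kmat_{1:n,1:n}\|_{\mathrm{op}})$; on the null space of $\Kmat_{1:n,1:n}$ the iterate remains at zero because $\alpha^0=0$ and, under $Y_{1:n}\mid X\sim\N(0,\Kmat_{1:n,1:n})$, $Y_{1:n}$ lies almost surely in $\mathrm{range}(\Kmat_{1:n,1:n})$ so the update never injects a null-space component. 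Evaluating the limit at $\tx{n+1}$ gives $f_\infty(\tx{n+1})=\Kmat_{n+1,1:n}\Kmat_{1:n,1:n}^{+}Y_{1:n}=\widehat{y}^{\text{Bayes}}$, and combined with $\TF_\ell=-f_\ell$ this is the desired conclusion. The only real obstacle is this convergence step in the rank-deficient case, but since $f_0=0$ keeps the iterates inside $\mathrm{range}(\Kmat_{1:n,1:n})$ throughout, the analysis reduces to a straightforward contraction on that subspace; no new ideas are required beyond Proposition \ref{p:rkhs_descent_transformer_construction} and the Gaussian conditioning formula.
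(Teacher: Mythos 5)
Your proposal is correct and follows essentially the same route as the paper: identify the Bayes estimator via the Gaussian conditioning formula, then show the RKHS gradient-descent iterates converge to it via a Landweber/Neumann-series argument. The only stylistic difference is that you track the kernel-expansion coefficients $\alpha^\ell$ while the paper tracks the label residuals $\hat{Y}_\ell$ directly, and your treatment of the rank-deficient case (pseudoinverse plus the observation that $Y_{1:n}$ lies a.s.\ in $\mathrm{range}(\Kmat_{1:n,1:n})$) is actually more careful than the paper's, which tacitly assumes $\hat{\Kmat}$ is invertible and a fixed stepsize.
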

\begin{proof}[Proof of Proposition \ref{p:matching_h_k_optimality}]
    Let $\hat{Y}\in \R^{n}$ denote the vector of $\ty{1}\ldots \ty{n}$. Let $\hat{\Kmat}$ denote the top-left $n\times n$ block of $\Kmat$. Let $\nu\in \R^{n}$ denote the vector given by $\nu_i := \Kmat_{i,n+1}$. i.e. $\Kmat =: \bmat{\hat{\Kmat} & \nu \\ \nu^\top & \Kmat_{(n+1),(n+1)}}$. By the formula for conditional Gaussian, we know that $\ty{n+1}$, conditioned on $\ty{1}\ldots \ty{n}$, has Gaussian distribution with mean $\nu^\top \hat{\Kmat}^{-1} \hat{Y}$. The Bayes estimator of $\ty{n+1}$ is thus exactly this mean, which is equivalent to
    \begin{align*}
        \nu^\top \hat{\Kmat}^{-1} \hat{Y}
        =& \sum_{j,k=1}^n \K\lrp{\tx{n+1}, \tx{j}}\lrb{\hat{\Kmat}^{-1}}_{jk} \ty{k}.
        \numberthis \label{e:f_bayes_kernel}
    \end{align*}
    Consider the construction in Proposition \ref{p:rkhs_descent_transformer_construction}, i.e. $B_\ell = C_\ell = I$, $V_\ell = \bmat{0&0\\0&-r_\ell}$. For simplicity, further assume that $r_\ell = \delta$ for all $\ell$, where $\delta$ is some positive constant satisfying $\delta < \|\hat{\Kmat}\|_{2}$. For $\ell=0\ldots k$, let $\ty{i}_\ell := \lrb{Z_\ell}_{(d+1),i}$, and let $\hat{Y}_\ell := \lrb{\ty{1}_\ell\ldots \ty{n+1}_\ell}$. From \eqref{e:dynamics_Z}, we verify that under the above choice of Transformer weights and ,
    \begin{align*}
        & \ty{i}_{\ell+1} = \ty{i}_\ell - \delta \sum_{j=1}^n \K\lrp{\tx{i},\tx{j}} \ty{j}_{\ell}
        \qquad \Leftrightarrow \qquad 
        \hat{Y}_{\ell+1} = \lrp{I - \delta \hat{\Kmat}} \hat{Y}_{\ell} = \lrp{I - \delta \hat{\Kmat}}^\ell \hat{Y}.
    \end{align*}
    Again by \eqref{e:dynamics_Z}, we verify that $\ty{n+1}_{\ell+1} 
    = \ty{n+1}_{\ell} - \delta \sum_{j=1}^n \K\lrp{\tx{n+1},\tx{j}} \ty{j}_{\ell}$. Rearranging terms gives $\ty{n+1}_{\ell+1} = \ty{n+1}_{\ell} - \delta \nu^\top \hat{Y}_{\ell} = - \nu^\top \sum_{k=0}^\ell \hat{Y}_{k} = -\nu^\top \sum_{k=0}^\ell \lrp{I - \delta \hat{\Kmat}}^k \hat{Y}_{k}$.
    By Taylor expansion, $\hat{\Kmat}^{-1} = \delta \sum_{\ell=0}^\infty \lrp{I-\delta \hat{\Kmat}}^\ell$. Thus as $\ell \to \infty$, $\ty{n+1}_{\ell+1} \to \nu^\top \hat{\Kmat}^{-1} \hat{Y}$, which is the optimal estimator of $\ty{n+1}$. 
\end{proof}

We note a few caveats of Proposition \ref{p:matching_h_k_optimality}: 
\begin{enumerate}
    \item The result guarantees optimality of $\th = \K$ in the limit of $\ell \to \infty$. For finite $\ell$, there may exist a better choice of $\th$ that implements a more iteration-efficient algorithm. For example, see discussion of Figure \ref{f:h_match_k_against_layer_b}.
    \item The construction in Proposition \ref{p:rkhs_descent_transformer_construction} sets the top-left $d\times d$ block of $V_\ell$ to $0$ (i.e. $A_\ell=0$). In practice, as see in Theorem \ref{t:informal_master_full} and Figure \ref{f:theorem_full}, $A_\ell$ is often a non-zero multiple of $I_{d\times d}$. For this reason, a choice of $\th$ that differs from $\K$ may nonetheless recover the Bayes estimator.
\end{enumerate}

\subsection{Experiments for Proposition \ref{p:matching_h_k_optimality}}
\label{ss:experiments_for_optimality}
To experimentally verify Proposition \ref{p:matching_h_k_optimality}, we compare the performance of different choices of $\th$ against different choices of generating kernel $\K$. We present our findings in Figures \ref{f:h_match_k} and \ref{f:h_match_k_against_layer}.

We consider three types of $\K$ Gaussian Processes:
\begin{align*}
    \K^{linear}(u,w) := \lin{u,w} \qquad \K^{relu}(u,w) := \relu(\lin{u,w}) \qquad \K^{exp}(u,w) := \exp\lrp{\lin{x,y}}.
    \numberthis \label{e:3_kernel_choices}
\end{align*}

For each choice of $\K$ above, we train Transformers with four different types of non-linear attention module $\th$ below: 
\begin{alignat}{1}
    & [\th^{linear}(U,W)]_{ij}:= [U^\top W]_{ij}, \quad [\th^{relu}(U,W)]_{ij}:= \relu([U^\top W]_{ij}), \quad [\th^{exp}(U,W)]_{ij}:= \exp([U^\top W]_{ij}),\\
    &[\th^{softmax}\lrp{U,W}]_{ij} := \twocase{\frac{\exp\lrp{\lrb{U^\top W}_{ij} }}{\sum_{k=1}^n \exp\lrp{ \lrb{U^\top W}_{kj}}}}{i\neq n+1}{0}{i=n+1}.
    \numberthis \label{e:4_kernel_choices}
\end{alignat}   

\textbf{Note that Proposition \ref{p:rkhs_descent_transformer_construction} does not apply for $\th^{\mathrm{relu}}$ and $\th^{\mathrm{softmax}}$ as they are not kernels. We nonetheless include them in the experiments as they are the architectures most commonly used in practice.}  

The covariates $\tx{i}$ are drawn iid from the unit sphere, and the labels $\ty{i}$ are drawn from one of the three $\K$ Gaussian Processes. In all plots, the loss values are taken after convergence of training loss. Full experiment details are found in Appendix \ref{ss:common_experiment_details}.

In all our Figures, we will show the loss of the \textbf{Bayes Estimator} $f^{bayes}$ as a baseline. This represents the information-theoretically optimal loss. Recall from \eqref{d:k_gaussian_process} that $Y|X \sim \N(0,\Kmat_+(X))$. Let $\Kmat$ and $\Kmat_+$ be as defined in Definition \ref{d:k_gaussian_process}. Let $\lrbb{\hat{\Kmat}, \nu, \mu}\in \lrbb{\R^{d\times d}, \R^d, \R}$ be \underline{defined} as $\bmat{\hat{\Kmat} & \nu \\ \nu^\top & \mu}:= \Kmat_+$. Let $\hat{Y}\in \R^{n}$ denote the vector of $\ty{1}\ldots \ty{n}$. Then the Bayes Estimator is defined as
\begin{align*}
    f^{bayes}(\tx{n+1}) := \nu^\top \hat{\Kmat}^{-1} \hat{Y}.
    \numberthis \label{e:f_bayes}
\end{align*}
When $\K$ is a PSD kernel, $\Kmat_+:= \Kmat$, and \eqref{e:f_bayes} is identical to \eqref{e:f_bayes_kernel}. More generally, when $\K$ is not PSD, \eqref{e:f_bayes} and \eqref{e:f_bayes_kernel} are \textbf{not equal}. Nonetheless, $f^{bayes}$ from \eqref{e:f_bayes} is a well-defined estimator.

Figure \ref{f:h_match_k} plots the in-context loss of a 3-layer Transformer against \textbf{number of demonstrations} $n \in \lrbb{2,4,6,8,10,12}$, for different combinations of label-generating kernel $\K$ and attention module $\th$ (see \eqref{e:4_kernel_choices}). Figure \ref{f:h_match_k_against_layer} plots the in-context loss against \textbf{number of layers} $k \in \lrbb{1,2,3,4,5,6,7,8}$, for $n\in\lrbb{14,6}$. We show the losses of different combinations of $\K$ and $\th$. We summarize key observations of interest below:

\begin{enumerate}
    \item \textbf{\underline{Ignoring $\th^{softmax}$}, the best prediction error is achieved when the attention activation $\th$ matches distribution $\K$.} From Figures \ref{f:h_match_k_a}, \ref{f:h_match_k_b}, \ref{f:h_match_k_against_layer_a}, \ref{f:h_match_k_against_layer_c}, \ref{f:h_match_k_against_layer_d}: the best accuracy is obtained when the attention activation $\th$ matches $\K$, as suggested by Proposition \ref{p:matching_h_k_optimality}.

    \item \textbf{The \textcolor{black}{$\th^{softmax}$} attention is most accurate for \textcolor{black}{$\K^{exp}$} labels when number of layers is small and $n$ is large.}
    From Figure \ref{f:h_match_k_c}: with $3$ layers, \textcolor{black}{$\th^{softmax}$} is more accurate than \textcolor{black}{$\th^{exp}$} on \textcolor{black}{$\K^{exp}$} data for $n\in\lrbb{6,8,10,12,14}$. From Figure \ref{f:h_match_k_against_layer_b}, when $n=14$, the gap between \textcolor{black}{$\th^{softmax}$} and \textcolor{black}{$\th^{exp}$} (for \textcolor{black}{$\K^{exp}$} data) becomes very small for number of layers $\geq 6$. From Figure \ref{f:h_match_k_against_layer_d}, when $n=6$, \textcolor{black}{$\th^{exp}$} has the highest accuracy for \textcolor{black}{$\K^{exp}$} data when number of layers $\geq 5$. 
    \begin{enumerate}
        \item We conjecture that $\th^{softmax}$ implements an algorithm that is \emph{more iteration efficient} but \emph{less statistically efficient} than functional gradient descent. As each layer implements a step of some algorithm, $\th^{softmax}$ performs well with few layers (steps), and performs relatively poorly when number of samples is small.
        \item The relative performance of $\th^{softmax}$ and $\th^{exp}$ is consistent with Proposition \ref{p:matching_h_k_optimality}, which predicts that the $\th^{\exp}$ Transformer approaches Bayes-optimal prediction loss \textbf{as number of layers increases.} We also note that $\th^{softmax}$ is closely related to $\K^{exp}$, as discussed at the end of Example \ref{ss:construction_softmax_rbf}.
    \end{enumerate}
    \item For each Transformer, the \textbf{parameters learned are as predicted in Theorem \ref{t:informal_master_full}}. See experiments in Section \ref{ss:experiments_for_full} for details. 
    \end{enumerate}

    Finally, we also note that the gap between $\th^{relu}$ and the Bayes estimator in Figure \ref{f:h_match_k_against_layer_a} is quite significant, and does not seem to decrease with number of layers. This is likely because the Bayes estimator $f^{bayes}$ in \eqref{e:f_bayes} uses $\hat{\Kmat}$ which in turn requires flipping eigenvalues on $\Kmat$. Such an operation may not be easily implementable by single head Transformers.

\begin{figure}[H]
\centering
\begin{subfigure}{0.32\textwidth}
\centering
\includegraphics[width=\textwidth]{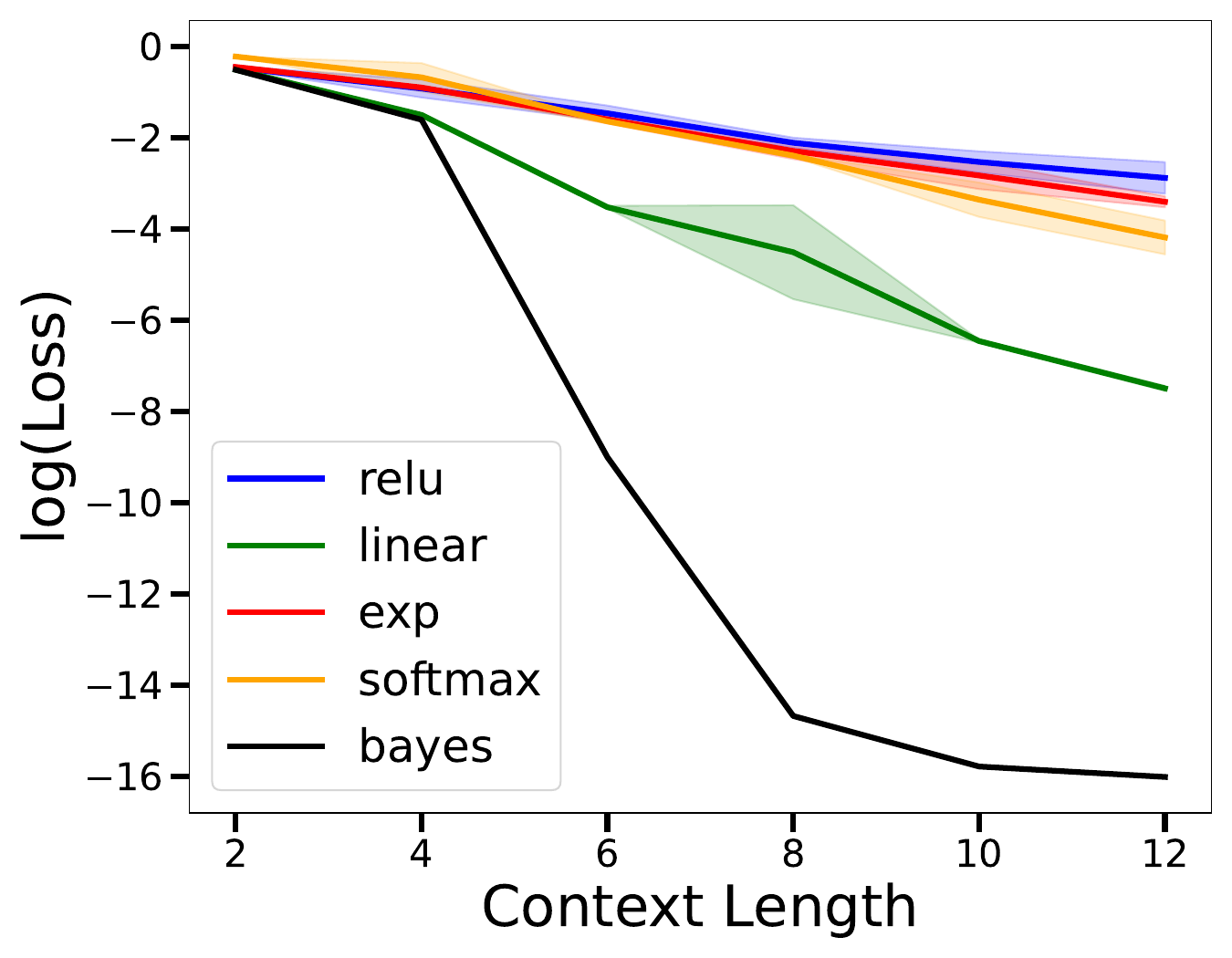} 
\caption{$\K^{linear}$}
\label{f:h_match_k_a}
\end{subfigure}\hfill
\begin{subfigure}{0.32\textwidth}
\centering
\includegraphics[width=\textwidth]{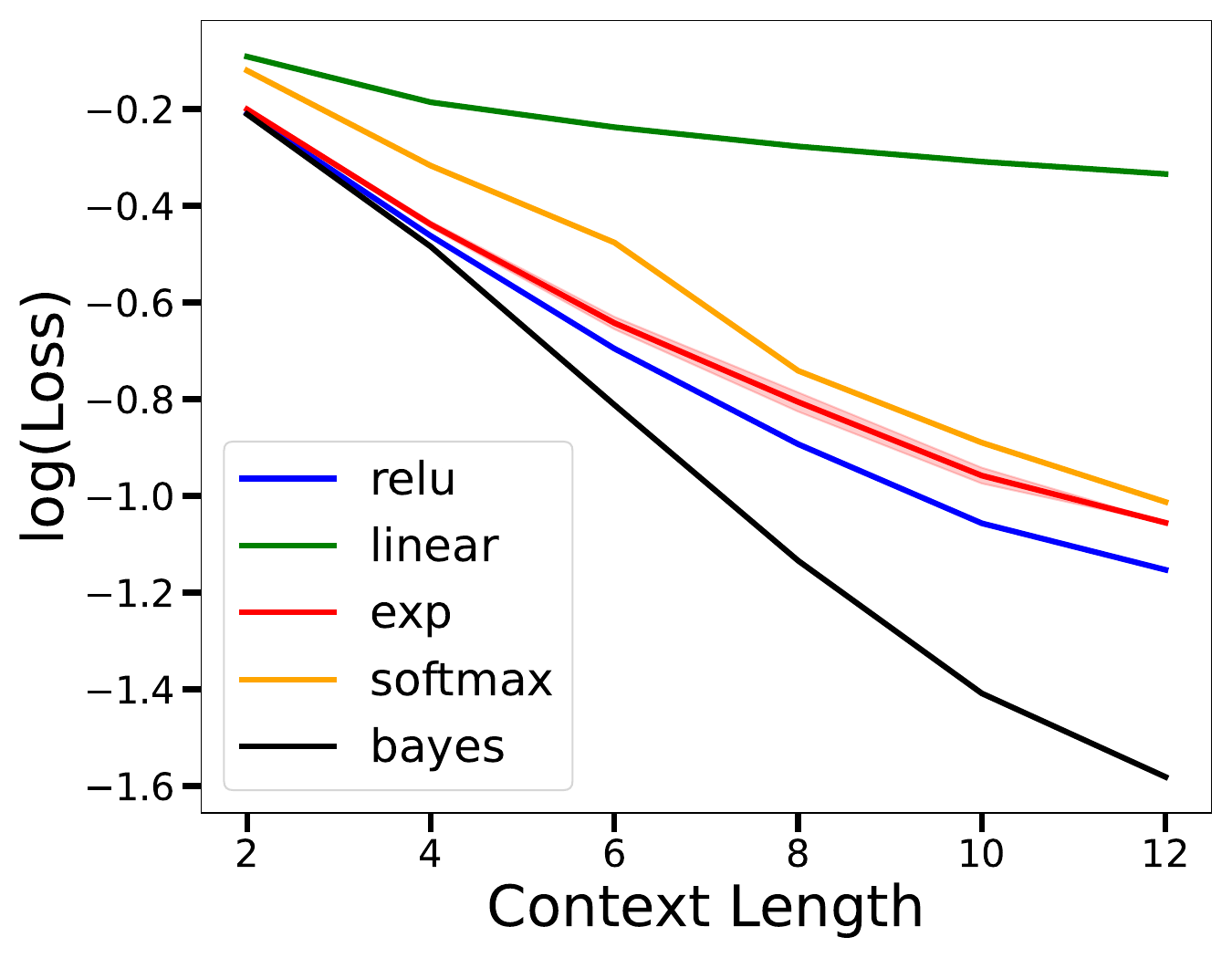}  
\caption{$\K^{relu}$}
\label{f:h_match_k_b}
\end{subfigure}
\begin{subfigure}{0.32\textwidth}
\centering
\includegraphics[width=\textwidth]{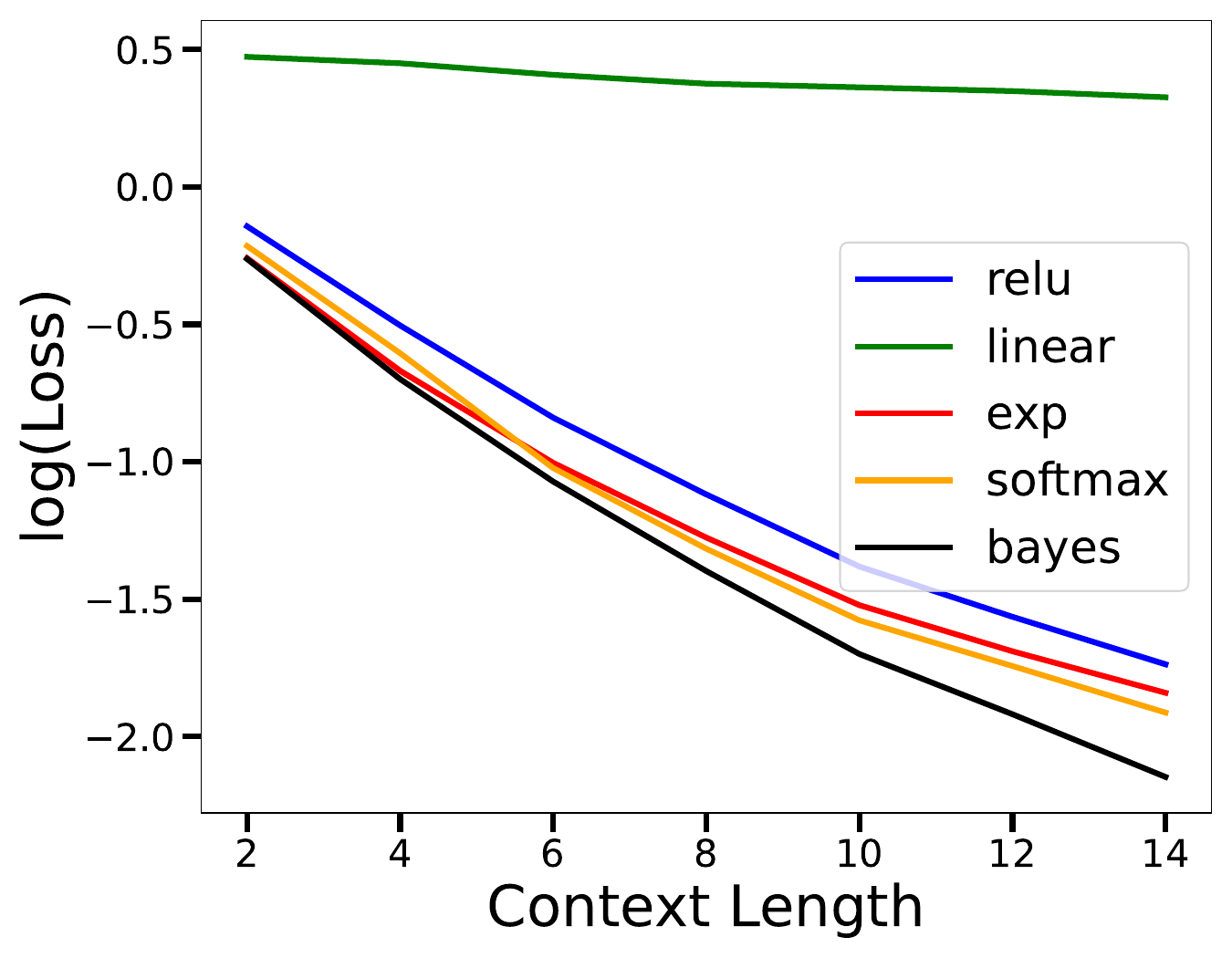}  
\caption{$\K^{exp}$}
\label{f:h_match_k_c}
\end{subfigure}
\caption{Plot of log(test ICL loss) against number of in-context demonstrations. The labels are generated using a $\K$ Gaussian Process (Definition \ref{d:k_gaussian_process}) Each sub-figure corresponds to one of three choices of $\K$, defined in \eqref{e:3_kernel_choices}. Each sub-figure contains 4 plots corresponding to 4 choices of $\th$, as defined in \eqref{e:4_kernel_choices}. Black line denotes Bayes Loss.}
\label{f:h_match_k}
\end{figure}

\captionsetup[subfigure]{oneside,margin={1cm,0cm}}
\begin{figure}[H]
\centering
\begin{subfigure}{0.4\textwidth}
\centering
\includegraphics[width=\textwidth]{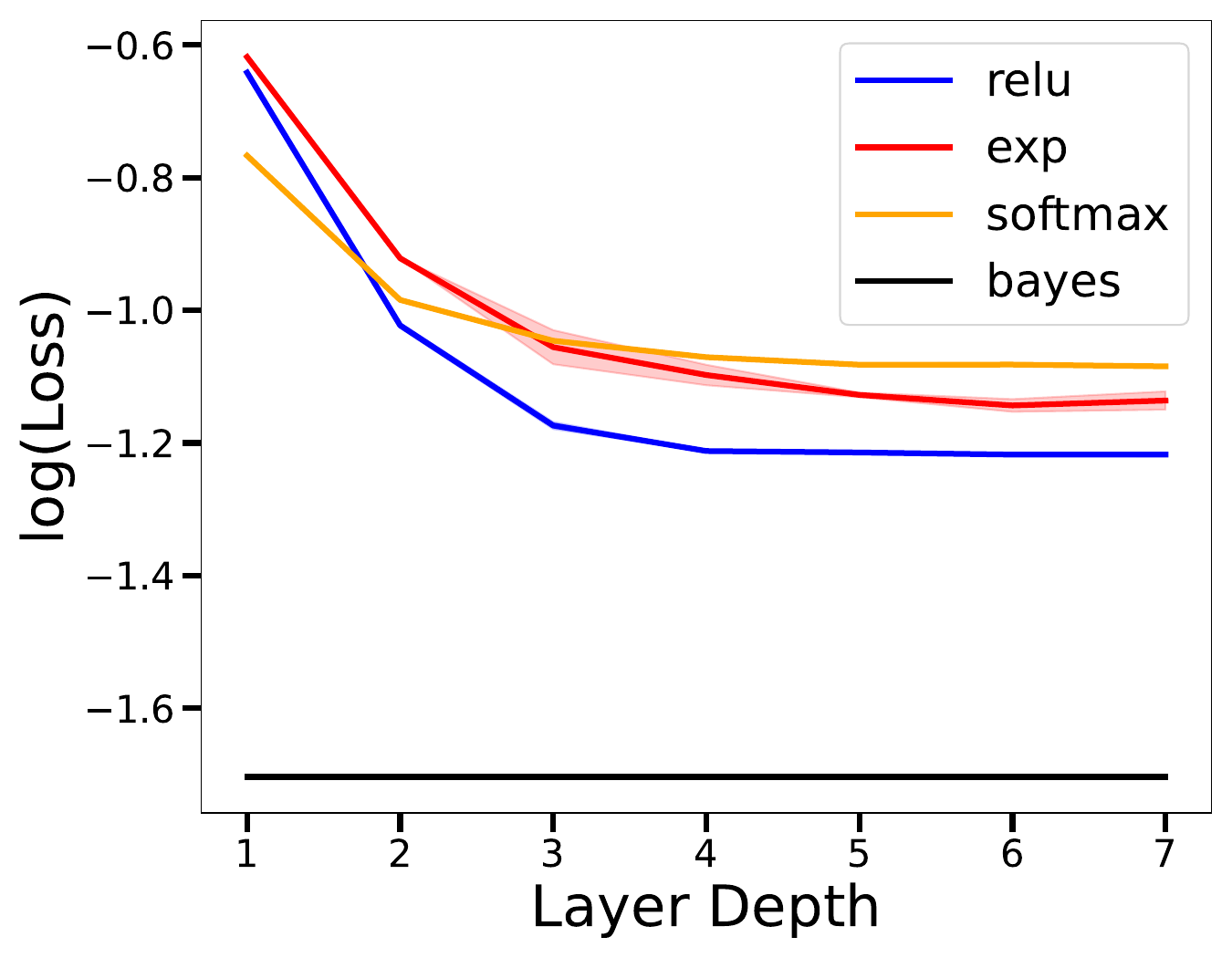} 
\caption{$\K^{relu}$, $n=14$}
\label{f:h_match_k_against_layer_a}
\end{subfigure}
\begin{subfigure}{0.4\textwidth}
\centering
\includegraphics[width=\textwidth]{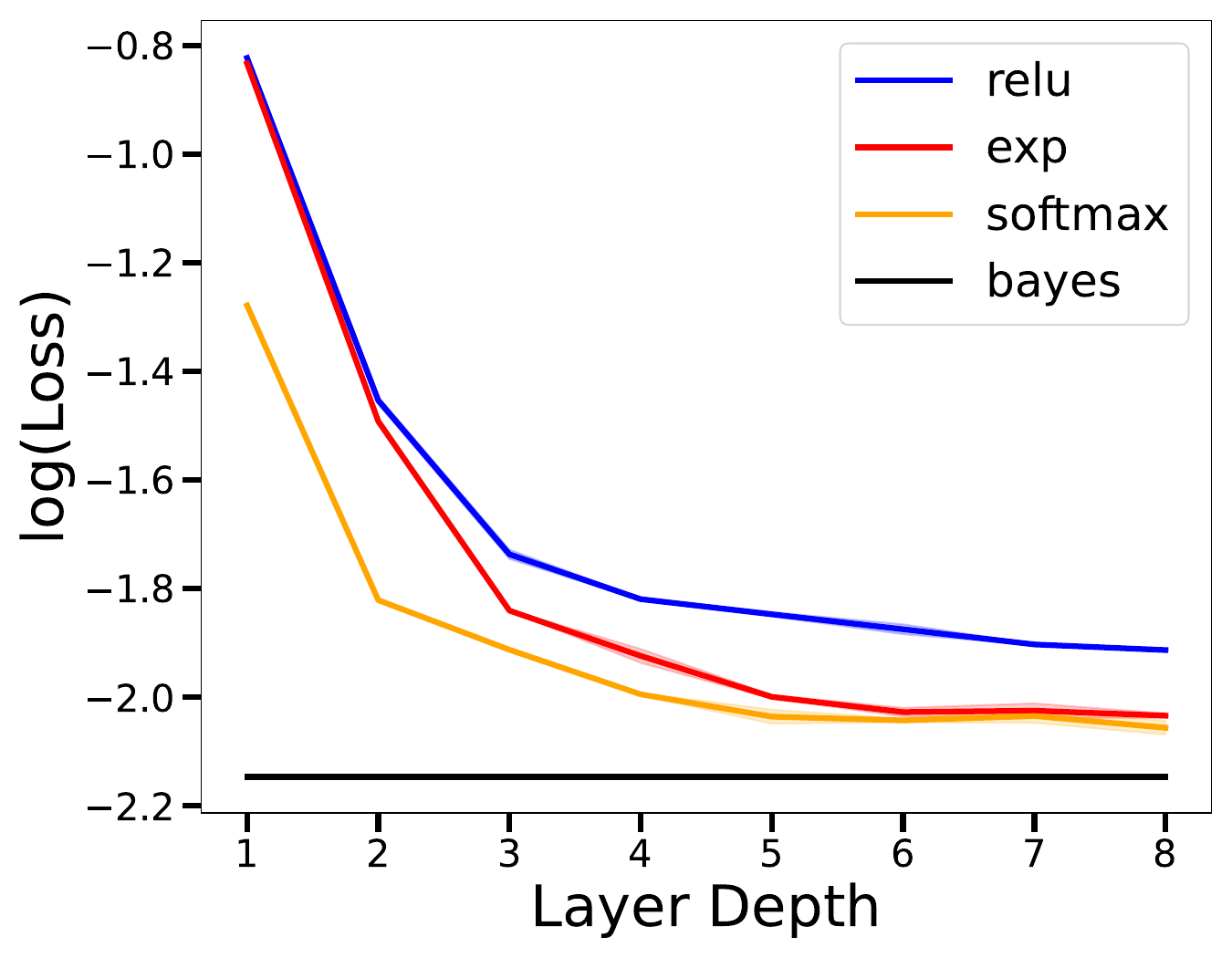}  
\caption{$\K^{exp}$, $n=14$}
\label{f:h_match_k_against_layer_b}
\end{subfigure}\\
\begin{subfigure}{0.4\textwidth}
\centering
\includegraphics[width=\textwidth]{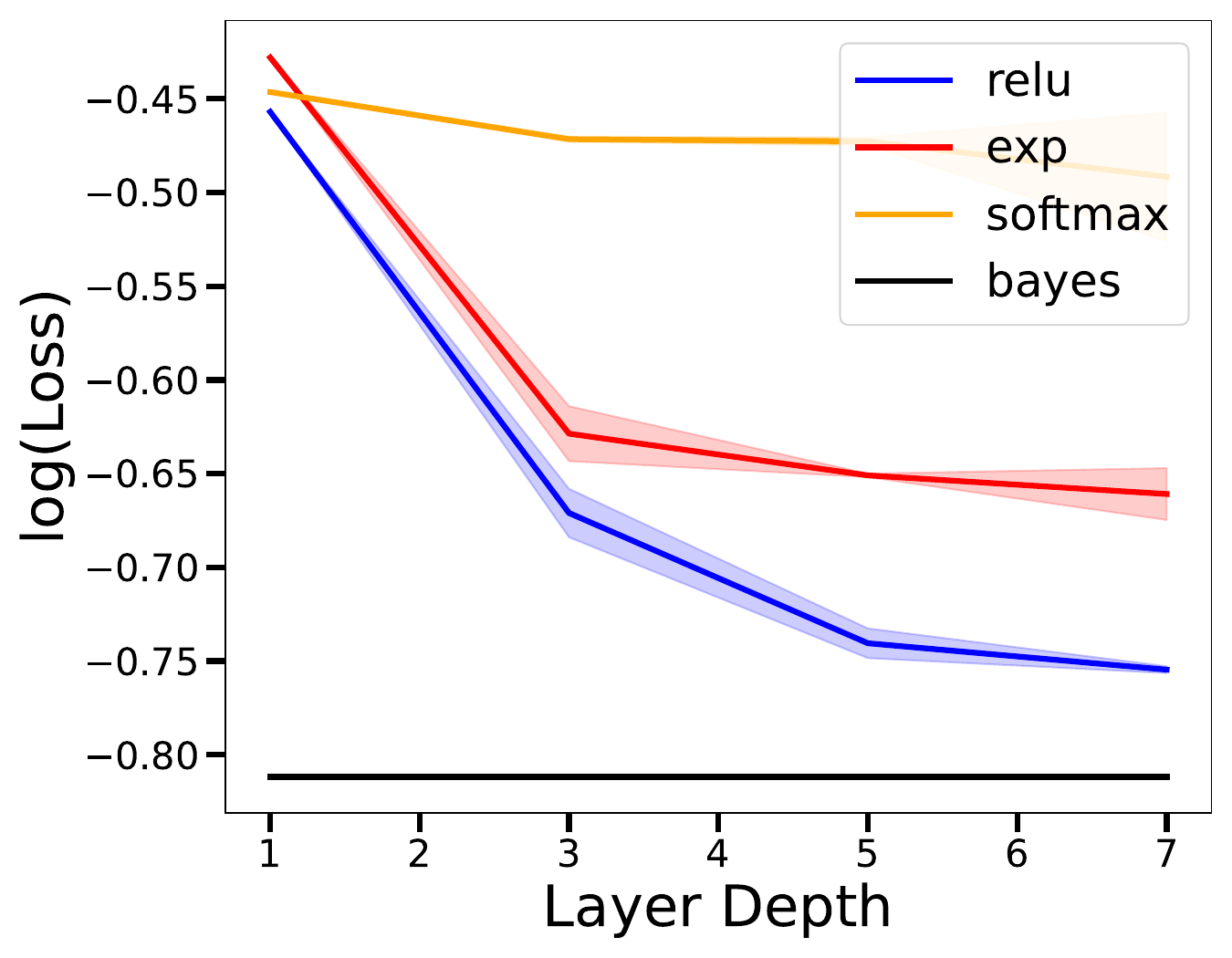} 
\caption{$\K^{relu}$, $n=6$}
\label{f:h_match_k_against_layer_c}
\end{subfigure}
\begin{subfigure}{0.4\textwidth}
\centering
\includegraphics[width=\textwidth]{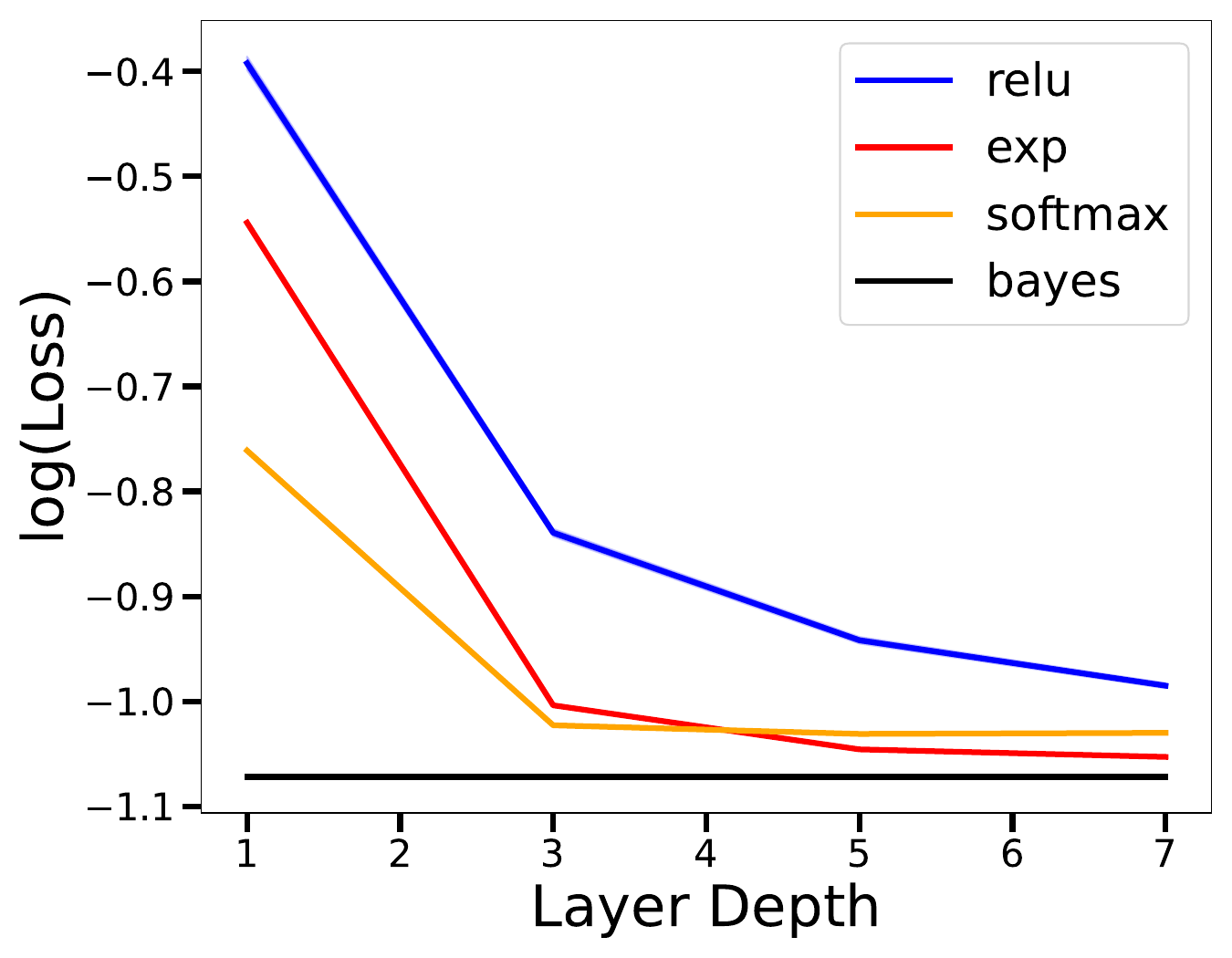}  
\caption{$\K^{exp}$, $n=6$}
\label{f:h_match_k_against_layer_d}
\end{subfigure}

\caption{Plot of log(test ICL loss) against number of layers. The labels are generated using a $\K$ Gaussian Process (Definition \ref{d:k_gaussian_process}), for $\K^{relu}$ and $\K^{exp}$ as defined in \eqref{e:3_kernel_choices}. Each sub-figure contains 3 plots corresponding to three choices of $\th$, as defined in \eqref{e:4_kernel_choices}. The two plots in the top row have $n=14$ demonstrations. The two plots in the bottom row have $n=6$ demonstrations. Black line denotes Bayes Loss.}
\label{f:h_match_k_against_layer}
\end{figure}

\subsection{Composing multiple attention heads with different activations}
\label{ss:multihead}
In this section, we show that \textbf{multi-head} Transformers with \textbf{different activations per-head} can attain much greater representation power. A powerful aspect of RKHS theory is the ability to form complex kernels by composing simple ones via addition and multiplication. Using this idea, we show, both theoretically (Proposition \ref{c:multihead}) and empirically (Figure \ref{f:multihead}), that a \textbf{multi-head Transformer can attain optimal prediction loss for a large class of $\K$ Gaussian Processes which are obtained from kernel composition}.

Formally, we will consider Transformers with \emph{multi-head} attention, defined by the following forward pass:
\begin{align*}
    \numberthis \label{e:dynamics_Z_multihead}
    Z_{\ell+1} = Z_\ell + \sum_{s=1}^H V^s_\ell Z_\ell M \th^s \lrp{B^s_\ell X_\ell, C^s_\ell X_\ell }.
\end{align*}
$H$ denotes the number of heads in a layer, and $\lrbb{V^s_\ell, B^s_\ell,C^s_\ell}_{s=1...H}$ denote the $\lrbb{\text{value, key, query}}$ matrices at layer $\ell$ for head $s$/ $\th^{s}$ denotes the activation for head $s$, \textbf{which could be different for each head}. The difference between \eqref{e:dynamics_Z_multihead} and \eqref{e:dynamics_Z}, is the additional summation over multiple heads $\sum_{s=1}^H$. Identical to to \eqref{e:transformer_prediction}, we let $\TF_\ell$ denote the Transformer's prediction for $-\ty{n+1}$ at layer $\ell$, given $\tx{n+1} = x$, conditioned on $\tz{1}...\tz{n}$ as:
\begin{align*}
    \numberthis \label{e:transformer_prediction_multihead}
    \TF_\ell(\textcolor{blue}{x}; (V,B,C)\vert \tz{1}\ldots \tz{n}):=\lrb{Z_{\ell}}_{(d+1),(n+1)},
\end{align*}
where $Z_i$ evolves as \eqref{e:dynamics_Z_multihead}, initialized at
{$Z_0 = 
{\lrb{\begin{smallmatrix}
\tx{1} & \tx{2} & \cdots & \tx{n} &\color{blue}{x} \\ 
\ty{1} & \ty{2} & \cdots &\ty{n}& 0
\end{smallmatrix}}}$}. We now present Proposition \ref{c:multihead} which shows that a single multi-head Transformer can perform (optimal) \textbf{functional gradient descent} with respect to a large class of RKHS metrics. Its proof is very similar to Propositions \ref{p:rkhs_descent_transformer_construction} and \ref{p:matching_h_k_optimality} (see Appendix \ref{ss:proof:c:multihead}).

\begin{proposition}
    \label{c:multihead}
    Let $\lrbb{\tz{i}}_{i=1...n}$ denote the in-context examples and let $L(f)$ be the empirical loss functional as defined in Proposition \ref{p:rkhs_descent_transformer_construction}. For $s=1...H$, let $\K^{s}$ denote a PSD kernel function. Let $\K^{\diamond}$ be a composite kernel, defined as $\K^{\diamond}(u,v) := \sum_{s=1}^H \K^s\lrp{G^s u, G^sv}$, where $G^s\in \R^{d\times d}$ are subject to the constraint that $\K^{\diamond}$ must be PSD (but are otherwise arbitrary). Let $f_\ell$ denote the functional gradient descent \eqref{d:functional_gradient_descent} of $L(f)$, wrt the RKHS metric induced by $\K^{\diamond}$.
    \begin{enumerate}[label=(\Alph*)]
        \item \textbf{{[}Generalization of Proposition \ref{p:rkhs_descent_transformer_construction}{]}} Consider the multi-head Transformer with $H$ heads, where the $s^{th}$ head has activation defined as $\lrb{\th^s\lrp{U,V}}_{ij}:= \K^s\lrp{U^{(i)}, W^{(j)}}$. Let the Transformer's parameters be $V^s_\ell = \begin{bmatrix}
        0 & 0 \\ 
        0 & -r^s_\ell
        \end{bmatrix}$, $B^s_\ell = G^s$, $C^s_\ell = G^s$. Then there exist scalars $\{r^s_\ell\}_{s=1...H, \ell=0...k}$ such that the following holds: For any $x:=\tx{n+1}$, the Transformer's prediction for $\ty{n+1}$ at each layer $\ell$ matches the prediction of the functional gradient sequence $f_\ell$ \eqref{e:transformer_prediction_multihead}, i.e. for all $\ell=0\ldots k$,
        \begin{align*}
            \TF_\ell(x; (V,B,C)\vert \tz{1}\ldots \tz{n}) = - f_\ell(x).
            \numberthis \label{e:tf_fi_equivalence_multihead}
        \end{align*}
        \item \textbf{{[}Generalization of Proposition \ref{p:matching_h_k_optimality}{]}} If we additionally assume that $Y|X$ is drawn from the $\K^{\diamond}$ Gaussian Process, then as the number of layers $\ell\to\infty$, the Transformer's prediction for $\ty{n+1}$ at layer $\ell$ \eqref{e:tf_fi_equivalence_multihead} approaches the \textbf{Bayes (optimal) estimator} that minimizes the in-context loss \eqref{d:ICL_loss}.
    \end{enumerate}
\end{proposition}

Remarkably, \textbf{a single} multi-head Transformer can give the near-optimal predictions over a large class of data distributions, \textbf{even without \emph{a priori} knowledge of the data distribution}. 

Figure \ref{f:multihead} provides empirical verification of Proposition \ref{c:multihead}: We plot the loss against number of layers for three kinds of Transformers: 1-head with $\th^{linear}$ activation, 1-head with $\th^{exp}$ activation, 2-head with $\th^{linear}$ activation on the first head and $\th^{linear}$ on the second head. Data labels are drawn from a $\K^{\diamond}$ Gaussian Process, where $\K^{\diamond}(u,v) := \alpha \K^{linear}(G_1 u, G_1 v) + (1-\alpha) \K^{exp}(G_2 u, G_2 v)$. We observe the following
\begin{enumerate}
    \item In Figure \ref{f:multihead_a} and \ref{f:multihead_a}, we see that the 2-head Transformer \textbf{can perform optimally on both $\K^{linear}$ and $\K^{\exp}$ data}. Specifically: In Figure \ref{f:multihead_a}, $\K^{\diamond}=\K^{linear}$ ($\alpha=1$, $G_1=G_2=I$). The 2-head Transformer performs as well as the $\th^{linear}$ Transformer. In Figure \ref{f:multihead_b}, $\K^{\diamond}=\K^{exp}$ ($\alpha=0$, $G_1=G_2=I$). The 2-head Transformer performs as well as the $\th^{exp}$ Transformer.
    \item In Figure \ref{f:multihead_c}, $\K^{\diamond}(u,v) := \frac{1}{2} (u_1 v_1 + u_2 v_2) + \frac{1}{2} \exp\lrp{\frac{1}{2} (u_3 v_3 + u_4 v_4 + u_5 v_5)}$, corresponding to choosing $\alpha = 1/2$, $G_1 = \diag(\lrb{1,1,0,0,0})$ and $G_2 = \diag(\lrb{0,0,1,1,1})$. For this choice of $\K^{\diamond}$, the 2-head Transformer \textbf{outperforms both single-head Transformers}.
\end{enumerate}
Note: the Transformer parameters \textbf{re-trained for each dataset}, so attention weights for \ref{f:multihead_a}, \ref{f:multihead_b} and \ref{f:multihead_c} are \textbf{different}.


\begin{figure}[H]
\centering
\begin{subfigure}{0.32\textwidth}
\centering
\includegraphics[width=\textwidth]{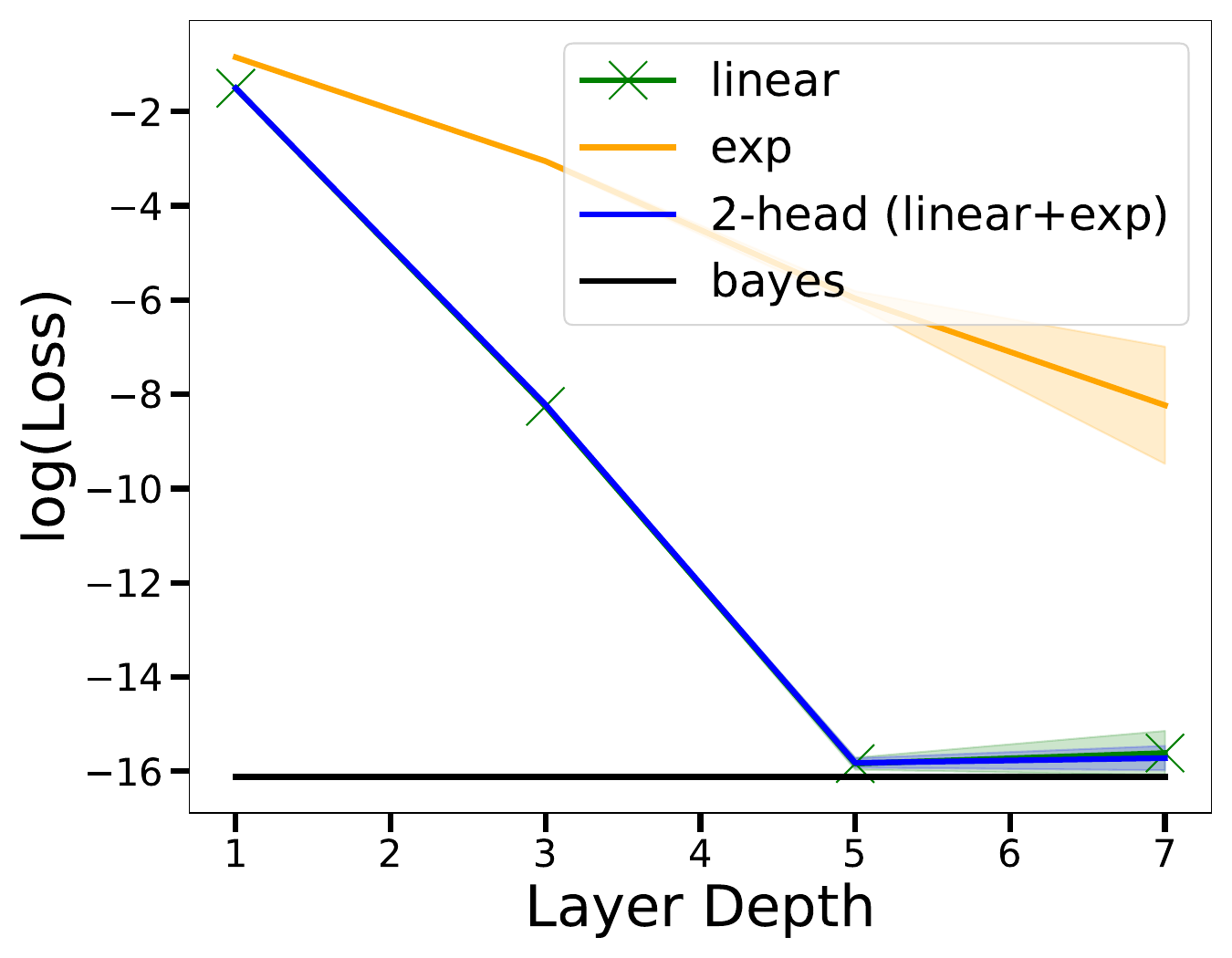} 
\caption{$\K^{linear}$}
\label{f:multihead_a}
\end{subfigure}\hfill
\begin{subfigure}{0.32\textwidth}
\centering
\includegraphics[width=\textwidth]{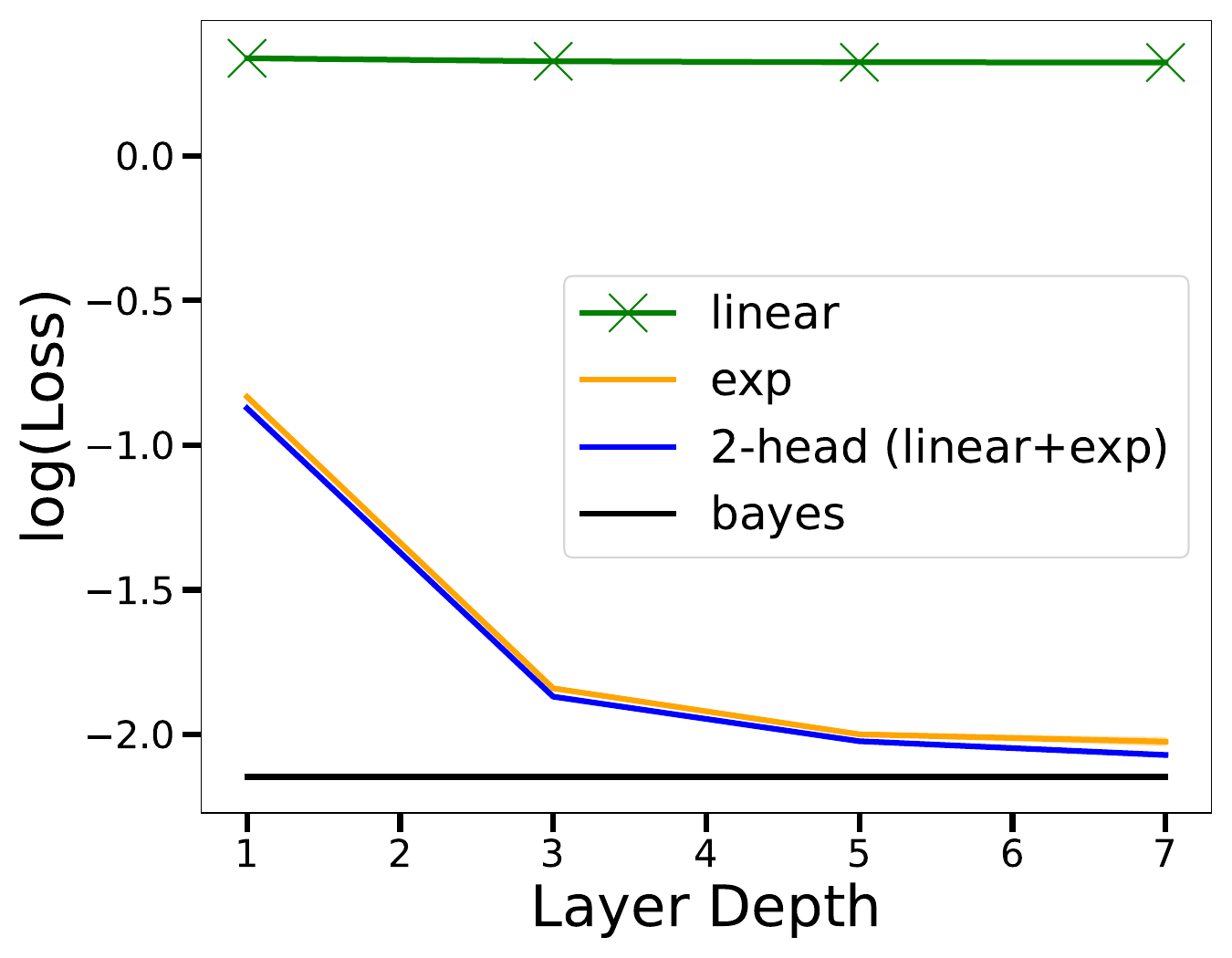}  
\caption{$\K^{exp}$}
\label{f:multihead_b}
\end{subfigure}
\begin{subfigure}{0.32\textwidth}
\centering
\includegraphics[width=\textwidth]{updated_figures/comb_datadist_loss_vs_layer_comb.pdf}  
\caption{$\K^{\diamond}$}
\label{f:multihead_c}
\end{subfigure}
\caption{Plot of log(test ICL loss) against number of layers. Each sub-figure samples data from a different distribution ($\K^{\diamond}(u,v) := \alpha \K^{linear}(G_1 u, G_1 v) + (1-\alpha) \K^{exp}(G_2 u, G_2 v)$). We compare the performance of three kinds of Transformers. The labels are generated using a $\K^{\diamond}$ Gaussian Process. Context length $n=14$.}
\label{f:multihead}
\end{figure}

\section{Optimization Landscape Results}
\label{s:landscape}
In the previous section, we saw that Transformers \emph{can} implement functional gradient descent in its forward pass, and that this implementation can be nearly statistical optimal. However, \emph{does the Transformer learn to implement functional gradient descent when training converges?} To answer this question, we analyze the optimization landscape of the in-context loss, for the Transformer defined in \eqref{e:dynamics_Z}. 

\textbf{\underline{In Theorem \ref{t:informal_master_sparse}}}, we show that the functional gradient descent construction of Proposition \ref{p:rkhs_descent_transformer_construction} is a stationary point of the in-context loss when we constrain the top left block of the Value matrix to $0$. 

\textbf{\underline{In Theorem \ref{t:informal_master_full}}}, we characterize stationary points of the in-context loss for general Value matrices. The stationary point implements a sophisticated algorithm that interleaves functional gradient descent steps with transformations of the covariates. 

\textbf{{We provide experimental verification}} of both Theorem \ref{t:informal_master_sparse} and \ref{t:informal_master_full} in Sections \ref{ss:experiments_for_sparse} and \ref{ss:experiments_for_full}. We present key assumptions in Sections \ref{ss:distributional_assumptions} and \ref{ss:architectural_assumptions}. We note that both Theorem \ref{t:informal_master_sparse} and \ref{t:informal_master_full} apply to softmax and ReLU Transformers. 

\subsection{Distributional Assumptions}
\label{ss:distributional_assumptions}
We will first state two assumptions on the distribution of covariates $X$ and labels $Y|X$. We motivate these assumptions with Examples \ref{ex:distribution_x_iid_rotational_invariant}-\ref{ex:2_layer_relu}.
\label{ss:distributional_assumptions}. 

Recall the setup from Section \ref{s:define_icl}. The input is $Z_0\in \R^{(d+1) \times (n+1)}$. Let $X = \begin{bmatrix}
\tx{1} & \tx{2} & \cdots & \tx{n} &\tx{n+1}
\end{bmatrix} \in \R^{d\times (n+1)}$ denote the first $d$ rows of $Z_0$. Let $Y=\begin{bmatrix}
\ty{1} & \ty{2} & \cdots & \ty{n} &\ty{n+1}
\end{bmatrix}\in \R^{1\times (n+1)}$ denote the row vector of labels $\ty{i}$'s. Note that the last row of $Z_0$ has $\ty{n+1}$ replaced by $0$, and thus differs from $Y$. We will make an assumption each on the distributions of $X$ and $Y$ respectively:
\begin{assumption}[$X$ distribution assumption]
    \label{ass:x_distribution}
    Let $\P_X$ denote the distribution of $X$, i.e. $\P_X$ is the joint distribution over $\tx{1}\ldots \tx{n+1}$. Furthermore, assume that there is a symmetric invertible matrix $\Sigma \in \R^{d\times d}$ such that for any orthogonal matrix $U$, $\Sigma^{1/2} U \Sigma^{-1/2} X \overset{d}{=} X$.
\end{assumption}

In Examples \ref{ex:distribution_x_iid_rotational_invariant} and \ref{ex:distribution_x_mixture} below, we provide two common distributions for $\tx{i}$ which satisfy Assumption \ref{ass:x_distribution}.

\begin{tcolorbox}[enhanced,title=,
                    frame hidden,
                    colback=gray!5,
                    breakable,
                    left=1pt,
                    right=1pt,
                    top=1pt,
                    bottom=1pt,
                ]
\begin{example}[$\tx{i}$ drawn from rotationally invariant distributions]
    \label{ex:distribution_x_iid_rotational_invariant}
    Assumption \ref{ass:x_distribution} is satisfied when $\tx{i} \overset{iid}{\sim} \N(0,\Sigma)$, or when $\tx{i} = \Sigma^{1/2} \xi^{(i)}$, for $\xi$ drawn uniformly from the unit sphere. This distribution of $\tx{i}$ has been considered in \cite{garg2022can, akyurek2022learning,von2022transformers,ahn2023transformers,zhang2023trained,mahankali2023one}.
\end{example}
\end{tcolorbox}

\begin{tcolorbox}[enhanced,title=,
                    frame hidden,
                    colback=gray!5,
                    breakable,
                    left=1pt,
                    right=1pt,
                    top=1pt,
                    bottom=1pt,
                ]
\begin{example}[$\tx{i}$ drawn from Gaussian Mixture Models]
\label{ex:distribution_x_mixture}
    More generally, Assumption \ref{ass:x_distribution} can be satisfied even when $\tx{i}$ are not iid. Let $\mu \sim \N(0,I)$, and let $\tx{i} = \mu + \xi^{(i)}$, where $\xi^{(i)}\overset{iid}{\sim} \N(0,I)$. This example can be further generalized to contain two or more cluster means $\mu_1, \mu_2$ sampled independently (i.e. mixture of Gaussians).
\end{example}
\end{tcolorbox}

\begin{assumption}[$Y|X$ distribution assumption]
    \label{ass:y_distribution}
    Conditional on $X=\lrb{\tx{1}\ldots \tx{n+1}}$, $Y = \lrb{\ty{1}\ldots \ty{n+1}} \in \R^{(n+1)}$ has covariance matrix $\E_{Y|X}\lrb{Y^\top Y}=: \Kmat(X)$, where $\Kmat(X): \R^{d\times (n+1)} \to \R^{(n+1)\times(n+1)}$. Assume that for all orthogonal matrix $U\in \R^{d\times d}$, $\Kmat(\Sigma^{1/2} U \Sigma^{-1/2}X) = \Kmat(X)$, where $\Sigma$ is the same matrix from Assumption \ref{ass:x_distribution}.
\end{assumption}

In Examples \ref{ex:linear_y_example}, \ref{ex:gaussian_process_kernel_examples}, \ref{ex:2_layer_relu} below, we will discuss a few common label distributions which satisfy Assumptions \ref{ass:y_distribution}. \textbf{Example \ref{ex:gaussian_process_kernel_examples} is of particular interest}, as it is quite general, and is the setting for all the experiments presented in Figures \ref{f:h_match_k}, \ref{f:h_match_k_against_layer}, \ref{f:theorem_sparse}, \ref{f:theorem_full}. Note that Example \ref{ex:linear_y_example} is a special case of Example \ref{ex:gaussian_process_kernel_examples}.

\begin{tcolorbox}[enhanced,title=,
                    frame hidden,
                    colback=gray!5,
                    breakable,
                    left=1pt,
                    right=1pt,
                    top=1pt,
                    bottom=1pt,
                ]
\begin{example}[$\ty{i}$ are linear functions of $\tx{i}$]
    \label{ex:linear_y_example}
    One example of Assumption \ref{ass:y_distribution} is when $\theta \sim \N(0,I)$, $\ty{i} = \lin{\theta, \xi^{(i)}}$, and $\tx{i} = \Sigma^{1/2} \xi^{(i)}$. We can verify that the covariance matrix $\Kmat(X_0) := \E\lrb{Y^\top Y} = X^\top \Sigma^{-1/2}\E\lrb{\theta \theta^T} \Sigma^{-1/2} X= X^\top \Sigma^{-1} X = \Kmat(\Sigma^{1/2} U \Sigma^{-1/2} X)$. This setting was considered in \cite{ahn2023transformers,mahankali2023one}.
\end{example}
\end{tcolorbox}

\begin{tcolorbox}[enhanced,title=,
                    frame hidden,
                    colback=gray!5,
                    breakable,
                    left=1pt,
                    right=1pt,
                    top=1pt,
                    bottom=1pt,
                ]
\begin{example}[Rotationally Symmetric $\K$ Gaussian Process]
    \label{ex:gaussian_process_kernel_examples}
    Recall the $\K$ Gaussian Process from Definition \ref{d:k_gaussian_process}. Under this definition, recall that $Y^\top|X \sim \N(0, \Kmat_+(X))$, where $\lrb{\Kmat(X)}_{ij} := \K\lrp{\Sigma^{-1/2} \tx{i}, \Sigma^{-1/2} \tx{j}}$, and $\Kmat_+(X)$ takes an absolute value on the eigenvalues of $\Kmat(X)$. We verify that Assumption \ref{ass:y_distribution} holds when $\K$ satisfies, for all orthogonal matrix $U$, if, for all orthogonal matrix $U$, $\K(v,w) = \K(U v, U w).$ This is satisfied by the following common choices of $\K$:
    \begin{align*}
        & \K^{linear}(u,w) := \lin{u,w}\qquad \K^{relu}(u,w) := \relu(\lin{u,w})\qquad  \K^{exp}_\sigma(u,w) := \exp\lrp{\lin{x,y}/\sigma^2}. 
        \numberthis \label{e:examples_of_generating_kernel}
    \end{align*}
    Notice that Example \ref{ex:linear_y_example} is a special case of Example \ref{ex:gaussian_process_kernel_examples} with $\Kmat$ generated by $\K^{linear}$.    
\end{example}
\end{tcolorbox}

\begin{tcolorbox}[enhanced,title=,
                    frame hidden,
                    colback=gray!5,
                    breakable,
                    left=1pt,
                    right=1pt,
                    top=1pt,
                    bottom=1pt,
                ]
\begin{example}[Two-layer ReLU network.]
    \label{ex:2_layer_relu}
    Finally, we provide an example where Assumption \ref{ass:y_distribution} holds for a \emph{random kernel}. Consider the random two-layer ReLU classification function described in \cite{garg2022can}:
    \begin{align*}
        \ty{i} = \lin{\theta_2, \relu\lrp{\theta_1 \tx{i}}},
    \end{align*}
    where $\theta_1\in \R^{d\times m}, \theta_2\in \R^{m}$. $\theta_1,\theta_2$ can be seen as representing a 2-layer ReLU network, with $m$ being the dimension of the hidden layer. Let $\theta_1, \theta_2$ be sampled coordinate-wise from $\N(0,1)$, independently. We verify that
    \begin{align*}
        \lrb{\Kmat(X)}_{ij}:= \E\lrb{\ty{i}\ty{j}} 
        =& \E_{\theta_1,\theta_2}\lrb{\relu\lrp{\theta_1 \tx{i}}^\top \theta_2 \theta_2^\top \relu\lrp{\theta_1 \tx{j}}} = \E_{\theta_1}\lrb{\relu\lrp{\theta_1 \tx{i}}^\top  \relu\lrp{\theta_1 \tx{j}}}.
    \end{align*}
    Similarly, using the fact that $\theta_1 U \overset{d}{=}\theta_1$, we verify that \\
    $\lrb{\Kmat(U X)}_{ij} - \E_{\theta_1}\lrb{\relu\lrp{\theta_1 U \tx{i}}^\top  \relu\lrp{\theta_1 U \tx{j}}} = \E_{\theta_1}\lrb{\relu\lrp{\theta_1 \tx{i}}^\top  \relu\lrp{\theta_1 \tx{j}}}$.

\end{example}
\end{tcolorbox}





\subsection{Architectural Assumptions}
\label{ss:architectural_assumptions}
For the rest of this section, we will assume that $\th\lrp{U,V}$ satisfies the following invariance:
\begin{assumption}
    \label{ass:th}
    For any $W, V \in \R^{d\times(n+1)}$ and for any matrix $S \in \R^{d\times d}$ with inverse $S^{-1}$, the function $\th(\cdot, \cdot)$ satisfies $\th(W, V) = \th(S^\top W, S^{-1} V)$.
\end{assumption}
We verify that the three examples of $\th$ from Examples \{\ref{ex:linear}, \ref{ex:relu}, \ref{ex:softmax}\} which implement \{Linear, ReLU, Softmax\}-activated Transformers, all satisfy Assumption \ref{ass:th}. We also assume that $V_\ell$ has the following sparsity pattern for $\ell =0\ldots k$:

\begin{assumption}
\label{ass:full_attention}
For $\ell =0\ldots k$, the value matrices $V_\ell$ which parameterize the Transformer layers in \eqref{e:dynamics_Z} satisfy the following structure:
$
V_\ell = \begin{bmatrix}
A_\ell & 0 \\ 
0 & r_\ell
\end{bmatrix}\quad \text{for some $A_i\in \R^{d\times d}$, $r_i \in \R$.}
$
\end{assumption}
The same sparsity pattern was considered in \cite{ahn2023transformers} in studying multi-layer linear Transformers.

\subsection{Theorem \ref{t:informal_master_sparse}: Functional gradient descent is a stationary point of (constrained) in-context loss.}
\label{ss:informal_theorem_sparse}
We first study the stationary points of the optimization problem, under the constraint that $A_\ell=0$ in Assumption \ref{ass:full_attention}. This setting is interesting because of its connection to the \emph{functional gradient descent} construction in Proposition \ref{p:rkhs_descent_transformer_construction}.
\begin{theorem}[Informal Statement of Theorem \ref{t:master_sparse}]
    \label{t:informal_master_sparse}
    Let $\th$ satisfy Assumption \ref{ass:th}, Let $\lrp{\tx{i},\ty{i}}_{i=1\ldots n+1}$ have distributions satisfying Assumptions \ref{ass:x_distribution} and \ref{ass:y_distribution}. Consider the optimization problem $\min_{V,B,C} f(V,B,C)$, for the in-context loss $f$ defined in \eqref{d:ICL_loss}, under the constraint that $V=\lrbb{V_\ell}_{\ell=0\ldots k}$ satisfies Assumption \ref{ass:full_attention}. \textcolor{blue}{\textbf{Additionally constrain $A_\ell=0$ for $\ell=0\ldots k$.}} Then there exist stationary points of the constrained optimization problem where, for all $\ell=0\ldots k$,
    \begin{align*}
        B_\ell = b_\ell \Sigma^{-1/2} \qquad C_\ell = c_\ell \Sigma^{-1/2},
        \numberthis \label{e:t:informal_sparse}
    \end{align*}
    where $b_\ell, c_\ell \in \R$.
\end{theorem}
The formal version of Theorem \ref{t:informal_master_sparse} is stated as Theorem \ref{t:master_sparse} in Appendix \ref{s:main_theorem_sparse}; its proof is in Appendix \ref{ss:proof:t:master_sparse}.This proposed stationary point implements the functional gradient descent construction of Proposition \ref{p:rkhs_descent_transformer_construction} -- when $\Sigma = I$, we verify that \eqref{e:t:informal_sparse} is, \textbf{up to scaling, identical to the construction in Proposition \ref{p:rkhs_descent_transformer_construction}.}

More generally, when $\Sigma$ is not identity, but $[\th\lrp{U,W}]_{ij}=\K(U^{(i)},W^{(j)})$ for some kernel $\K$ (see Examples \ref{ex:linear}, \ref{ex:relu}), \eqref{e:t:informal_sparse} implement functional descent with respect to the RKHS induced by $\tilde{\K}\lrp{u,w} := \K\lrp{\Sigma^{-1/2} u, \Sigma^{-1/2} w}$. One can view the kernel $\tilde{\K}$ as a rescaled version of $\K$.

Finally, in the case when $\th$ \textbf{does not coincide with a kernel}, \eqref{e:t:informal_sparse} implements the following algorithm:
\begin{align*}
\numberthis \label{e:sparse_general_alg}
    & f_{\ell+1}(\tx{n+1})  
    = f_\ell(\tx{n+1}) + r_\ell' \sum_{i=1}^n \lrp{\ty{i} - f_\ell(\tx{i})} \lrb{\th\lrp{\Sigma^{-1/2} X_0, \Sigma^{-1/2} X_0}}_{i,(n+1)}
\end{align*}
where $f_{\ell+1}(\tx{n+1})$ is ``the Transformer's prediction for $\ty{n+1}$ at layer $\ell$", for $\ell = 0...k+1$. It is instructive to compare \eqref{e:sparse_general_alg} with \eqref{e:t:oaimda:2}. 
\subsection{Experiment for Theorem \ref{t:informal_master_sparse}}
\label{ss:experiments_for_sparse}
In Figure \ref{f:theorem_sparse} below, we present empirical verification of Theorem \ref{t:informal_master_sparse}. In addition to the setup in Appendix \ref{ss:common_experiment_details}, we additionally constrain $A_\ell=0$ for each layer $\ell$. The number of demonstrations $n=30$.

To verify that the parameters are indeed converging to the predicted stationary point in Theorem \ref{t:informal_master_sparse}, we plot $\dist\lrp{\Sigma^{1/2} B_i^\top C_i \Sigma^{1/2},I}$, for $i=0,1,2$. The \emph{normalized Frobenius norm distance}: $\dist(M,I) := \min_{\alpha}  \frac{\lrn{M - \alpha  \cdot I}}{\lrn{M}_F}$, (equivalent to choosing $\alpha := \frac{1}{d} \sum_{i=1}^d M[i,i]$). This is essentially the projection distance of $\nicefrac{M}{\lrn{M}}_F$ onto the space of scaled identity matrices. 

{We only verify $B^\top C$ because the network is overparameterized, and for any $\Lambda\in \R^{d\times d}$, $(B_\ell, C_\ell)$ gives identical prediction as $(\Lambda^\top B_\ell, \Lambda^{-1} C_\ell)$}. (See also Remark \ref{r:overparameterized} after Theorem \ref{t:master_sparse}). It appears that in most cases, the matrices are converging to identity, which is the stationary point in Theorem \ref{t:informal_master_full}. This demonstrates that Theorem \ref{t:master_sparse} holds across a broad combination of $\K$ and $\th$.

We note that in the case of Figure \ref{f:theorem_sparse_c} and \ref{f:theorem_sparse_h}, a few of the parameter matrices appear to asymptote at around 0.2 distance to identity. It is unclear if this is due to optimization difficulties, or due to convergence to stationary points different from that proposed in Theorem \ref{t:informal_master_sparse}.

\begin{figure}[H]
\centering
\begin{subfigure}{0.32\textwidth}
\centering
\includegraphics[width=\textwidth]{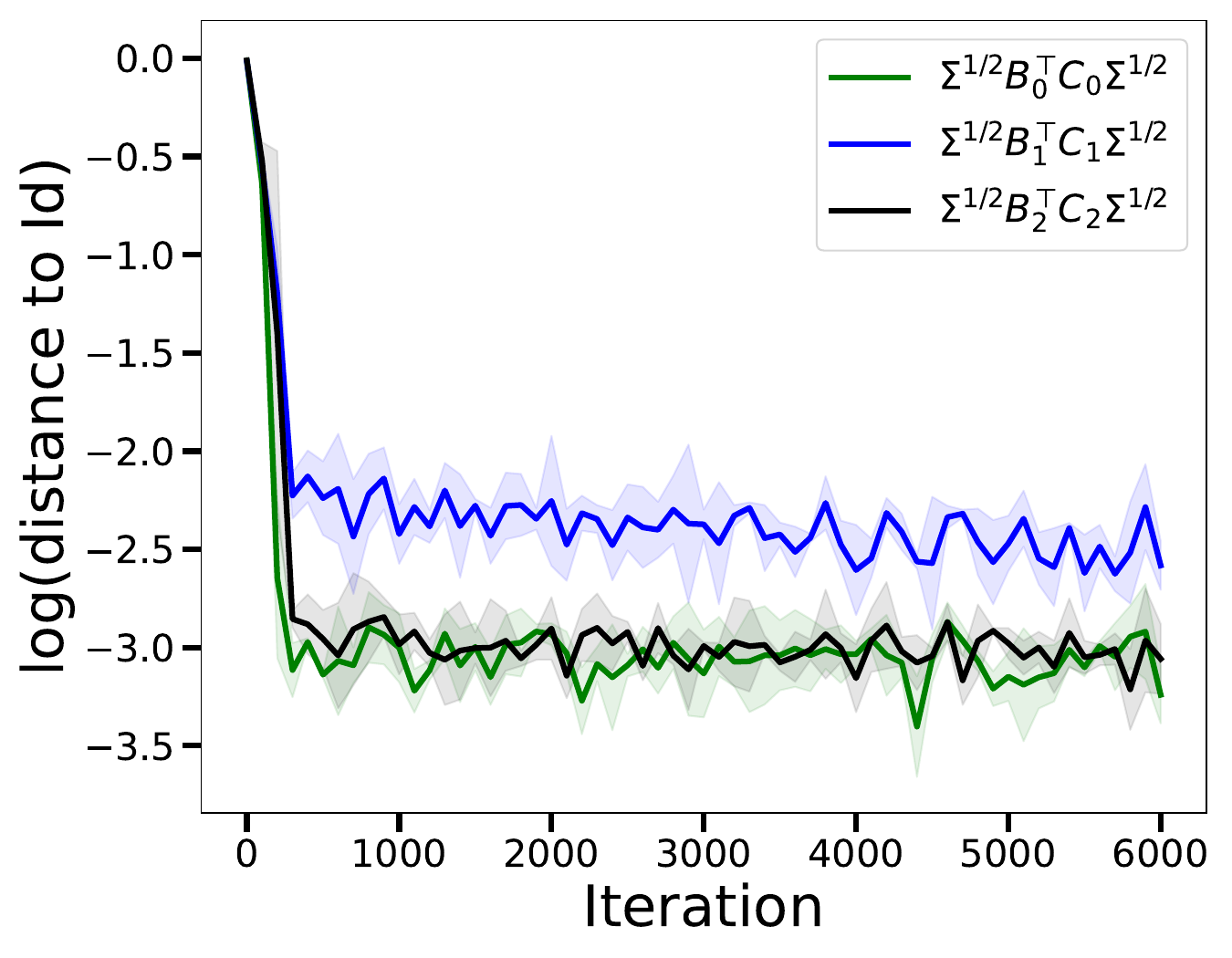} 
\caption{(Linear, ReLU)}
\end{subfigure}\hfill
\begin{subfigure}{0.32\textwidth}
\centering
\includegraphics[width=\textwidth]{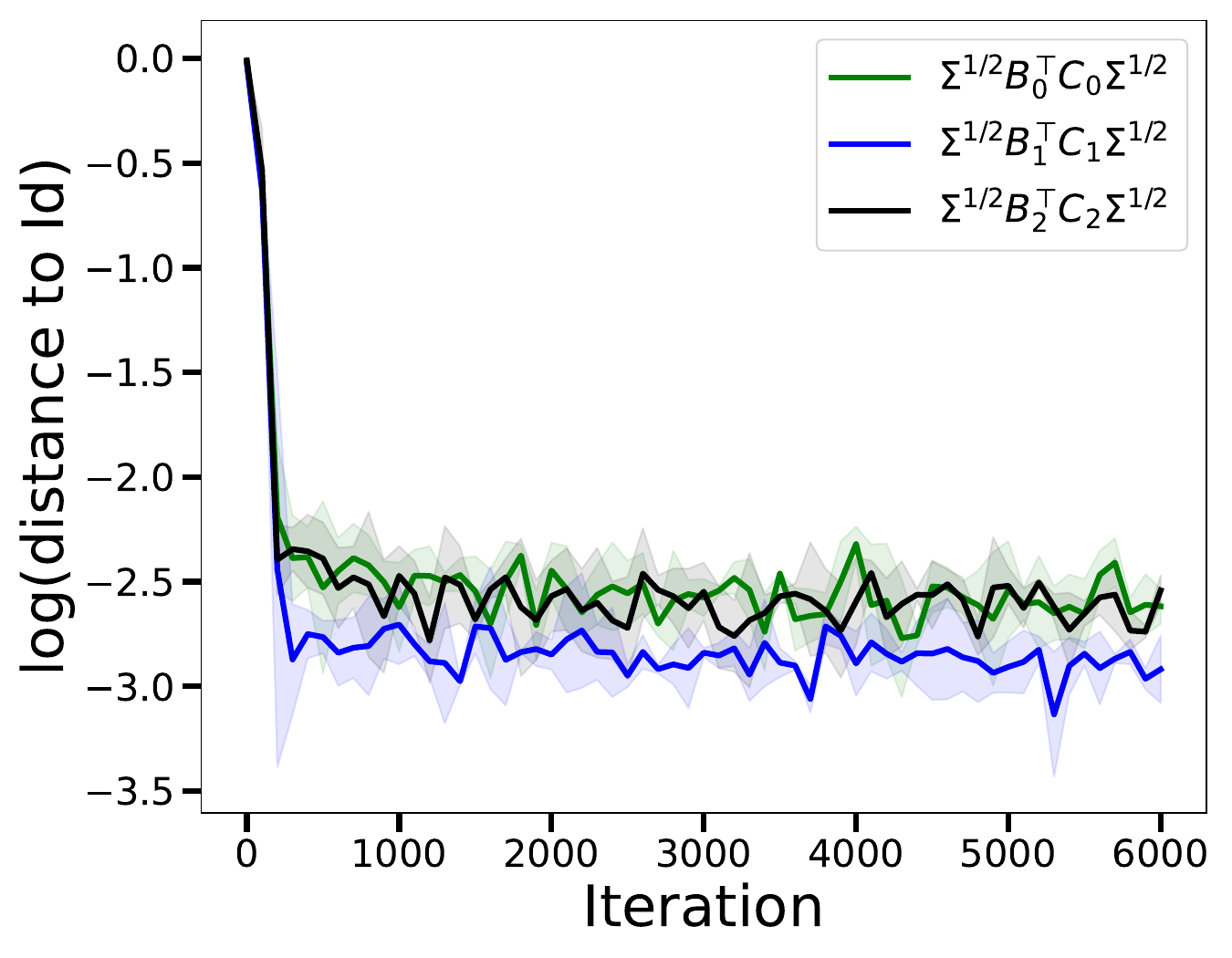}  
\caption{(ReLU, ReLU)}
\end{subfigure}
\begin{subfigure}{0.32\textwidth}
\centering
\includegraphics[width=\textwidth]{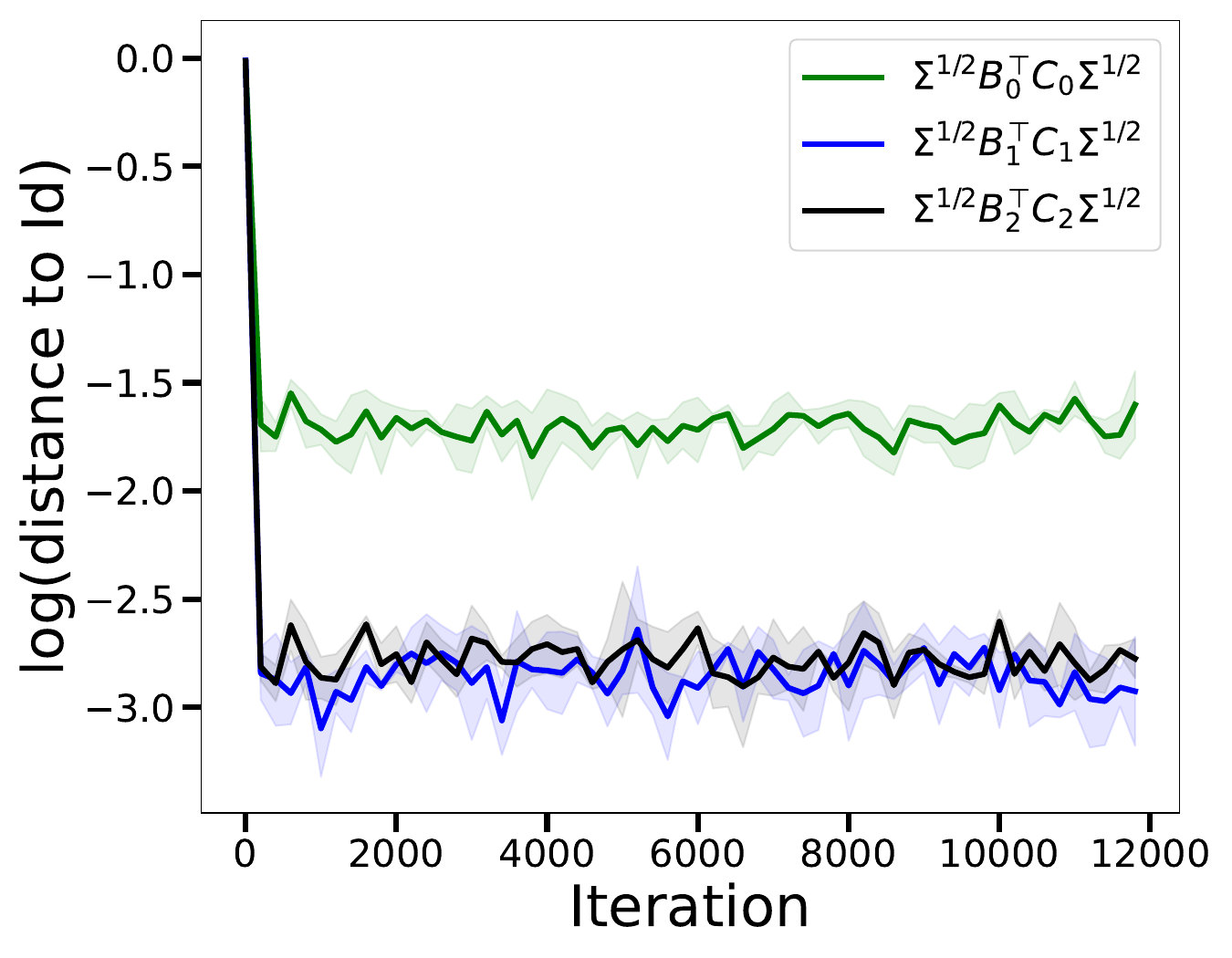}  
\caption{(Exp, ReLU)}
\label{f:theorem_sparse_c}
\end{subfigure}\\
\begin{subfigure}{0.32\textwidth}
\centering
\includegraphics[width=\textwidth]{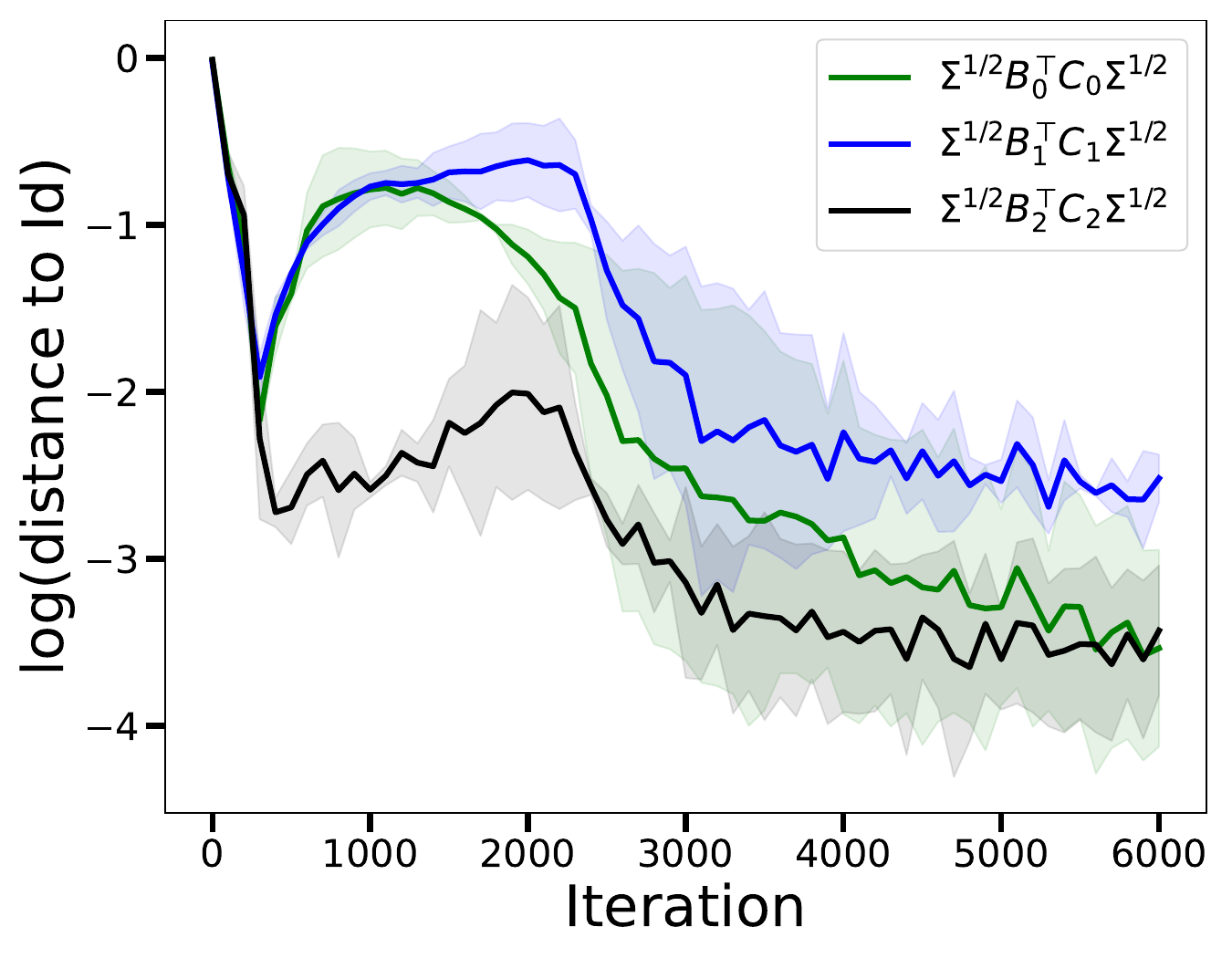} 
\caption{(Linear, Softmax)}
\end{subfigure}\hfill
\begin{subfigure}{0.32\textwidth}
\centering
\includegraphics[width=\textwidth]{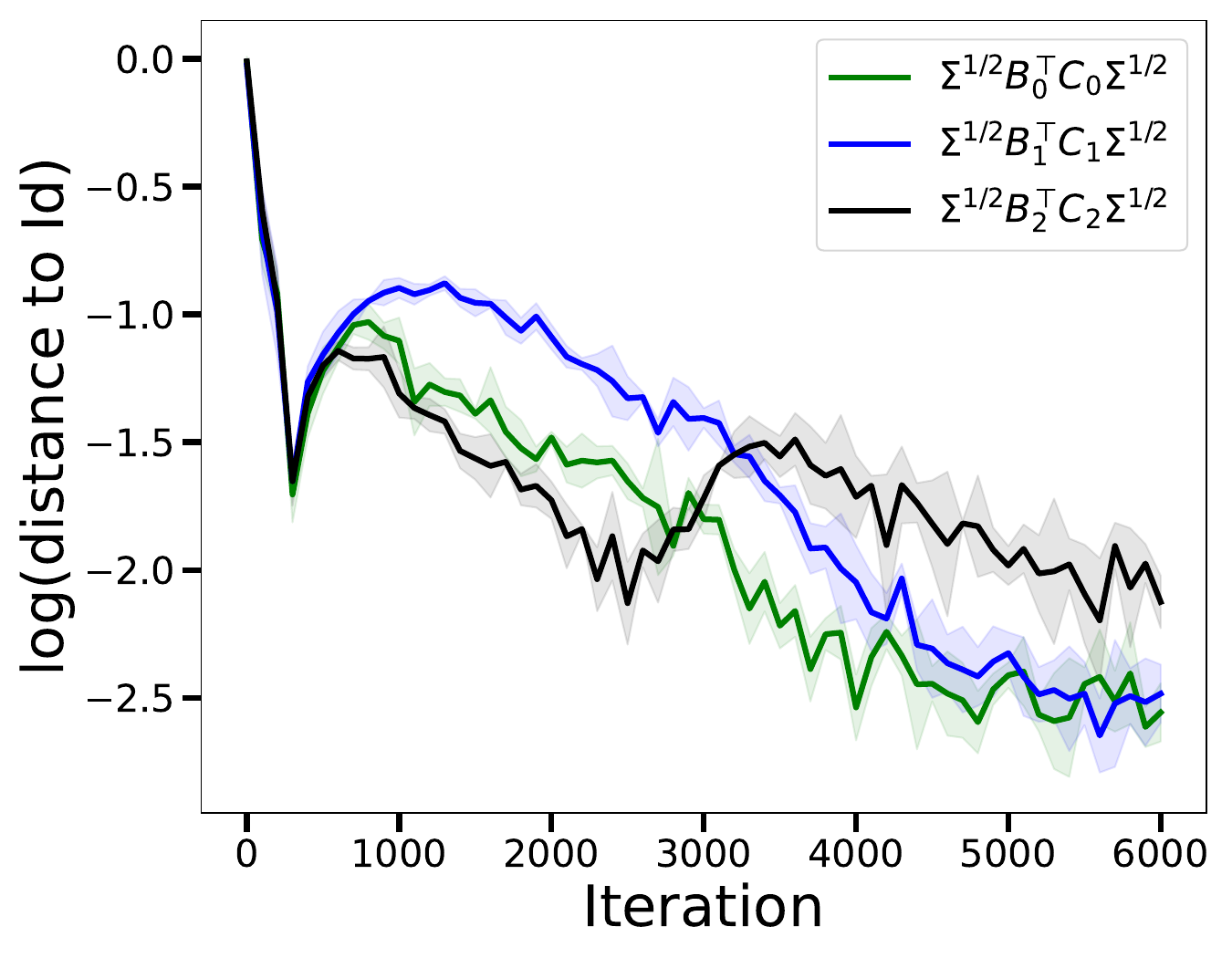}  
\caption{(ReLU, Softmax)}
\end{subfigure}
\begin{subfigure}{0.32\textwidth}
\centering
\includegraphics[width=\textwidth]{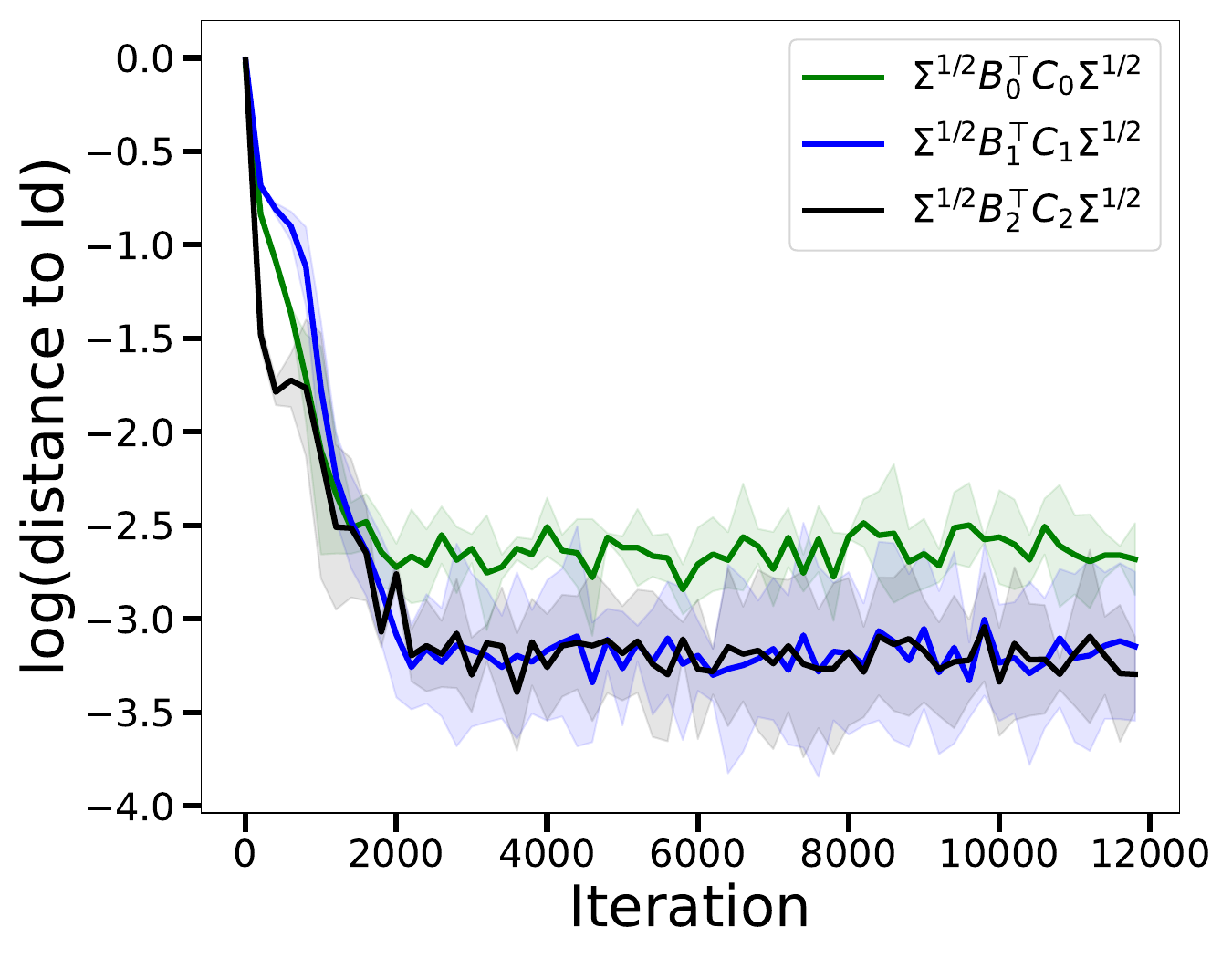}  
\caption{(Exp, Softmax)}
\end{subfigure}\\
\begin{subfigure}{0.32\textwidth}
\centering
\includegraphics[width=\textwidth]{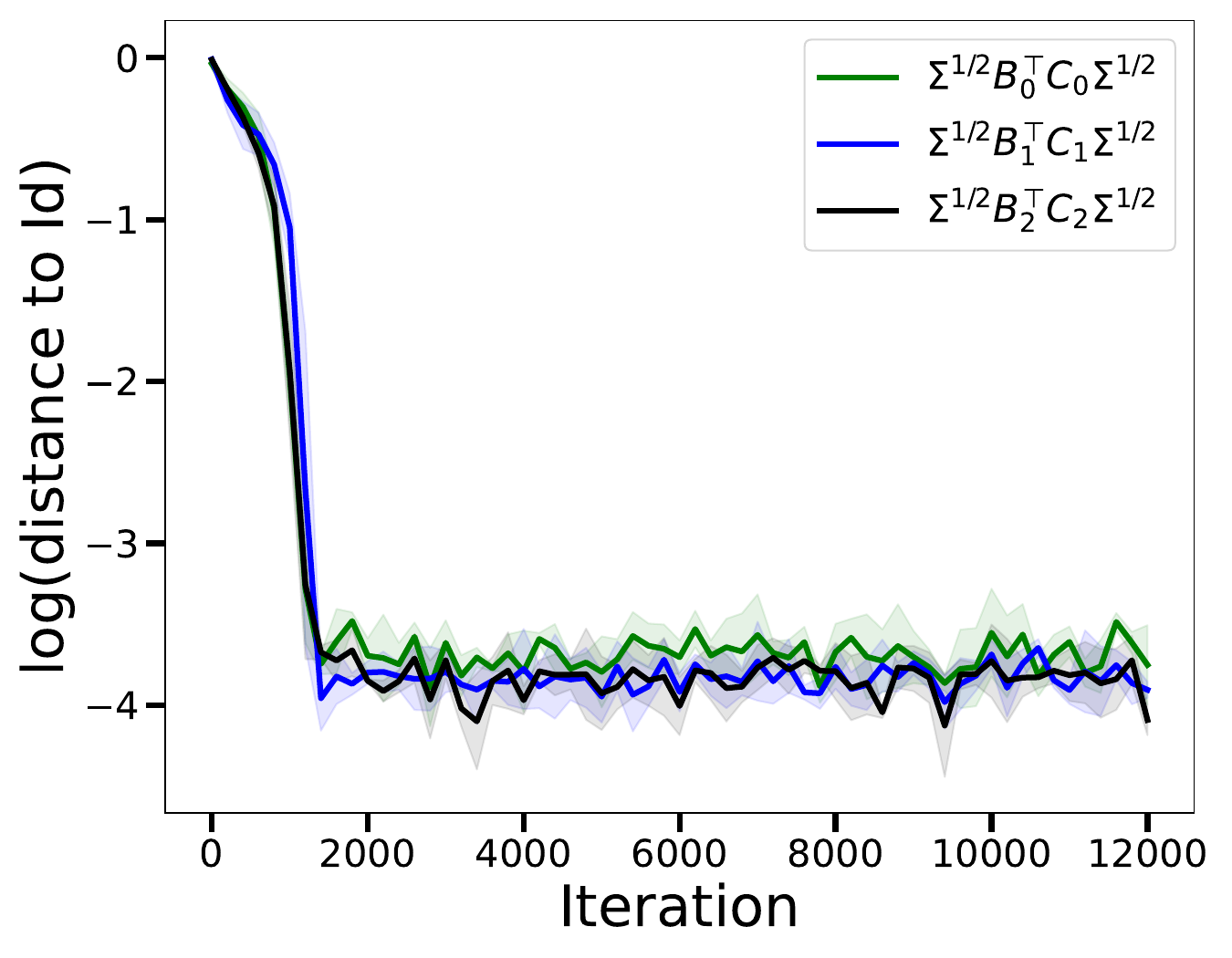} 
\caption{(Linear, Linear)}
\end{subfigure}\hfill
\begin{subfigure}{0.32\textwidth}
\centering
\includegraphics[width=\textwidth]{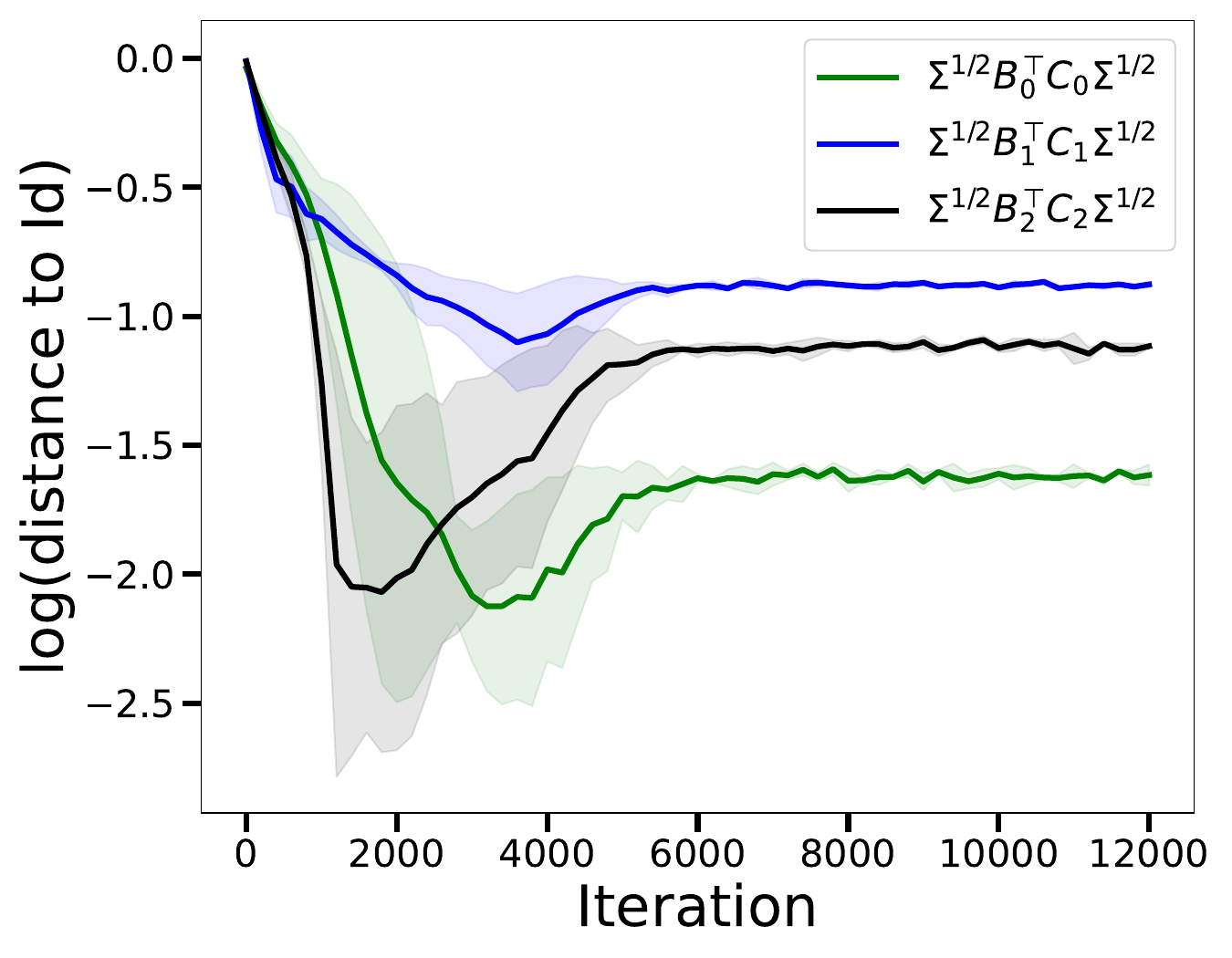}  
\caption{(ReLU, Linear)}
\label{f:theorem_sparse_h}
\end{subfigure}
\begin{subfigure}{0.32\textwidth}
\centering
\includegraphics[width=\textwidth]{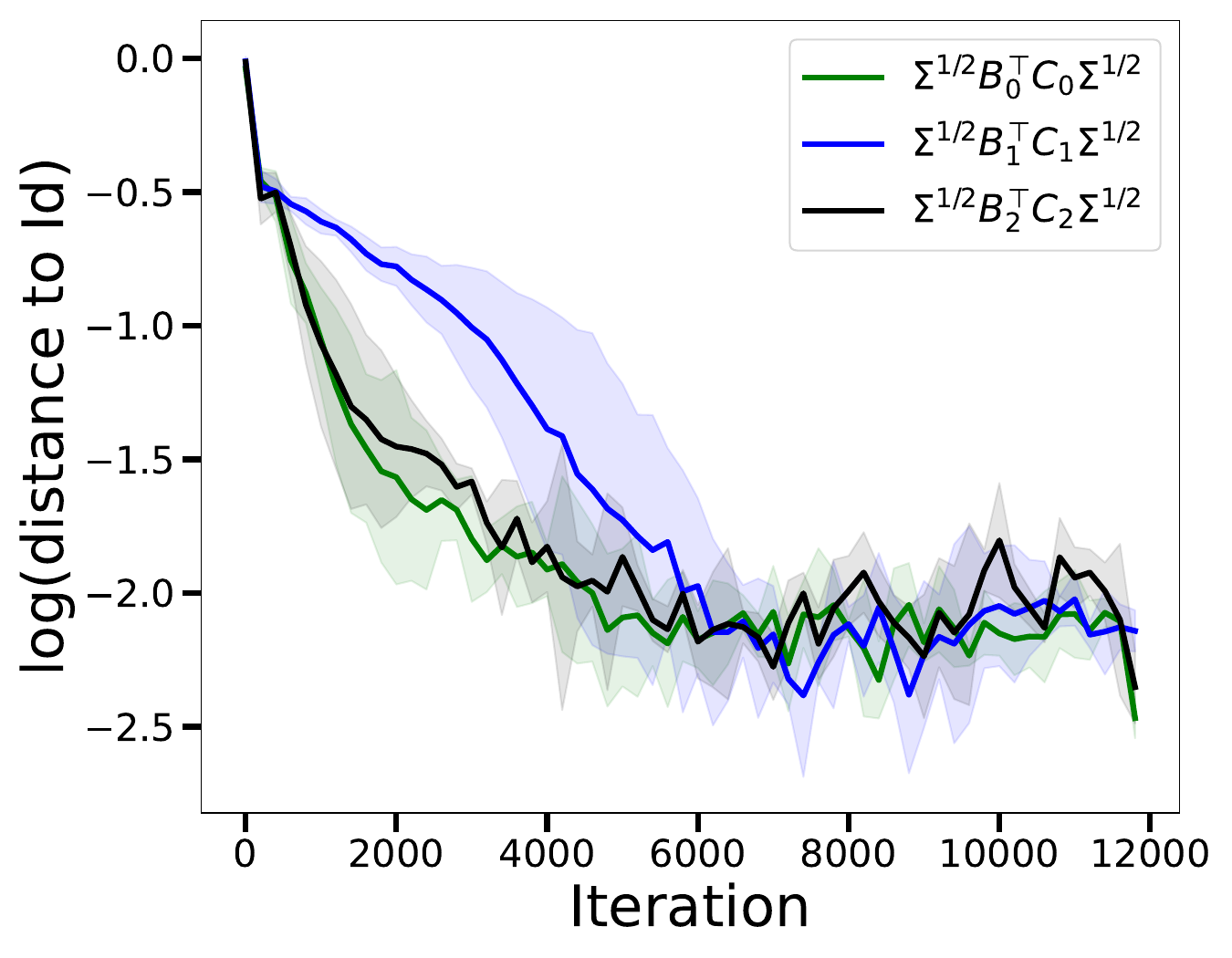}  
\caption{(Exp, Linear)}
\end{subfigure}

\caption{Plots of $\log(\dist(M,I))$ for $M=\Sigma^{1/2}\lrbb{B_0^\top C_0, B_1^\top C_1, B_2^\top C_2}\Sigma^{1/2}$ against number of training iterations. Each plot coincides with a different experiment setup, where we vary the generating distribution and the architecture. The subplot title is ($\K$, $\tilde{h}$), where $\K$ defines a Gaussian Process for labels, as described in Definition \ref{d:k_gaussian_process}, and $\th$ is the non-linear map in the Transformer's attention module. In all cases, the corresponding matrix appears to be converging to identity, which is the stationary point from Theorem \ref{t:informal_master_sparse}.}
\label{f:theorem_sparse}
\end{figure}

\subsection{Theorem \ref{t:informal_master_full}: Characterizing the stationary points of unconstrained in-context loss.}
\label{ss:informal_theorem_full}
We now study stationary points of the optimization problem under Assumption \ref{ass:full_attention} (with arbitrary $A_\ell$'s). This setting is considerably more general than the setting of Theorem \ref{t:informal_master_sparse}.
\begin{theorem}[Informal Statement of Theorem \ref{t:master_full}]
    \label{t:informal_master_full}
    Let $\th$ satisfy Assumption \ref{ass:th}, Let $\lrp{\tx{i},\ty{i}}_{i=1\ldots n+1}$ have distributions satisfying Assumptions \ref{ass:x_distribution} and \ref{ass:y_distribution}. Consider the optimization problem $\min_{V,B,C} f(V,B,C)$, for the in-context loss $f$ defined in \eqref{d:ICL_loss}, under the constraint that $V=\lrbb{V_\ell}_{\ell=0\ldots k}$ satisfies Assumption \ref{ass:full_attention}. Then there exist stationary points of the constrained optimization problem which equal the functional gradient descent construction from Proposition \ref{p:rkhs_descent_transformer_construction}. In particular, for all $\ell=0\ldots k$,
    \begin{align*}
        A_\ell = a_\ell I, \qquad B_\ell = b_\ell \Sigma^{-1/2}, \qquad C_\ell = c_\ell \Sigma^{-1/2},
        \numberthis \label{e:t:informal_full}
    \end{align*}
    where $a_\ell, b_\ell, c_\ell \in \R$.
\end{theorem}
The above setup is identical to Theorem \ref{e:t:informal_sparse}, \textbf{\textcolor{blue}{except we do not constrain $A_\ell=0$.}}

The formal version of Theorem \ref{t:informal_master_full} is stated as Theorem \ref{t:master_full} in Appendix \ref{s:main_theorem_full}; the proof can be found in Appendix \ref{ss:proof:t:master_full}. Unlike in Theorem \ref{t:informal_master_sparse}, the matrix $A_\ell$ is not $0$; thus the covariates $\tx{\ell}$ are transformed each layer. If $\th$ matches a kernel $\K$, we can verify (using same steps as Proposition \ref{p:rkhs_descent_transformer_construction}) that the Transformer dynamics \ref{e:dynamics_Z} implements an algorithm that interleaves functional gradient descent steps with transformations of the covariates. We refer the readers to \eqref{e:dynamic_XY_full_proof} in the proof of Theorem \ref{t:master_full} for an explicit description of the algorithm implemented by the choice of parameters in \eqref{e:t:informal_full}.

We leave interpretation of this algorithm as future work. We note here that, when $\th$ is linear (see Example \ref{ex:linear}), Theorem \ref{t:informal_master_full} implies Theorem 4 of \cite{ahn2023transformers}, which is similar to the GD++ construction of \cite{von2022transformers}. In this case, each covariate transformation step can be shown to improve the condition number of the empirical least squares loss. 


\subsection{Experiments for Theorem \ref{t:informal_master_full}}
\label{ss:experiments_for_full}
In Figure \ref{f:theorem_full}, we present experimental verification of Theorem \ref{t:informal_master_full}. The experiment setup is as described in Appendix \ref{ss:common_experiment_details}. The number of demonstrations $n=30$. The metric for measuring distance to identity is same as described in Section \ref{ss:experiments_for_sparse}, i.e. $\dist(M,I) := \min_{\alpha}  \frac{\lrn{M - \alpha  \cdot I}}{\lrn{M}_F}$. The experiment setup is very similar to that in Section \ref{ss:experiments_for_sparse}, with the following differences:
\begin{enumerate}
    \item The Transformer is parameterized by $(r_\ell, A_\ell, B_\ell, C_\ell)_{\ell=0,1,2}$. The value matrix $V_\ell$ is parameterized by $A_\ell$ as described in Assumption \ref{ass:full_attention}.
    \item We plot $\Dist\lrp{M, Id}$, for $M \in \lrbb{A_0, A_1} \cup \lrbb{\Sigma^{1/2} B_i^\top C_i \Sigma^{1/2}}_{i=0,1,2}$. In particular, we additionally plot $A_0, A_1$ as they are not constrained to be $0$ in Theorem \ref{t:informal_master_full}.
\end{enumerate}

Similar to Appendix \ref{ss:experiments_for_sparse}, it appears that in most cases the matrices are converging to identity, which is the stationary point in Theorem \ref{t:informal_master_full}. We note that in the case of Figure \ref{f:theorem_full_b}, a few of the parameter matrices appear to asymptote at around 0.2 distance to identity. It is unclear if this is due to optimization difficulties, or due to convergence to stationary points different from that proposed in Theorem \ref{t:informal_master_full}.

\begin{figure}[h]
\centering
\begin{subfigure}{0.32\textwidth}
\centering
\includegraphics[width=\textwidth]{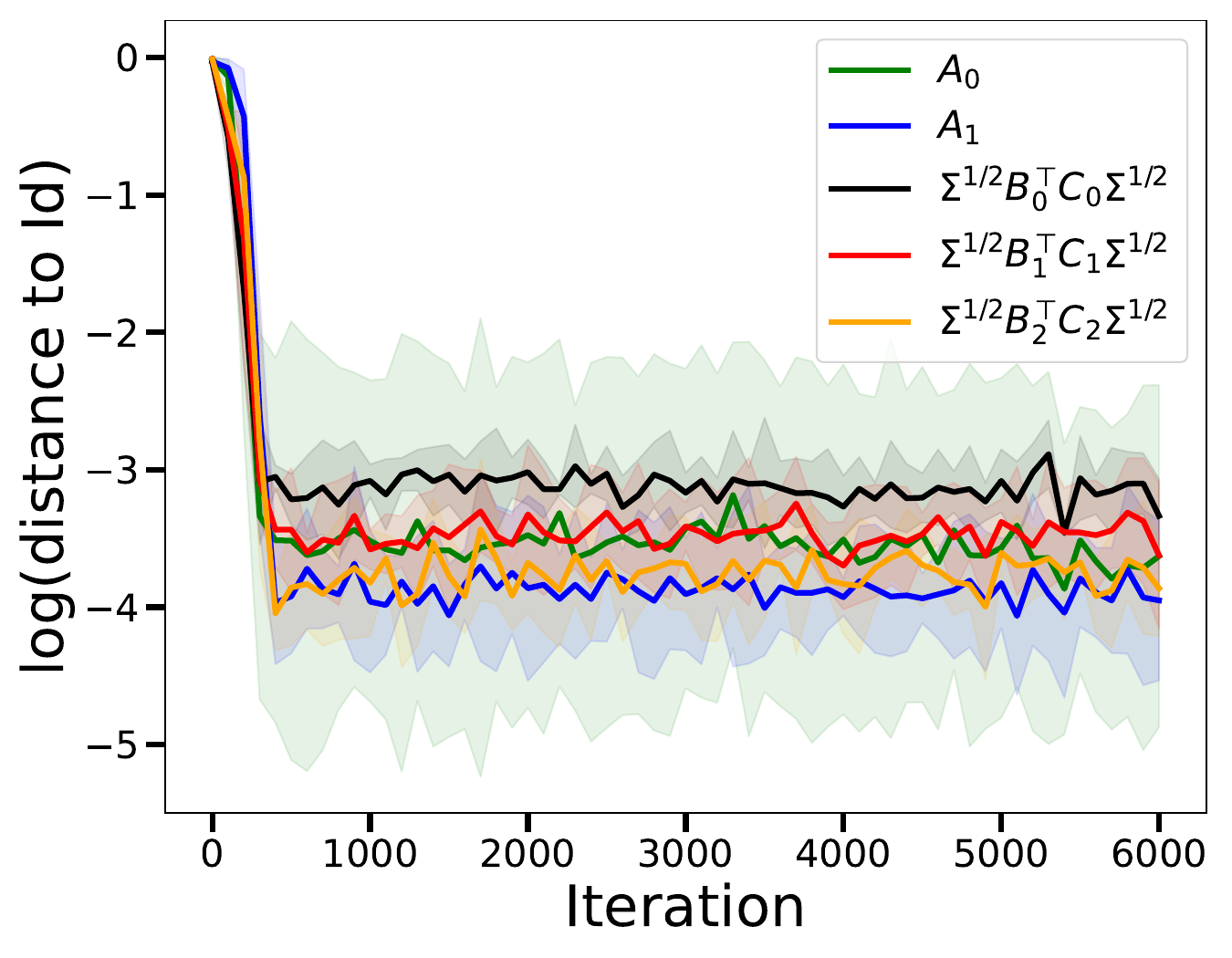} 
\caption{(Linear, ReLU)}
\end{subfigure}\hfill
\begin{subfigure}{0.32\textwidth}
\centering
\includegraphics[width=\textwidth]{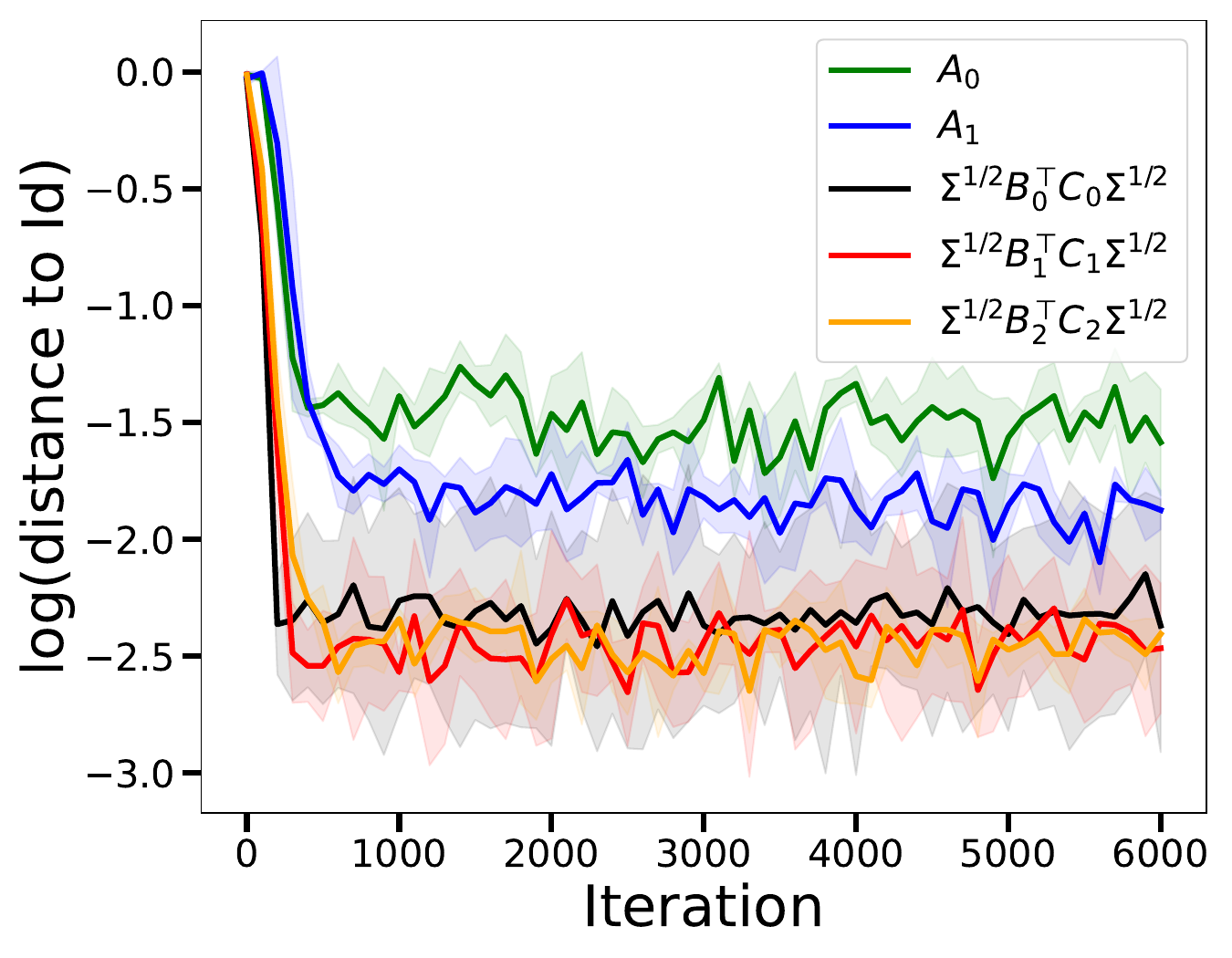}  
\caption{(ReLU, ReLU)}
\label{f:theorem_full_b}
\end{subfigure}
\begin{subfigure}{0.32\textwidth}
\centering
\includegraphics[width=\textwidth]{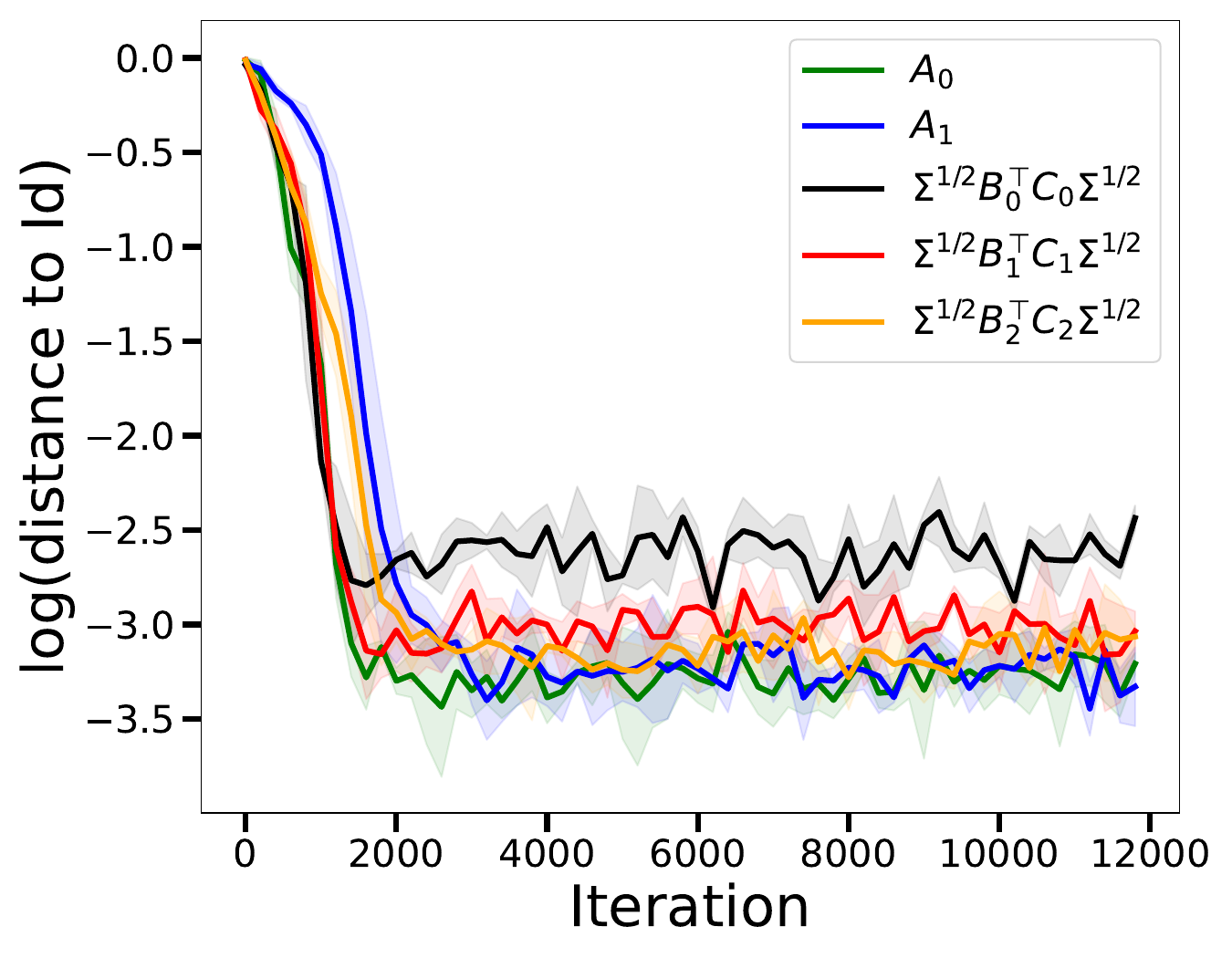}  
\caption{(Exp, ReLU)}
\end{subfigure}\\
\begin{subfigure}{0.32\textwidth}
\centering
\includegraphics[width=\textwidth]{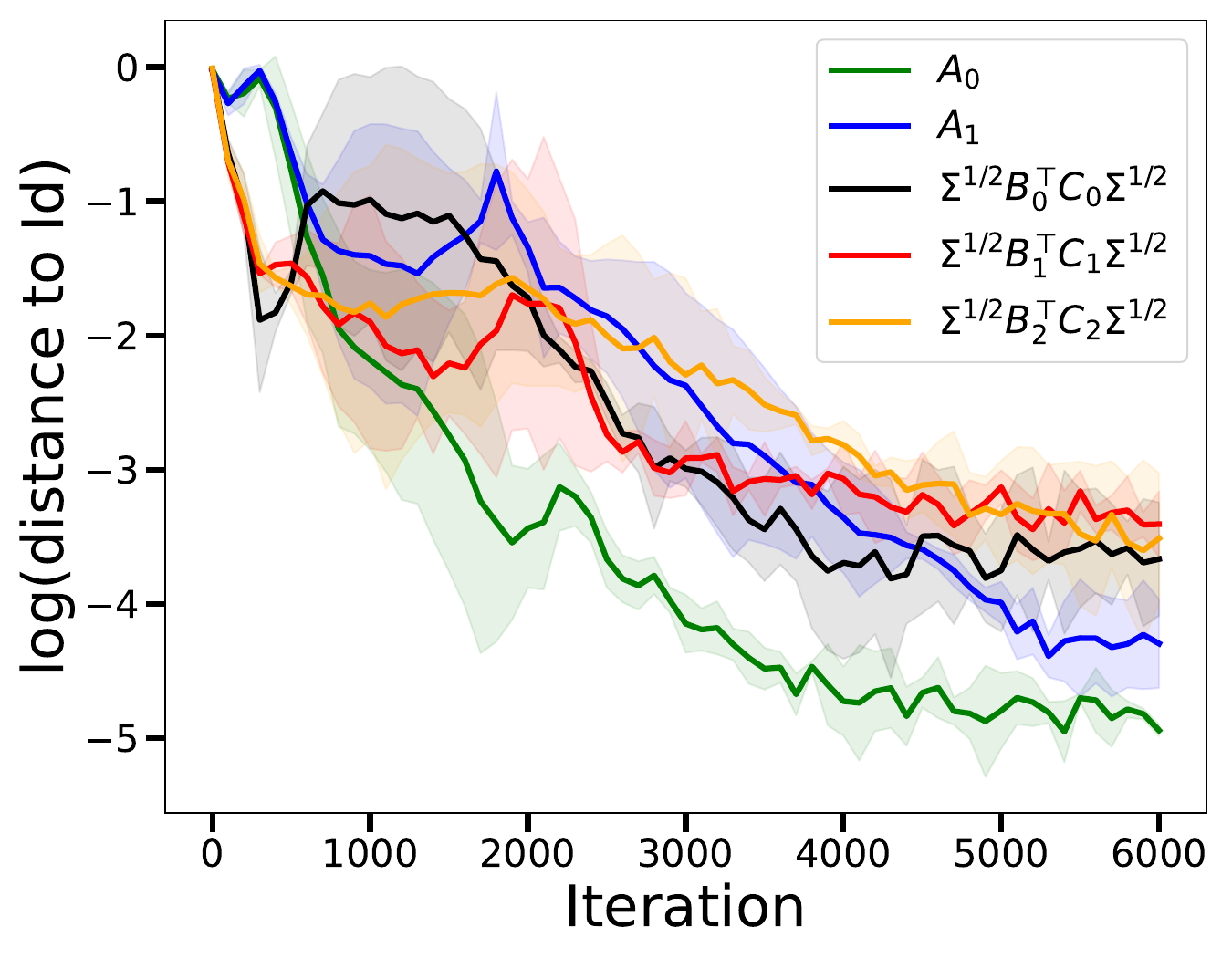} 
\caption{(Linear, Softmax)}
\end{subfigure}\hfill
\begin{subfigure}{0.32\textwidth}
\centering
\includegraphics[width=\textwidth]{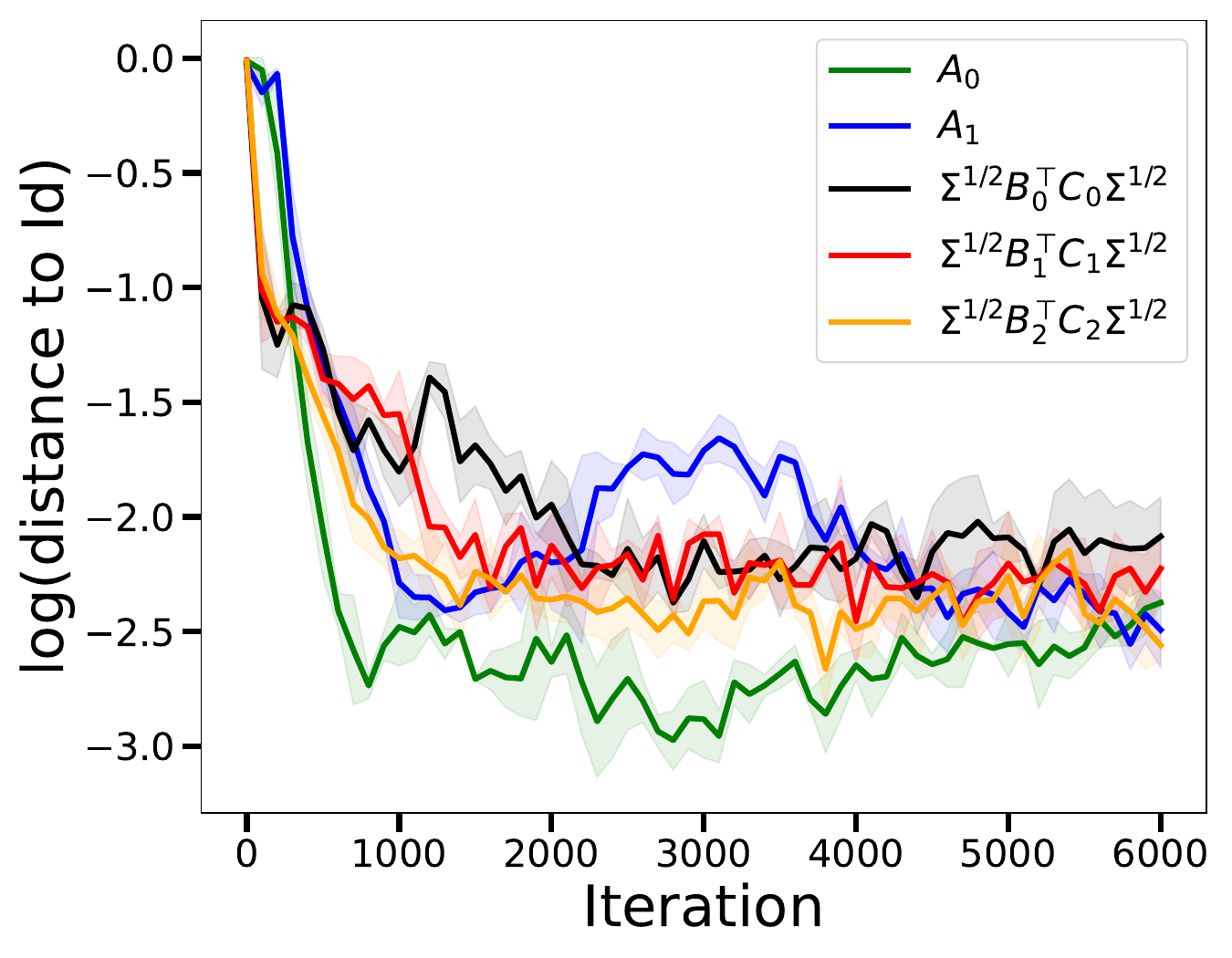}  
\caption{(Exp, Softmax)}
\end{subfigure}
\begin{subfigure}{0.32\textwidth}
\centering
\includegraphics[width=\textwidth]{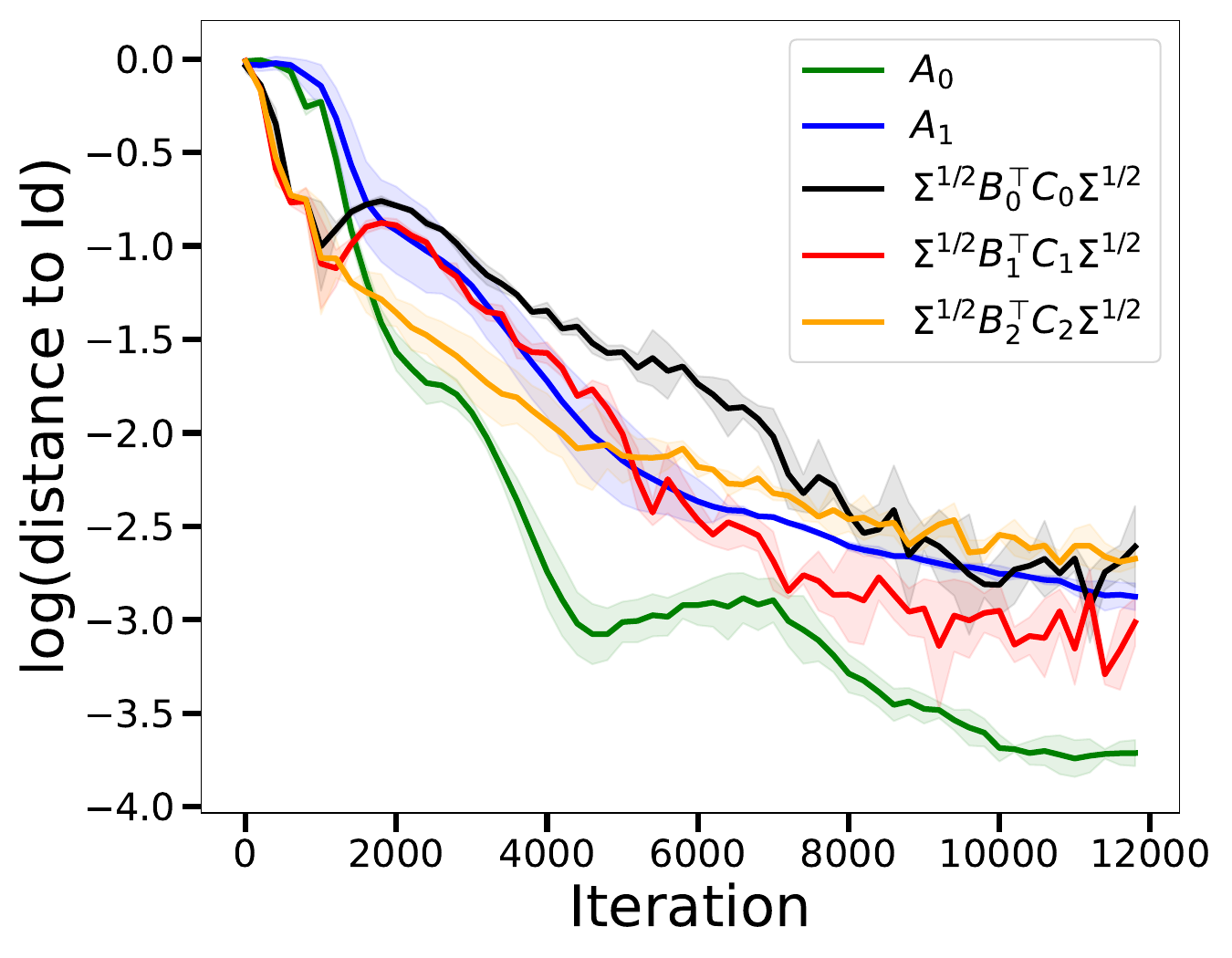}  
\caption{(Exp, Softmax)}
\end{subfigure}\\
\begin{subfigure}{0.32\textwidth}
\centering
\includegraphics[width=\textwidth]{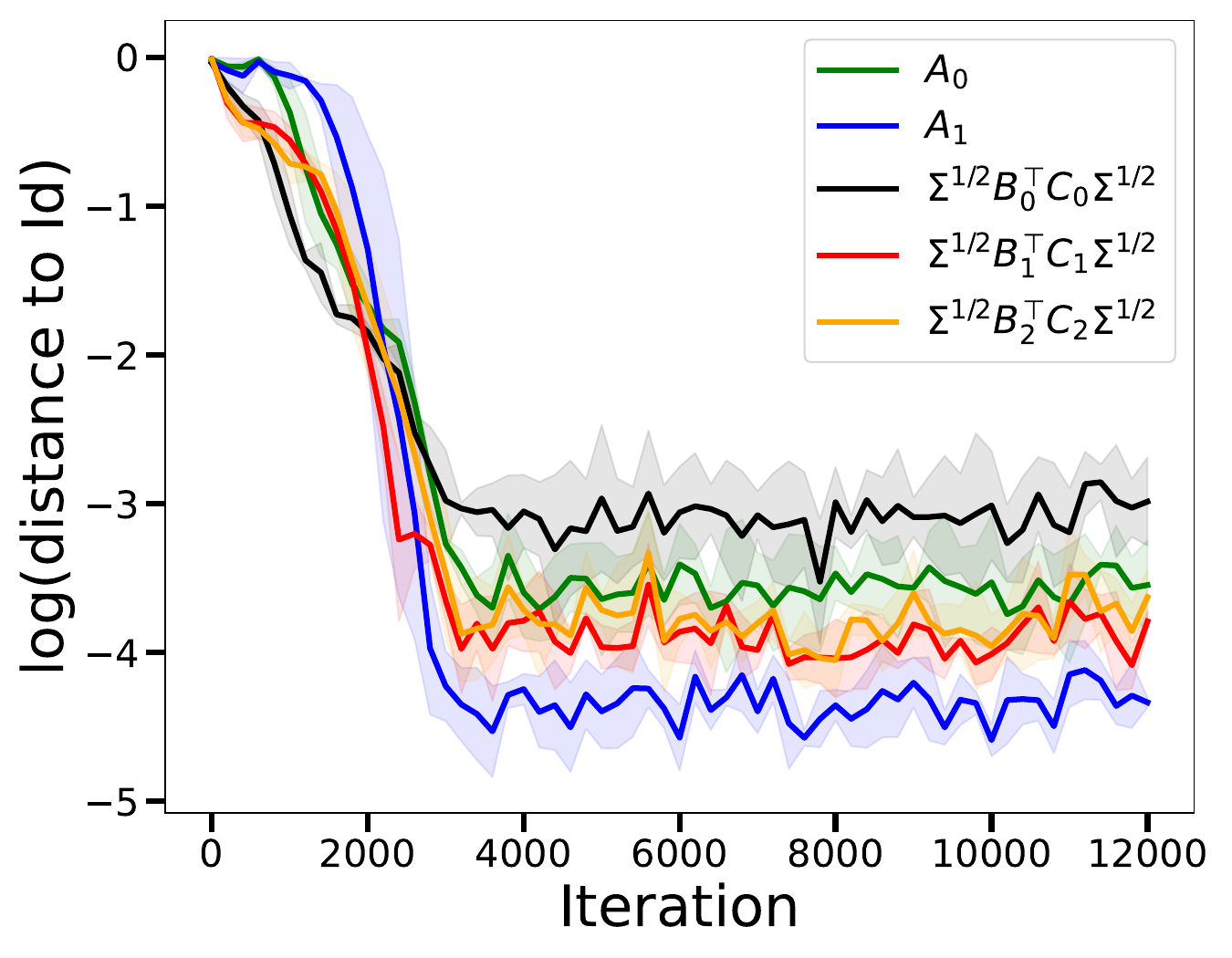} 
\caption{(Linear, Linear)}
\end{subfigure}\hfill
\begin{subfigure}{0.32\textwidth}
\centering
\includegraphics[width=\textwidth]{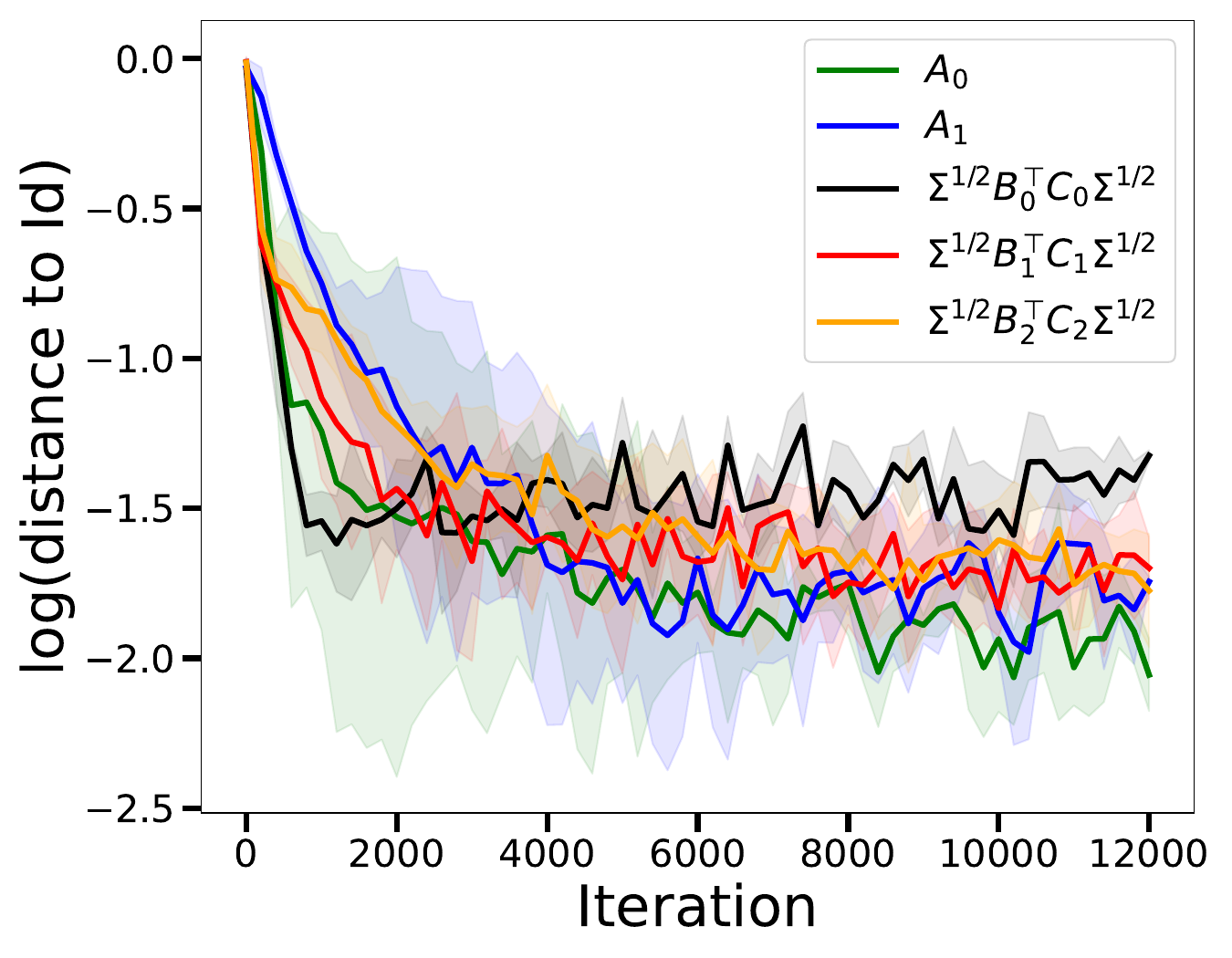}  
\caption{(ReLU, Linear)}
\label{f:theorem_full_h}
\end{subfigure}
\begin{subfigure}{0.32\textwidth}
\centering
\includegraphics[width=\textwidth]{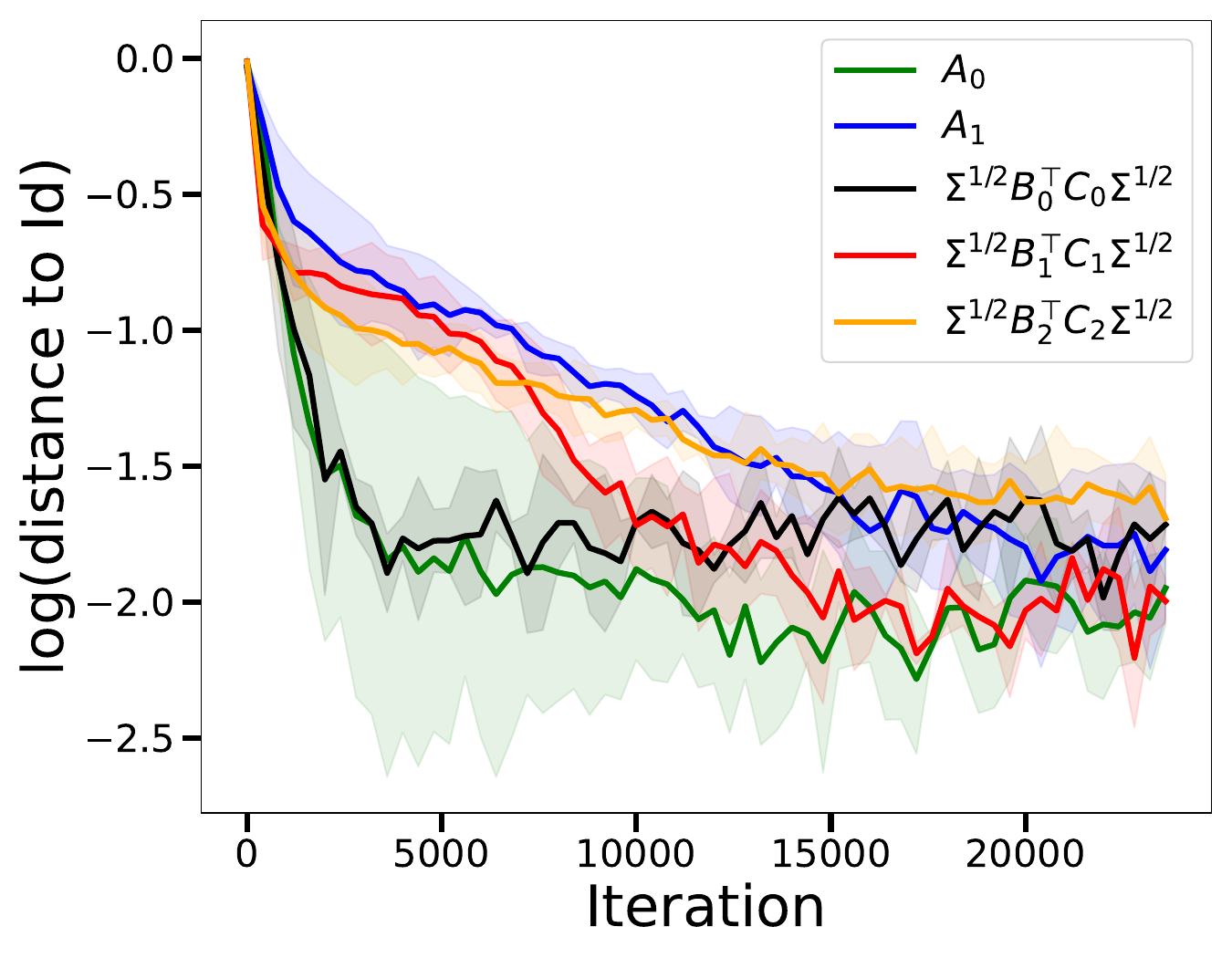}  
\caption{(Exp, Linear)}
\end{subfigure}
\caption{Plots of $\log(\dist(M,I))$ for $M \in \lrbb{A_0, A_1} \cup \lrbb{\Sigma^{1/2} B_i^\top C_i \Sigma^{1/2}}_{i=0,1,2}$ against number of training iterations. Each plot coincides with a different experiment setup, where we vary the generating distribution and the architecture. The subplot title is ($\K$, $\tilde{h}$), where $\K$ defines a Gaussian Process for labels, as described in Definition \ref{d:k_gaussian_process}, and $\th$ is the non-linear map in the Transformer's attention module. The definitions of each $\K$ and $\th$ can be found in \eqref{e:3_kernel_choices} and \eqref{e:4_kernel_choices} respectively. In all cases, the corresponding matrix appears to be converging to identity, which is the stationary point from Theorem \ref{t:informal_master_full}.}
\label{f:theorem_full}
\end{figure}

\section{Summary and Future Directions}
In this paper, we show that Transformers can learn to implement functional gradient descent in its forward pass, enabling them to learn non-linear functions in-context. The choice of non-linear attention module determines the \emph{metric function space} in which the functional gradient descent happens. When the labels are generated from a kernel Gaussian Process, the Transformer can produce the Bayes optimal predictor if the non-linear attention module matches the kernel. We show that our functional gradient descent construction is a stationary point of the in-context loss under certain sparsity constraints on the value matrix. In the absence of this sparsity constraint, we show the existence of stationary points which implement a more sophisticated functional-gradient-based learning algorithm. We provide empirical verification of all our theoretical claims.

We discuss a few potential directions for future research below:
\begin{itemize}
    \item \textbf{Representation Power via Composition:} Using the idea of kernel composition, we showed in Proposition \ref{c:multihead} that a multi-headed Transformer can have significantly greater representation power by combining \emph{parallel attention heads}. It will be interesting to also investigate the representation power of composition across layers, i.e .\emph{sequential attention heads}. It may also be worthwhile to investigate the benefits of diverse activations on practical tasks.

    \item \textbf{Optimal choice of non-linearity:} In Section \ref{ss:experiments_for_optimality}, we saw how the optimal choice of non-linear activation can depend on the function being learned. This may provide some intuition for how to select the right non-linear activation in practical settings.
    
    \item \textbf{Understanding functional gradient descent++} What is the interpretation of the algorithm implemented in Theorem \ref{t:informal_master_full}? Specifically, how does the transformation of covariates lead to a better learning algorithm?
    \item \textbf{Stronger Theoretical Guarantees: } can we show global optimality, or even establish convergence guarantees, for the stationary points in Theorem \ref{t:informal_master_sparse} and \ref{t:informal_master_full}, in light of our experimental observations?
\end{itemize}

\section*{Acknowledgement}
Xiang Cheng acknowledges NSF CCF-2112665 (TILOS AI Research Institute) for their generous support. Suvrit Sra acknowledges NSF CCF-2112665 (TILOS AI Research Institute) and the Humboldt foundation for their generous support.





\bibliographystyle{plainnat}
\setlength{\bibsep}{2pt}
\bibliography{references}

\appendix
\renewcommand{\appendixpagename}{\centering \LARGE Appendix}
\appendixpage
\startcontents[section]
\printcontents[section]{l}{1}{\setcounter{tocdepth}{2}}

\section{Reformulating the In-Context Loss}
\begin{lemma}
\label{l:icl_trace_form}
Let $Z_0\in \R^{(d+1)\times (n+1)}$ be the input to the Transformer (as defined in \eqref{d:Z_0}): 
\begin{align*}
Z_0 = \begin{bmatrix}
\tx{1} & \tx{2} & \cdots & \tx{n} &\tx{n+1} \\ 
\ty{1} & \ty{2} & \cdots &\ty{n}& 0
\end{bmatrix} \in \R^{(d+1) \times (n+1)}.
\end{align*}
Let $\overline{Z}_0$ be the input of the Transformer \textbf{without masking out $\ty{n+1}$}:
\begin{align*}
\overline{Z}_0 = \begin{bmatrix}
\tx{1} & \tx{2} & \cdots & \tx{n} &\tx{n+1} \\ 
\ty{1} & \ty{2} & \cdots &\ty{n}& \ty{n+1}
\end{bmatrix} \in \R^{(d+1) \times (n+1)},
\end{align*}
where $\ty{n+1} = \lin{\wstar, \tx{n+1}}$. 
Let $Z_\ell$ denote the output of the $(\ell-1)^{th}$ layer of the linear transformer \textbf{initialized at $Z_0$} (as defined in \eqref{e:dynamics_Z}). Let $\overline{Z}_\ell$ denote the output of the $(\ell-1)^{th}$ layer of the linear transformer \textbf{initialized at $\overline{Z}_0$} (as defined in \eqref{e:dynamics_Z}). Let $f\left(V,Q,K\right)$ denote the in-context loss defined in \eqref{d:ICL_loss}, i.e.
\begin{align*}
f\left(V,Q,K\right) = \E_{\overline{Z}_0} \Bigl[ \left( [Z_{k+1}]_{(d+1),(n+1)} + \ty{n+1} \right)^2\Bigr].
\numberthis \label{e:old_icl}
\end{align*}

Let $V_\ell$ satisfy Assumption \ref{ass:full_attention}. Then the in-context loss, defined in \eqref{d:ICL_loss}, has the equivalent form
\begin{align*}
f\left(A,B,C\right) := f\left(W\right) = \E_{\overline{Z}_0} \lrb{\tr\lrp{\lrp{I-M}\overline{Y}_{k+1}^\top \overline{Y}_{k+1}\lrp{I-M}}},
\end{align*}
where $\overline{Y}_{k+1}\in\R^{1\times n+1}$ is the $(d+1)^{th}$ row of $\overline{Z}_{k+1}$.
\end{lemma}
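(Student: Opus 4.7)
The plan is to argue that the two matrices $Z_\ell$ and $\overline{Z}_\ell$ differ in exactly one entry for every $\ell$, namely the $(d+1,n+1)$ position, where the offset is always equal to $\ty{n+1}$. Once this is established, the right hand side collapses to a single squared scalar that matches the left hand side, and the trace form follows because $I-M$ only keeps the $(n+1,n+1)$ entry.

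First I would decompose the layer update under Assumption \ref{ass:full_attention}. Writing $V_\ell = \mathrm{diag}(A_\ell, r_\ell)$ splits the recursion \eqref{e:dynamics_Z} into an update for the covariate block $X_\ell$ (the first $d$ rows) and an update for the label row $Y_\ell$ (the last row), with no coupling from $Y_\ell$ back into $X_\ell$. Since the initial $X$-blocks of $Z_0$ and $\overline{Z}_0$ agree and the update to $X_\ell$ depends only on $X_\ell$ itself, a straightforward induction gives $X_\ell = \overline{X}_\ell$ for every $\ell$, and hence $\th(B_\ell X_\ell, C_\ell X_\ell) = \th(B_\ell \overline{X}_\ell, C_\ell \overline{X}_\ell)$. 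Call this common matrix $H_\ell$.

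Next I would track the last row. The update reads
\begin{align*}
[Y_{\ell+1}]_j = [Y_\ell]_j + r_\ell \sum_{q=1}^{n} [Y_\ell]_q [H_\ell]_{q,j},
\end{align*}
where the sum runs only to $n$ because of the mask $M$. The same formula holds for $\overline{Y}_\ell$ with identical $H_\ell$. Applying this for $j=1,\dots,n$ and using that $[Y_0]_{1:n} = [\overline{Y}_0]_{1:n} = (\ty{1},\dots,\ty{n})$, an induction shows $[Y_\ell]_{1:n} = [\overline{Y}_\ell]_{1:n}$ for every $\ell$. Substituting this equality into the $j=n+1$ update gives
\begin{align*}
[\overline{Y}_{\ell+1}]_{n+1} - [Y_{\ell+1}]_{n+1} = [\overline{Y}_\ell]_{n+1} - [Y_\ell]_{n+1},
\end{align*}
so this difference is a conserved quantity equal to its initial value $\ty{n+1} - 0 = \ty{n+1}$.

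Finally I would combine the pieces. The conservation law gives
\begin{align*}
[\overline{Z}_{k+1}]_{(d+1),(n+1)} = [Z_{k+1}]_{(d+1),(n+1)} + \ty{n+1},
\end{align*}
so the inner quantity in \eqref{e:old_icl} equals $[\overline{Y}_{k+1}]_{n+1}$. Because $I-M$ is the diagonal matrix with a single $1$ in position $(n+1,n+1)$, the product $(I-M)\overline{Y}_{k+1}^\top \overline{Y}_{k+1}(I-M)$ has its only nonzero entry at $(n+1,n+1)$ equal to $([\overline{Y}_{k+1}]_{n+1})^2$, so its trace equals $([\overline{Y}_{k+1}]_{n+1})^2$. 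Taking expectations yields the desired identity, noting that the expectations over $Z_0$ together with $\ty{n+1}$ and over $\overline{Z}_0$ describe the same underlying joint distribution. The main (mild) obstacle is being careful about which rows and columns are affected by the mask and by the block-diagonal value matrix; once the conservation law on $[\overline{Y}_\ell]_{n+1} - [Y_\ell]_{n+1}$ is isolated, everything else is bookkeeping.
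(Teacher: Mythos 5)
Your proof is correct and follows essentially the same approach as the paper: decompose the layer update into $X$- and $Y$-row dynamics under the block-diagonal value matrix, observe that the mask $M$ prevents the $(n{+}1)$th entry of $Y_\ell$ from feeding back, so the offset $[\overline{Y}_\ell]_{n+1} - [Y_\ell]_{n+1} = \ty{n+1}$ is conserved across layers, and finally rewrite the squared scalar as the trace $\tr\lrp{(I-M)\overline{Y}_{k+1}^\top \overline{Y}_{k+1}(I-M)}$. The only cosmetic difference is that the paper tracks the vector difference $\overline{Y}_\ell - Y_\ell = [0\ \cdots\ 0\ c]$ in one induction for generic $c$, whereas you split the same fact into two claims (agreement on the first $n$ entries, conservation of the last).
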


\begin{proof}
    Let $X_\ell \in \R^{d\times (n+1)}$ denote the first $d$ rows of $Z_\ell$ and let $Y_\ell \in \R^{1\times (n+1)}$ denote the last row of $Z_\ell$. Under Assumption \ref{ass:full_attention}, we can verify the following useful equivalent form of \eqref{e:dynamics_Z}:
    \begin{align*}
        \numberthis \label{e:dynamics_XY}
        & X_{\ell+1} = X_\ell + A_\ell X_\ell M \th\lrp{B_\ell X_\ell, C_\ell X_\ell}\\
        & Y_{\ell+1} = Y_\ell + r_\ell Y_\ell M \th\lrp{B_\ell X_\ell, C_\ell X_\ell}
    \end{align*}
    
    Let $c\in \R$ denote an arbitrary scalar. Let $\overline{X}_0 := X_0$ and let $\overline{Y}_0 = \begin{bmatrix}\ty{1} & \ty{2} & \cdots &\ty{n}& \ty{n+1} + c\end{bmatrix}$, i.e. $\overline{Y}_0$ is $\overline{Y}_0$ but with $c$ added to its last entry. Let $\overline{X}_\ell,\overline{Y}_\ell$ evolve under identical dynamics as \eqref{e:dynamics_XY} (but with initialization at $\overline{X}_0, \overline{Y}_0$):
    \begin{align*}
    & \overline{X}_{\ell+1} = \overline{X}_\ell + A_\ell \overline{X}_\ell M \th\lrp{B_\ell \overline{X}_\ell, C_\ell \overline{X}_\ell}\\
    & \overline{Y}_{\ell+1} = \overline{Y}_\ell + r_\ell \overline{Y}_\ell M \th\lrp{B_\ell \overline{X}_\ell, C_\ell \overline{X}_\ell}.
    \end{align*}

    Then for all $i$, (1) $\overline{X}_\ell = X_\ell$ and (2) $\overline{Y}_\ell - Y_\ell = \begin{bmatrix}0 & 0 & \cdots &0& c\end{bmatrix}$. 
    
    Statement (1) can be verified via simple induction. 
    
    Statement (2) follows from Statement (1) and induction: suppose (2) holds for some $i$. By definition of $M$, $\overline{Y}_\ell M$ has its $(n+1)^{th}$ entry zeroed out, thus by the inductive hypothesis, $\overline{Y}_\ell M = {Y}_\ell M$, and thus $\overline{Y}_{\ell+1} = Y_{\ell+1} + [0\ 0\cdots 0\ c]$.

    The lemma statement then follows by choosing $c = \ty{n+1}$: by (2) above, $\lrb{\overline{Z}_{k+1}}_{(d+1),(n+1)} =: \lrb{\overline{Y}_{k+1}}_{(n+1)} = \lrb{Y_{k+1}}_{(n+1)} + \ty{n+1}$. Plugging the above into the in-context loss defined in \eqref{d:ICL_loss} gives
    \begin{align*}
        f\left(V,Q,K\right) 
        =& \E_{\overline{Z}_0} \Bigl[ \left( \lrb{Z_{k+1}}_{(d+1),(n+1)} + \ty{n+1}  \right)^2\Bigr]\\
        =& \E_{\overline{Z}_0} \Bigl[ \left( \lrb{\overline{Z}_{k+1}}_{(d+1),(n+1)}\right)^2\Bigr]\\
        =& \E_{\overline{Z}_0} \Bigl[ \left( \lrn{\lrp{I-M} \overline{Y}_{k+1}^\top} \right)^2\Bigr]\\
        =& \E_{\overline{Z}_0} \Bigl[ \left( \tr\lrp{\lrp{I-M} \overline{Y}_{k+1}^\top \overline{Y}_{k+1}\lrp{I-M}} \right)^2\Bigr]
    \end{align*}
\end{proof}

\section{Theorem \ref{t:master_sparse}: Functional Gradient Descent is locally optimal under $A_\ell=0$ constraint.}
\label{s:main_theorem_sparse}
\begin{theorem}\label{t:master_sparse}
    Let $\th$ satisfy Assumption \ref{ass:th}, let $\tx{i}$'s satisfy Assumption \ref{ass:x_distribution} with matrix $\Sigma$, and $\ty{i}$'s satisfy Assumption \ref{ass:y_distribution}. With slight abuse of notation, let $f(r, B, C):= f\lrp{V = \lrbb{\bmat{0 & 0\\0 & r_\ell}}_{\ell=0\ldots k},B,C}$, where $f(V,B,C)$ is as defined in \eqref{d:ICL_loss}. Let $\S \subset \R^{(k+1) \times d \times d \times 2}$ denote a set of (Query, Key) matrices defined as follows: $(B,C) \in \S$ if and only if for all $\ell \in \lrbb{0\ldots k}$, there exist scalars $b_\ell, c_\ell \in \R$ such that $B_\ell = b_\ell \Sigma^{-1/2} $ and $C_\ell = c_\ell \Sigma^{-1/2}$. Then
    \begin{align*}
        \inf_{(r,B,C) \in \R^{k+1}\times \S} \sum_{\ell=0}^{k} \lrp{\partial_{r_\ell} f(r,B,C)}^2 + \lrn{\nabla_{B_\ell} f(r,B,C)}_F^2 + \lrn{\nabla_{C_\ell} f(r,B,C)}_F^2 = 0,
        \numberthis \label{e:sparse_theorem_condition}
    \end{align*}
    where $\nabla_{B_\ell} f$ denotes derivative wrt the Frobenius norm $\lrn{B_\ell}_F$ (same for $\nabla_{C_\ell}$).
\end{theorem}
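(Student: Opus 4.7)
The plan is to exhibit $\mathcal{S}$ as the fixed-point locus of a continuous $O(d)$-symmetry of the loss $f$ (restricted to the $A_\ell = 0$ constraint), and then combine this with the elementary fact that a smooth nonnegative function on Euclidean space has infimal gradient norm equal to zero.

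For each $U \in O(d)$ I would define the linear action
\begin{equation*}
\phi_U(r, B, C) := \lrp{r,\ \lrbb{U^\top B_\ell \Sigma^{1/2} U \Sigma^{-1/2}}_{\ell=0\ldots k},\ \lrbb{U^\top C_\ell \Sigma^{1/2} U \Sigma^{-1/2}}_{\ell=0\ldots k}}
\end{equation*}
and show $f \circ \phi_U = f$. The verification has three steps, each invoking one of the three assumptions. (i) By Assumption \ref{ass:x_distribution}, substituting $X \mapsto \Sigma^{1/2} U \Sigma^{-1/2} X$ inside the outer expectation leaves it unchanged. (ii) Under Assumption \ref{ass:full_attention} the covariate stream $X_\ell$ depends only on $X$ (not on $Y$), and the extra constraint $A_\ell = 0$ freezes it to $X_\ell = X$ for all $\ell$, so the $Y$-update in \eqref{e:dynamics_XY} is linear in $Y_0$ with a coefficient depending only on $X$. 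Hence the integrand of $f$ is a quadratic form in $Y$ given $X$, so the inner expectation $\E_{Y|X}[\cdot]$ depends on the conditional law only through $\Kmat(X) = \E_{Y|X}[Y^\top Y]$, and Assumption \ref{ass:y_distribution} is exactly the invariance $\Kmat(\Sigma^{1/2} U \Sigma^{-1/2} X) = \Kmat(X)$. (iii) Applying Assumption \ref{ass:th} with $S = U$ (so $S^\top = S^{-1} = U^\top$) gives
\begin{equation*}
\th\lrp{B_\ell \Sigma^{1/2} U \Sigma^{-1/2} X,\, C_\ell \Sigma^{1/2} U \Sigma^{-1/2} X} = \th\lrp{U^\top B_\ell \Sigma^{1/2} U \Sigma^{-1/2} X,\, U^\top C_\ell \Sigma^{1/2} U \Sigma^{-1/2} X},
\end{equation*}
so the attention evaluated at $(B, C)$ on the rotated $X$ agrees with the attention evaluated at $\phi_U(B, C)$ on the original $X$. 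Chaining (i)--(iii) yields the invariance.

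Next I would identify the fixed-point set of the $\phi_U$-action. Rearranging $U^\top B_\ell \Sigma^{1/2} U \Sigma^{-1/2} = B_\ell$ gives $U(B_\ell \Sigma^{1/2}) = (B_\ell \Sigma^{1/2}) U$ for every orthogonal $U$; Schur's lemma applied to the defining representation of $O(d)$ forces $B_\ell \Sigma^{1/2} = b_\ell I_d$, i.e.\ $B_\ell = b_\ell \Sigma^{-1/2}$, and the same reasoning gives $C_\ell = c_\ell \Sigma^{-1/2}$. Thus the fixed locus is exactly $\mathbb{R}^{k+1} \times \mathcal{S}$. Differentiating $f(\phi_U(\theta)) = f(\theta)$ at any $\theta \in \mathbb{R}^{k+1} \times \mathcal{S}$ produces the pullback identity $\phi_U^\top \nabla f(\theta) = \nabla f(\theta)$. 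A dual Schur-type argument (using that $\phi_U^\top$ acts on a matrix $A$ by $A \mapsto U A \Sigma^{-1/2} U^\top \Sigma^{1/2}$) shows $\nabla_{B_\ell} f(\theta) \in \mathrm{span}(\Sigma^{1/2})$ and likewise for $\nabla_{C_\ell} f(\theta)$. If $\theta$ is in addition a critical point of $f$ restricted to $\mathbb{R}^{k+1} \times \mathcal{S}$, then the scalar $b_\ell$-derivative vanishes, which is precisely the Frobenius pairing $\langle \nabla_{B_\ell} f(\theta),\, \Sigma^{-1/2}\rangle_F$; since $\langle \Sigma^{1/2},\, \Sigma^{-1/2}\rangle_F = \tr(I_d) = d > 0$, the scalar coefficient is zero and hence $\nabla_{B_\ell} f(\theta) = 0$ as a matrix. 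The same argument handles $C_\ell$ and $r_\ell$. Finally, $f|_{\mathbb{R}^{k+1} \times \mathcal{S}}$ is smooth and bounded below by $0$ on a Euclidean space of dimension $3(k+1)$, so Ekeland's variational principle supplies a sequence along which its gradient norm tends to zero, giving $\inf = 0$ as claimed.

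The main obstacle I anticipate is the ``twisted'' gradient equivariance step: because $\phi_U$ is not Frobenius-orthogonal (the $\Sigma^{1/2}$ on the right and $\Sigma^{-1/2}$ on the far right are not inverse transposes in the matrix space), the pullback-fixed subspace $\mathrm{span}(\Sigma^{1/2})$ does \emph{not} coincide with the tangent space $T_\theta \mathcal{S} = \mathrm{span}(\Sigma^{-1/2})$, so the usual one-line ``$f|_{\mathcal{S}}$-critical $\Rightarrow$ $f$-critical'' inference must be threaded through the nondegeneracy of $\langle \Sigma^{1/2}, \Sigma^{-1/2}\rangle_F = d$. A secondary point that requires care is step (ii): the reduction to a quadratic form in $Y$ is what lets the argument go through using only the covariance invariance of Assumption \ref{ass:y_distribution} (rather than full distributional invariance), and crucially relies on the constraint $A_\ell = 0$ so that $\th(B_\ell X_\ell, C_\ell X_\ell) = \th(B_\ell X, C_\ell X)$ transforms cleanly under $\phi_U$.
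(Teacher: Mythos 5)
Your proof is correct, and the invariance engine it runs on is the same as the paper's (the change of variables $X \mapsto \Sigma^{1/2} U \Sigma^{-1/2} X$ together with Assumption~\ref{ass:th} and the $A_\ell = 0$ freeze on the covariate stream), but your packaging is genuinely different and arguably cleaner. The paper's Proposition~\ref{p:master_sparse} encodes the same fact as a directional-derivative identity obtained by \emph{averaging} over a uniformly random orthogonal $U$ (using $\E_U[U^\top R \Sigma^{1/2} U \Sigma^{-1/2}] = \frac{1}{d}\tr(R\Sigma^{1/2})\Sigma^{-1/2}$), whereas you exhibit a global $O(d)$-symmetry $\phi_U$ of $f$ and deduce $\nabla_{B_\ell} f(\theta) \in \mathrm{span}(\Sigma^{1/2})$ at $\theta\in\S$ from $\phi_U^\top$-invariance of the gradient plus Schur's lemma — equivalent conclusions, but your version exposes the fixed-locus structure rather than leaving it implicit in an averaging computation. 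For the final step, the paper constructs an explicit $\S$-constrained gradient flow and integrates the resulting differential inequality, while you invoke Ekeland's variational principle on $f|_\S$; both are instances of "smooth bounded-below implies vanishing infimal gradient norm." One small tightening worth making in your write-up: you state the coefficient-vanishing argument only for exact critical points of $f|_\S$, but Ekeland only yields \emph{approximate} critical points, so you should make explicit the quantitative estimate $\|\nabla_{B_\ell} f(\theta)\|_F = \frac{\|\Sigma^{1/2}\|_F}{d}\,|\partial_{b_\ell}(f|_\S)(\theta)|$ (valid \emph{exactly} for any $\theta\in\S$, since $\nabla_{B_\ell} f(\theta) = g_\ell\Sigma^{1/2}$ with $\partial_{b_\ell}(f|_\S) = d\,g_\ell$) that passes the Ekeland sequence's vanishing constrained gradient through to the full gradient. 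You already flag the nondegeneracy $\langle\Sigma^{1/2},\Sigma^{-1/2}\rangle_F = d$ as the crux of why the two one-dimensional subspaces $\mathrm{span}(\Sigma^{1/2})$ and $T_\theta\S = \mathrm{span}(\Sigma^{-1/2})$ can differ yet still pair nontrivially — that observation is exactly right and is what makes the inference from constrained to unconstrained stationarity work despite $\phi_U$ not being Frobenius-orthogonal.
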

\begin{remark}
    \label{r:overparameterized}
    By Assumption \ref{ass:th}, for any invertible $\Lambda \in \R^{d\times d}$, $f(r, B, C) = f(r, \Lambda^\top B, \Lambda^{-1} C)$. Thus the same result holds for $S_\Lambda = \lrbb{B_\ell = b_\ell \Lambda^\top \Sigma^{-1/2}, C_\ell = c_\ell \Lambda^{-1} \Sigma^{-1/2}}_{\ell=0\ldots k}$.
\end{remark}

\subsection{Proof of Proposition \ref{c:multihead}}
\label{ss:proof:c:multihead}
The proof of (A) is identical to the proof of Proposition \ref{p:rkhs_descent_transformer_construction} up to step \eqref{e:alskda:0}. We provide the remainder of the proof below:
\begin{align*}
    & \TF_{\ell+1}(x; (V,B,C)\vert \tz{1}\ldots \tz{n}) \\
    =& \TF_{\ell}(x; (V,B,C)\vert \tz{1}\ldots \tz{n}) - r_\ell \sum_{i=1}^n \textcolor{red}{\sum_{s=1}^H} \tY{i}_\ell \lrb{\th^{\textcolor{red}{s}}\lrp{\textcolor{red}{G^s}X_0, \textcolor{red}{G^s}X_0}}_{i,(n+1)}\\
    =& \TF_{\ell}(x; (V,B,C)\vert \tz{1}\ldots \tz{n}) - r_\ell \sum_{i=1}^n \textcolor{red}{\sum_{s=1}^H} \tY{i}_\ell \K^{\textcolor{red}{s}}(\textcolor{red}{G^s}\tx{i},\textcolor{red}{G^s}x)\\
    =& \TF_{\ell}(x; (V,B,C)\vert \tz{1}\ldots \tz{n}) - r_\ell \sum_{i=1}^n \tY{i}_\ell \K^{\textcolor{red}{\diamond}}(\tx{i},x)\\
    =& -f_{\ell} (x) - r_\ell \sum_{i=1}^n \lrp{\ty{i} - f_{\ell} (x)} \K^{\textcolor{red}{\diamond}}(\tx{i},x)\\
    =& -f_{\ell+1}(x).
\end{align*}
We highlight in red the differences from the proof of Proposition \ref{p:rkhs_descent_transformer_construction}. We use the definition of $\th^s$ and the definition that $\K^{\diamond}(u,v):= \sum_{s=1}^H \K^s(G^s u, G^s v)$.

The proof of (B) is entirely identical to proof of Proposition \ref{p:matching_h_k_optimality}, only replacing $\K$ by $\K^{\diamond}$, and using \eqref{e:dynamics_Z_multihead} instead of \eqref{e:dynamics_Z}.

\subsection{Proof of Theorem \ref{t:master_sparse}}
\label{ss:proof:t:master_sparse}
Let $r(0) \in \R, (B(0), C(0)) \in \S$. Let us define the $\S$-gradient-flow as
    \begin{align*}
        & \frac{d}{dt} r_\ell(t) = - \partial_{r_\ell} f(r(t), B(t), C(t))\\
        & \frac{d}{dt} B_\ell(t) = \tilde{U}_\ell(t)\\
        & \frac{d}{dt} C_\ell(t) = \tilde{W}_\ell(t),
        \numberthis \label{e:t:asdmqlwd:7}
    \end{align*}
    where for $\ell = 0\ldots k$, $\tilde{U}$ and $\tilde{W}$ are defined as
    \begin{alignat*}{2}
        & \tilde{u}_\ell(t) := -\frac{1}{d} \tr\lrp{\nabla_{B_\ell} f(r(t),B(t),C(t)) \Sigma^{1/2}}
        \qquad && \tilde{U}_\ell(t) := \tilde{u}_\ell(t) \Sigma^{-1/2}\\
        & \tilde{w}_\ell(t) := -\frac{1}{d} \tr\lrp{\nabla_{C_\ell} f(r(t),B(t),C(t)) \Sigma^{1/2}}
        \qquad && \tilde{W}_\ell(t) := \tilde{w}_\ell(t) \Sigma^{-1/2}.
    \end{alignat*}
    It follows by definition of $\tilde{U}$ and $\tilde{W}$ that $(B(t), C(t)) \in \S$ for all $t$. We will show that at any time $t$, 
    \begin{align*}
        \frac{d}{dt} f(r(t), B(t), C(t)) \leq& - \sum_{\ell=0}^{k} \lrp{\partial_{r_\ell} f(r(t),B(t),C(t))}^2 \\
        &\qquad - \sum_{\ell=0}^k \lrn{\nabla_{B_\ell} f(r(t),B(t),C(t))}_F^2 - \sum_{\ell=0}^k \lrn{\nabla_{C_\ell} f(r(t),B(t),C(t))}_F^2.
        \numberthis \label{e:t:asdmqlwd:0}
    \end{align*}
    Let $\lin{A,B}_{\tr} := \tr\lrp{A^\top B}$. By definition of the dynamics in \eqref{e:t:asdmqlwd:7},
    \begin{align}
        &\frac{d}{dt} f(r(t), B(t), C(t))\\
        \label{e:t:asdmqlwd:1}
        =& \sum_{\ell=0}^k \partial_{r_\ell} f(r(t),B(t),C(t)) \cdot \lrp{- \partial_{r_\ell} f(r(t), B(t), C(t))}\\
        \label{e:t:asdmqlwd:2}
        &\quad + \sum_{\ell=0}^k \lin{\nabla_{B_\ell} f(r(t),B(t),C(t)), \tilde{U}_\ell(t)}_{\tr}\\
        \label{e:t:asdmqlwd:3}
        &\quad + \sum_{\ell=0}^k \lin{\nabla_{C_\ell} f(r(t),B(t),C(t)), \tilde{W}_\ell(t)}_{\tr}.
    \end{align}
    We immediately verify that $\eqref{e:t:asdmqlwd:1} = - \sum_{\ell=0}^k \lrp{\partial{r_\ell} f(r(t),B(t),C(t))}^2$. By \eqref{e:t:unkqdwon:1} from Proposition \ref{p:master_sparse}, applied separately to each layer $\ell=0\ldots k$, 
    \begin{align*}
        \eqref{e:t:asdmqlwd:2}
        \leq& \sum_{\ell=0}^k \lin{\nabla_{B_\ell} f(r(t),B(t),C(t)), - \nabla_{B_\ell} f(r(t), B(t), C(t))}_{\tr}\\
        =& - \sum_{\ell=0}^k \lrn{\nabla_{B_\ell} f(r(t),B(t),C(t))}_F^2.
    \end{align*}
    Similarly, by \eqref{e:t:unkqdwon:11} from Proposition \ref{p:master_sparse}, applied separately to each layer $\ell=0\ldots k$,  
    \begin{align*}
        \eqref{e:t:asdmqlwd:3}
        \leq& \sum_{\ell=0}^k \lin{\nabla_{C_\ell} f(r(t),B(t),C(t)), - \nabla_{C_\ell} f(r(t), B(t), C(t))}_{\tr}\\
        =& - \sum_{\ell=0}^k \lrn{\nabla_{C_\ell} f(r(t),B(t),C(t))}_F^2.
    \end{align*}
    Combining the above bounds gives \eqref{e:t:asdmqlwd:0}. Suppose \eqref{e:sparse_theorem_condition} does not hold. Then there exists a positive constant $c>0$ such that for all $t$,
    \begin{align*}
        \sum_{\ell=0}^{k} \lrp{\partial_{r_\ell} f(r(t),B(t),C(t))}^2 + \lrn{\nabla_{B_\ell} f(r(t),B(t),C(t))}_F^2 + \lrn{\nabla_{C_\ell} f(r(t),B(t),C(t))}_F^2 \geq c.
    \end{align*}
    Then by \eqref{e:t:asdmqlwd:0}, $\frac{d}{dt} f(r(t), B(t), C(t)) \leq -c$ for all $t$. This contradicts the fact that $f(\cdot)$ is bounded below by $0$ (see \eqref{d:ICL_loss}). Thus we prove \eqref{e:sparse_theorem_condition}.
\subsection{Key Lemmas}
\begin{proposition}
    \label{p:master_sparse}
    Let $\th$ satisfy Assumption \ref{ass:th}, let $\tx{i}$'s satisfy Assumption \ref{ass:x_distribution} with matrix $\Sigma$, and $\ty{i}$'s satisfy Assumption \ref{ass:y_distribution}. Let $V\in \R^{(k+1) \times (d+1) \times (d+1)}$ satisfy, for all $\ell=0\ldots k$, $V_\ell = \bmat{0 & 0\\0 & r_\ell}$, where $r_\ell$ are arbitrary scalars. Let $(B,C)\in \R^{(k+1)\times d \times d\times 2}$ satisfy, for all $\ell=0\ldots k$, 
    \begin{align*}
        B_\ell = b_\ell \Sigma^{-1/2} \qquad C_\ell = c_\ell \Sigma^{-1/2},
        \numberthis \label{ass:sparse_ABC_shape}
    \end{align*}
    where $b_\ell, c_\ell \in \R$ are scalars. Let $j\in \lrbb{0\ldots k}$ be an arbitrary but fixed layer index. For $S\in \R^{d\times d}$, let $B_j(S) := B_j + S$, and let $B_\ell(S) := B_\ell$ for $\ell\neq j$. Let $B(S) := \lrbb{B_\ell(S)}_{\ell=0\ldots k}$. Recall $f(V,B,C)$ as defined in \eqref{d:ICL_loss}. Let $R\in \R^{d\times d}$ be an arbitrary matrix. Let 
    \begin{align*}
        \tilde{r} := \frac{1}{d} \tr\lrp{R\Sigma^{1/2}} \qquad \qquad \tilde{R} := \tilde{r} \Sigma^{-1/2}
        \numberthis \label{e:t:unkqdwon:8}
    \end{align*}
    Then
    \begin{align*}
        \numberthis \label{e:t:unkqdwon:1}
        \at{\frac{d}{dt} f(V, B(tR), C)}{t=0} \leq \at{\frac{d}{dt} f(V, B(t\tilde{R}), C)}{t=0}.
    \end{align*}

    Similarly, let $C_j(S) := C_j + S$, and $C_\ell(S) := C_\ell$ for $\ell\neq j$, and let $C(S) := \lrbb{C_\ell(S)}_{\ell=0\ldots k}$, then
    \begin{align*}
        \numberthis \label{e:t:unkqdwon:11}
        \at{\frac{d}{dt} f(V, B, C(tR))}{t=0} \leq \at{\frac{d}{dt} f(V, B, C(t\tilde{R}))}{t=0}.
    \end{align*}
\end{proposition}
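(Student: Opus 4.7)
I will prove the stronger statement that the two derivatives in \eqref{e:t:unkqdwon:1} are in fact equal; the claimed inequality then follows immediately. The core idea is to exhibit a continuous symmetry of the loss $f$, indexed by the orthogonal group $O(d)$, that fixes the base point $(B_\ell = b_\ell\Sigma^{-1/2}, C_\ell = c_\ell\Sigma^{-1/2})$ and acts on a single-layer perturbation $R$ by $R \mapsto U^\top R \Sigma^{1/2} U \Sigma^{-1/2}$. Averaging this action against Haar measure on $O(d)$ and invoking Schur's lemma collapses $R$ to the scalar component $\tilde R = \tfrac{\tr(R\Sigma^{1/2})}{d}\Sigma^{-1/2}$, giving the claim. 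The main obstacle is finding the right \emph{combined} symmetry: no single one of Assumptions \ref{ass:x_distribution}, \ref{ass:y_distribution}, \ref{ass:th} fixes the base point, and the working transformation must interleave the substitution $X \mapsto SX$ with $S := \Sigma^{1/2} U \Sigma^{-1/2}$ with a per-layer Assumption \ref{ass:th} gauge change using $\Lambda = U$.

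\textbf{Step 1: symmetry of $f$.} Because $V_\ell$ has a zero top-left block by hypothesis, the first $d$ rows of $Z_\ell$ satisfy $X_\ell = X_0$ for all $\ell$. By Lemma \ref{l:icl_trace_form} and the tower property, taking the $Y\mid X$ expectation first (using Assumption \ref{ass:y_distribution}) gives
\begin{align*}
f(V,B,C) = \E_X\bigl[\tr\bigl((I-M) P(X)^\top \Kmat(X) P(X) (I-M)\bigr)\bigr], \qquad P(X) := \prod_{\ell=0}^{k}\bigl(I + r_\ell M\,\th(B_\ell X, C_\ell X)\bigr).
\end{align*}
For orthogonal $U$ and $S := \Sigma^{1/2} U \Sigma^{-1/2}$: Assumption \ref{ass:x_distribution} preserves the outer expectation under $X \mapsto SX$; Assumption \ref{ass:y_distribution} gives $\Kmat(SX) = \Kmat(X)$; and Assumption \ref{ass:th} applied at each layer with $\Lambda = U$ (so $\Lambda^{-1} = U^\top$) rewrites $\th(B_\ell SX, C_\ell SX) = \th\bigl((U^\top B_\ell S) X,\,(U^\top C_\ell S) X\bigr)$. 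Combining these yields the invariance
\begin{align*}
f(V, B, C) = f\bigl(V,\,(U^\top B_\ell S)_\ell,\,(U^\top C_\ell S)_\ell\bigr), \qquad U \in O(d),
\end{align*}
and a direct computation confirms the base point is fixed: $U^\top(b_\ell \Sigma^{-1/2}) S = b_\ell U^\top U \Sigma^{-1/2} = b_\ell \Sigma^{-1/2}$, and analogously for $C_\ell$.

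\textbf{Step 2: differentiate and average.} A perturbation $B_j \mapsto B_j + tR$ at layer $j$ alone is mapped by the symmetry to $B_j + tR_U$ at layer $j$ alone, with $R_U := U^\top R \Sigma^{1/2} U \Sigma^{-1/2}$ (every other coordinate already being at the base point, which is fixed). Differentiating the invariance in $t$ at $t=0$ yields $\at{\tfrac{d}{dt} f(V,B(tR),C)}{t=0} = \at{\tfrac{d}{dt} f(V,B(tR_U),C)}{t=0}$ for every $U \in O(d)$. Averaging over the Haar measure on $O(d)$ and swapping the expectation with the (linear-in-direction) directional derivative gives
\begin{align*}
\at{\tfrac{d}{dt} f(V,B(tR),C)}{t=0} = \at{\tfrac{d}{dt} f\bigl(V,B(t\,\E_U[R_U]),C\bigr)}{t=0}.
\end{align*}
Schur's lemma provides $\E_U[U^\top M U] = \tfrac{\tr M}{d}\,I$ for any fixed $M$; with $M = R\Sigma^{1/2}$ this gives $\E_U[R_U] = \tilde r\,\Sigma^{-1/2} = \tilde R$, proving \eqref{e:t:unkqdwon:1} with equality. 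The argument for \eqref{e:t:unkqdwon:11} is identical, replacing $B_j$ by $C_j$ throughout.
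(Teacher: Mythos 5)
Your proof is correct and, at its core, uses the same mechanism as the paper's: the rotation $\US := \Sigma^{1/2} U \Sigma^{-1/2}$ on $X_0$ (justified by Assumption \ref{ass:x_distribution}), the invariance $\Kmat(\US X) = \Kmat(X)$ (Assumption \ref{ass:y_distribution}), and the per-layer gauge transformation $\Lambda = U$ from Assumption \ref{ass:th}, followed by Haar averaging to collapse the perturbation $R$ to its trace component $\tilde R$. The paper proves the same identity $\at{\tfrac{d}{dt}f(V,B(tR),C)}{t=0} = \at{\tfrac{d}{dt}f(V,B(tR_U),C)}{t=0}$ with $R_U = U^\top R \US$, but does so at the infinitesimal level, computing Jacobians $\jA$ of $\th$ explicitly and establishing \eqref{e:t:unkqdwon:3}--\eqref{e:t:unkqdwon:4} by chain rule. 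You instead establish the \emph{finite} invariance $f(V,B,C) = f\bigl(V,(U^\top B_\ell \US)_\ell,(U^\top C_\ell \US)_\ell\bigr)$ for all $(B,C)$, check the base point is fixed, and only then differentiate and average; this packages the same calculation more cleanly. You also make explicit that the statement actually holds with equality, which the paper's proof also delivers even though the proposition is stated only as an inequality. The one small presentational gap is that you should note the identity $\th(B_\ell S X, C_\ell S X) = \th(U^\top B_\ell S X, U^\top C_\ell S X)$ requires Assumption \ref{ass:th} applied with $\Lambda = U$, hence $\Lambda^{-1} = U^\top$; you state this in passing, but it is exactly the place where orthogonality of $U$ (and not mere invertibility) enters, and without it the $C$-side would acquire $U^{-1}$ rather than $U^\top$.
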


\begin{proof}[Proof of Proposition \ref{p:master_sparse}]
   The proof of \eqref{e:t:unkqdwon:11} is identical to that of \eqref{e:t:unkqdwon:1}, so we only present the proof of \eqref{e:t:unkqdwon:1}. 

    \paragraph{Loss Reformulation:}
    Let us consider the reformulation of the in-context loss $f$ presented in Lemma \ref{d:ICL_loss}. Specifically, let $\overline{Z}_0$ be defined as
    \begin{align*}
    \overline{Z}_0 = \begin{bmatrix}
    \tx{1} & \tx{2} & \cdots & \tx{n} &\tx{n+1} \\
    \ty{1} & \ty{2} & \cdots &\ty{n}& \ty{n+1}
    \end{bmatrix} \in \R^{(d+1) \times (n+1)},
    \end{align*}
    Let $\overline{Z}_i$ denote the output of the $(i-1)^{th}$ layer of the linear transformer (as defined in \eqref{e:dynamics_Z}, initialized at $\overline{Z}_0$). For the rest of this proof, we will drop the bar, and simply denote $\overline{Z}_i$ by $Z_i$. Let $X_i\in \R^{d\times (n+1)}$ denote the first $d$ rows of $Z_i$ and let $Y_i\in \R^{1\times (n+1)}$ denote the $(d+1)^{th}$ row of $Z_k$. Under the assumption that $V_\ell = \bmat{0&0\\0& r_\ell}$ in the lemma statement, we verify that, for any $\ell \in \lrbb{0\ldots k}$,
    \begin{align*}
    & {X}_{\ell+1} = {X}_0\\
    & {Y}_{\ell+1} = {Y}_\ell + r_\ell {Y}_\ell M \th\lrp{B_\ell {X}_\ell, B_\ell {X}_\ell} = Y_0 \prod_{\ell = 0}^i \lrp{I + r_\ell M \th\lrp{B_\ell X_0, C_\ell X_0}}.
    \numberthis \label{e:t:unkqdwon:10}
    \end{align*}
    By Lemma \ref{l:icl_trace_form}, the in-context loss defined in \eqref{d:ICL_loss} is equivalent to 
    \begin{align*}
        f(V,B,C) = \E_{Z_0} \lrb{\tr\lrp{\lrp{I-M}{Y}_{k+1}^\top {Y}_{k+1}\lrp{I-M}}},
    \end{align*}
    where $Y_{k+1}$ is as defined in \eqref{e:t:unkqdwon:10}. We will now verify \eqref{e:t:unkqdwon:1}

    We will introduce one more piece of notation: for any $S\in \R^{d\times d}$, let 
    \begin{align*}
        G(X, S) := \prod_{\ell = 0}^k \lrp{I + r_\ell M \th\lrp{B_\ell(S) X, C_\ell X}},
    \end{align*}
    so that
    \begin{align*}
        f(V,B(S),C) =& \E_{Z_0} \lrb{\tr\lrp{\lrp{I-M}G(X, S)^\top {Y}_{0}^\top {Y}_{0} G(X, S) \lrp{I-M}}}\\
        =& \E_{X_0} \lrb{\tr\lrp{\lrp{I-M}G(X, S)^\top \Kmat(X_0) G(X, S) \lrp{I-M}}},
    \end{align*}
    where recall that $\Kmat(X_0)\in \R^{(n+1)\times (n+1)}$ is as defined in Assumption \ref{ass:y_distribution}. The second equality uses the assumption on distribution of $Y_0$ conditioned on $X_0$, as specified in Assumption \ref{ass:y_distribution}. Let $U$ denote a uniformly randomly sampled orthogonal matrix. Let $\US := \Sigma^{1/2} U \Sigma^{-1/2}$, so that $\US^{-1} = \Sigma^{1/2} U^\top \Sigma^{-1/2}$. Using the fact that $X_0 \overset{d}{=} \US X_0$, we can verify
    \begin{align*}
        \at{\frac{d}{dt} f(V,B(tR),C)}{t=0}
        =& \at{\frac{d}{dt} \E_{X_0} \lrb{\tr\lrp{\lrp{I-M}G(X_0, tR)^\top \Kmat(X_0) G(X_0, tR) \lrp{I-M}}}}{t=0}\\
        =& 2\E_{X_0} \lrb{\tr\lrp{\lrp{I-M}G(X_0, 0)^\top \Kmat(X_0) \at{\frac{d}{dt} G(X_0, tR)}{t=0} \lrp{I-M}}}\\
        =& 2\E_{X_0, U} \lrb{\tr\lrp{\lrp{I-M}G(\US X_0, 0)^\top \Kmat(X_0) \at{\frac{d}{dt} G(\US X_0, tR)}{t=0} \lrp{I-M}}}.
        \numberthis \label{e:t:unkqdwon:2}
    \end{align*}
    The last equality uses the assumption that $\K\lrp{\US X} = \K\lrp{X}$ from Assumption \ref{ass:y_distribution}.

    We will now show the following useful identities:
    \begin{align}
        \label{e:t:unkqdwon:3}
        & G(\US X_0, 0) = G(X_0, 0)\\
        \label{e:t:unkqdwon:4}
        & \at{\frac{d}{dt} G(\US X_0, tR)}{t=0} = \at{\frac{d}{dt} G(X_0, t \US^\top R \US)}{t=0} 
    \end{align}
    A useful intermediate identity is 
    \begin{align*}
        B_\ell \US = b_\ell \Sigma^{-1/2} \Sigma^{1/2} U \Sigma^{-1/2} = U B_\ell\\
        C_\ell \US = c_\ell \Sigma^{-1/2} \Sigma^{1/2} U \Sigma^{-1/2} = U C_\ell.
        \numberthis \label{e:t:unkqdwon:6}
    \end{align*}
    In the above, we crucially use the assumed form of $B_\ell, C_\ell$ from the \eqref{ass:sparse_ABC_shape}. We can now verify \eqref{e:t:unkqdwon:3}, which follows almost immediately from \eqref{e:t:unkqdwon:6}:
    \begin{align*}
        G(\US X_0, 0) 
        =& \prod_{\ell = 0}^k \lrp{I + r_\ell  M \th\lrp{B_\ell \US X_0, C_\ell \US X_0}} \\
        =& \prod_{\ell = 0}^k \lrp{I + r_\ell M \th\lrp{U B_\ell X_0, U C_\ell X_0}}\\
        =& \prod_{\ell = 0}^k \lrp{I + r_\ell M \th\lrp{B_\ell X_0, C_\ell X_0}} = G(X_0, 0),
    \end{align*}
    where the second equality uses \eqref{e:t:unkqdwon:6}, and the third equality uses the invariance of $\th$ from Assumption \ref{ass:th}.

    We now begin the verification of \eqref{e:t:unkqdwon:4}. To do so, let $\jA : \R^{d \times (n+1)} \to \R^{(n+1)\times (n+1)}$ denote the Jacobian of $\th$ wrt its first argument, evaluated at $\lrp{B_\ell X_0, C_\ell X_0}$. In more precise notation, for any $U,V,T\in \R^{d\times (n+1)}$, $\jA(U,V)\lrb{T} := \at{\frac{d}{dt} \tilde{h}\lrp{U + T, V}}{t=0}$. We verify the following useful identity: for any $S \in R^{d\times d}$, 
    \begin{align*}
        & \jA(B_\ell \US X_0, C_\ell \US X_0)\lrb{S \US X_0}\\
        =& \at{\frac{d}{dt} \th\lrp{ U B_\ell X_0 + tS\US X_0, U C_\ell X_0}}{t=0}\\
        =& \at{\frac{d}{dt} \th\lrp{ B_\ell X_0+ t  U^\top S \US X_0, C_\ell X_0}}{t=0}\\
        =& \jA(B_\ell X_0, C_\ell X_0)\lrb{U^\top S \US X_0},
        \numberthis \label{e:t:unkqdwon:7}
    \end{align*}
    where the first equality is by \eqref{e:t:unkqdwon:6}, the second equality is by Assumption \ref{ass:th}, the third equality is by definition of $\jA$. The identity \eqref{e:t:unkqdwon:4} then follows easily from chain rule and \eqref{e:t:unkqdwon:7}:
    \begin{align*}
        & \at{\frac{d}{dt} G(\US X_0, tR)}{t=0}\\
        =& \lrp{\prod_{\ell=0}^{j-1} \lrp{I + M\th\lrp{B_\ell \US X_0, C_\ell \US X_0}}} M \jA\lrp{B_j \US X_0, C_j \US X_0}\lrb{tR} \lrp{\prod_{\ell=j+1}^{k} \lrp{I + M\th\lrp{B_\ell \US X_0, C_\ell \US X_0}}}\\
        =& \lrp{\prod_{\ell=0}^{j-1} \lrp{I + M\th\lrp{B_\ell X_0, C_\ell X_0}}} M \jA\lrp{B_j \US X_0, C_j \US X_0}\lrb{tR} \lrp{\prod_{\ell=j+1}^{k} \lrp{I + M\th\lrp{B_\ell X_0, C_\ell X_0}}}\\
        =& \lrp{\prod_{\ell=0}^{j-1} \lrp{I + M\th\lrp{B_\ell X_0, C_\ell X_0}}} M \jA\lrp{B_j X_0, C_j X_0}\lrb{tU^\top R \US} \lrp{\prod_{\ell=j+1}^{k} \lrp{I + M\th\lrp{B_\ell X_0, C_\ell X_0}}}\\
        =& \at{\frac{d}{dt} G(\US X_0, tU^\top R \US)}{t=0}
    \end{align*}
    In the above, the second equality uses \eqref{e:t:unkqdwon:6} and Assumption \ref{ass:th}. The third equality uses \eqref{e:t:unkqdwon:7}. The fourth equality again uses chain rule. This concludes the proof of \eqref{e:t:unkqdwon:4}.

    We will now continue from \eqref{e:t:unkqdwon:2}:
    \begin{align*}
        \at{\frac{d}{dt} f(V,B(tR),C)}{t=0} 
        = \eqref{e:t:unkqdwon:2}
        =& 2\E_{X_0, U} \lrb{\tr\lrp{\lrp{I-M}G(\US X_0, 0)^\top \Kmat \at{\frac{d}{dt} G(\US X_0, tR)}{t=0} \lrp{I-M}}}\\
        =& 2\E_{X_0, U} \lrb{\tr\lrp{\lrp{I-M}G(X_0, 0)^\top \Kmat \at{\frac{d}{dt} G(X_0, t U^\top R \US)}{t=0} \lrp{I-M}}}\\
        =& 2\E_{X_0} \lrb{\tr\lrp{\lrp{I-M}G(X_0, 0)^\top \Kmat \at{\frac{d}{dt} G(X_0, t \E_{U}\lrb{U^\top R \US})}{t=0} \lrp{I-M}}}\\
        =& 2\E_{X_0} \lrb{\tr\lrp{\lrp{I-M}G(X_0, 0)^\top \Kmat \at{\frac{d}{dt} G(X_0, t \tilde{R})}{t=0} \lrp{I-M}}}\\
        =& \at{\frac{d}{dt} f(V,B(t\tilde{R}), C)}{t=0} 
    \end{align*}
    In the above, the second equality is by plugging in \eqref{e:t:unkqdwon:3} and \eqref{e:t:unkqdwon:4}. The third equality uses the fact that for any $S$, $\at{\frac{d}{dt} G(X_0, t S)}{t=0}$ is linear in $S$ (and jointly continuously differentiable in both $S$ and $t$). The fourth equality uses the definition of $\tilde{R}$ from \eqref{e:t:unkqdwon:8}. This concludes the proof of \eqref{e:t:unkqdwon:1}.
\end{proof}

\section{Theorem \ref{t:master_full}: locally optima when $A_\ell$ are unconstrained.}
\label{s:main_theorem_full}
\begin{theorem}\label{t:master_full}
    Let $\th$ satisfy Assumption \ref{ass:th}, let $\tx{i}$'s satisfy Assumption \ref{ass:x_distribution} with matrix $\Sigma$, and $\ty{i}$'s satisfy Assumption \ref{ass:y_distribution}. With abuse of notation, let $f(r, A, B, C):= f\lrp{V = \lrbb{\bmat{A_\ell & 0\\0 & r_\ell}}_{\ell=0\ldots k},B,C}$, where $f(V,B,C)$ is as defined in \eqref{d:ICL_loss}.

    Let $\S \subset \R^{(k+1) \times d \times d \times 3}$ denote a set of matrices defined as follows: $(A,B,C) \in \S$ if and only if for all $\ell \in \lrbb{0\ldots k}$, there exist scalars $a_\ell, b_\ell, c_\ell \in \R$ such that $A_\ell = a_\ell I, B_\ell = b_\ell \Sigma^{-1/2} $ and $C_\ell = c_\ell \Sigma^{-1/2}$. Then
    \begin{align*}
        \inf_{(r,A,B,C) \in \R^{k+1}\times \S} \sum_{\ell=0}^{k} \lrp{\partial_{r_\ell} f(r,A, B,C)}^2 + \lrn{\nabla_{A_\ell} f(r,A,B,C)}_F^2 + \lrn{\nabla_{B_\ell} f(r,A,B,C)}_F^2 + \lrn{\nabla_{C_\ell} f(r,A,B,C)}_F^2 = 0,\\
        \ \numberthis \label{e:full_theorem_condition}
    \end{align*}
    where $\nabla_{A_\ell} f$ denotes derivative wrt the Frobenius norm $\lrn{A_\ell}_F$ (same for $\nabla_{B_\ell}$ and $\nabla_{C_\ell}$).
\end{theorem}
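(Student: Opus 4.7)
The plan is to mirror the structure of the proof of Theorem \ref{t:master_sparse} from Section \ref{ss:proof:t:master_sparse}: build a constrained gradient flow that remains in $\S$ and along which the loss decreases at rate at least $\sum_\ell \lrp{\partial_{r_\ell} f}^2 + \lrn{\nabla_{A_\ell} f}_F^2 + \lrn{\nabla_{B_\ell} f}_F^2 + \lrn{\nabla_{C_\ell} f}_F^2$. Since $f\geq 0$, this quantity cannot remain bounded away from $0$ along the flow, forcing \eqref{e:full_theorem_condition}. Concretely, I parameterize $\S$ by the scalars $(r_\ell, a_\ell, b_\ell, c_\ell)_{\ell=0\ldots k}$ and impose the flow
\begin{align*}
    \tfrac{d}{dt} r_\ell = -\partial_{r_\ell} f, \quad \tfrac{d}{dt} a_\ell = -\tfrac{1}{d}\tr\lrp{\nabla_{A_\ell} f}, \quad \tfrac{d}{dt} b_\ell = -\tfrac{1}{d}\tr\lrp{\nabla_{B_\ell} f\cdot \Sigma^{1/2}}, \quad \tfrac{d}{dt} c_\ell = -\tfrac{1}{d}\tr\lrp{\nabla_{C_\ell} f\cdot \Sigma^{1/2}},
\end{align*}
so that the induced matrix updates $\tilde{A}_\ell = \lrp{\tfrac{d}{dt} a_\ell} I$, $\tilde{B}_\ell = \lrp{\tfrac{d}{dt} b_\ell}\Sigma^{-1/2}$, $\tilde{C}_\ell = \lrp{\tfrac{d}{dt} c_\ell}\Sigma^{-1/2}$ keep the trajectory inside $\S$.

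The heart of the proof is to establish the analog of Proposition \ref{p:master_sparse} that additionally handles the $A_\ell$ direction. Specifically, for any $R\in\R^{d\times d}$ and any layer $j$, I would show
\begin{align*}
    \at{\tfrac{d}{dt} f\lrp{A(tR),B,C}}{t=0} = \at{\tfrac{d}{dt} f\lrp{A(t\tilde R^A),B,C}}{t=0}, \qquad \tilde R^A := \tfrac{1}{d}\tr(R)\, I,
\end{align*}
together with the $B_j, C_j$ versions already in the spirit of Proposition \ref{p:master_sparse} with $\tilde R^{B,C} := \tfrac{1}{d}\tr(R\Sigma^{1/2})\Sigma^{-1/2}$. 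Plugging $R=-\nabla f$ into each identity yields $\langle \nabla_{A_\ell} f, \tilde{A}_\ell\rangle_\tr = -\lrn{\nabla_{A_\ell} f}_F^2$, and analogously for $B_\ell, C_\ell$, from which $\tfrac{d}{dt} f \leq -\sum_\ell \lrp{\cdots}$ follows as in Theorem \ref{t:master_sparse}, and the boundedness argument closes the proof.

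All three statements I would prove via the $\US$-averaging trick used in Proposition \ref{p:master_sparse}, with $\US := \Sigma^{1/2} U \Sigma^{-1/2}$ and $U$ a uniformly random orthogonal matrix. Using $X_0 \overset{d}{=} \US X_0$ (Assumption \ref{ass:x_distribution}) and $\Kmat(\US X) = \Kmat(X)$ (Assumption \ref{ass:y_distribution}), the derivative becomes an expectation over $U$, and the three commutation identities $A_\ell \US = \US A_\ell$ (since $A_\ell = a_\ell I$), $B_\ell \US = U B_\ell$, $C_\ell \US = U C_\ell$, combined with Assumption \ref{ass:th}, reduce the derivative at $\US X_0$ to a derivative at $X_0$ in a conjugated direction: $R \mapsto U^\top R \US$ (for $B, C$, as in Proposition \ref{p:master_sparse}) or $R \mapsto \US^{-1} R \US$ (for $A$). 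The standard identity $\E_U[U^\top M U] = \tfrac{\tr M}{d} I$ then gives $\E_U[\US^{-1} R \US] = \tfrac{\tr R}{d} I = \tilde{R}^A$ and $\E_U[U^\top R \US] = \tfrac{1}{d}\tr(R\Sigma^{1/2})\Sigma^{-1/2} = \tilde{R}^{B,C}$, as required.

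The main obstacle, and the essential point where Theorem \ref{t:master_full} goes beyond Theorem \ref{t:master_sparse}, is the forward-propagation step. In the sparse case $A_\ell = 0$, so $X_\ell \equiv X_0$ and a perturbation at layer $j$ affected only the $j$-th factor of the product $G$. Here $X_\ell$ evolves through the layers, so a perturbation of $A_j, B_j,$ or $C_j$ propagates into all subsequent $X_{j+1},\ldots,X_k$ and thus into every subsequent factor defining $Y_{k+1}$. The delicate step is an induction on $\ell$ showing simultaneously (i) the trajectory symmetry $X_\ell(\US X_0) = \US X_\ell(X_0)$ when parameters lie in $\S$, and (ii) a matching identity for the Jacobian $\jA$, namely the differentiated form of Assumption \ref{ass:th}, $\jA(U^\top W, U V)\lrb{U^\top T} = \jA(W,V)\lrb{T}$ for orthogonal $U$, applied at every layer between $j$ and $k$. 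Once this inductive identity is in hand, linearity of $\at{\tfrac{d}{dt} G(X_0, tR)}{t=0}$ in $R$ lets one commute $\E_U$ through the product and obtain the $\tilde R$ form, completing the argument.
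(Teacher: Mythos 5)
Your proposal matches the paper's proof in both structure and substance: it defines the same $\S$-constrained gradient flow (your $\tfrac{d}{dt}a_\ell = -\tfrac{1}{d}\tr(\nabla_{A_\ell}f)$ equals the paper's $-\tfrac{1}{d}\tr(\Sigma^{-1/2}\nabla_{A_\ell}f\,\Sigma^{1/2})$ by cyclicity of trace), deduces the dissipation bound from the same $\US$-averaging identity with the directional conjugations $R\mapsto U^\top R\,\US$ for $B,C$ and $R\mapsto \US^{-1}R\,\US$ for $A$, and correctly isolates the new difficulty versus the sparse case, namely that $X_\ell$ now evolves across layers so a perturbation at layer $j$ propagates through all downstream factors, handled by the joint induction on $X_\ell(\US X_0) = \US X_\ell(X_0)$ and the Jacobian identity from Lemma \ref{l:th_ddt_invariance}. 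This is exactly the argument carried out in Propositions \ref{p:master_full_BC} and \ref{p:master_full_A} of the paper.
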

\begin{remark}
    By Assumption \ref{ass:th}, for any invertible $\Lambda \in \R^{d\times d}$, $f(r, A, B, C) = f(r, A, \Lambda^\top B, \Lambda^{-1} C)$. Thus the same result holds for $S_\Lambda = \lrbb{A_\ell = a_\ell I, B_\ell = b_\ell \Lambda^\top \Sigma^{-1/2}, C_\ell = c_\ell \Lambda^{-1} \Sigma^{-1/2}}_{\ell=0\ldots k}$.
\end{remark}


\subsection{Proof of Theorem \ref{t:master_full}}
\label{ss:proof:t:master_full}
Let $r(0) \in \R, (A(0), B(0), C(0)) \in \S$. Let us define the $\S$-gradient-flow as
    \begin{align*}
        & \frac{d}{dt} r_\ell(t) = - \partial_{r_\ell} f(r(t), A(t), B(t), C(t))\\
        & \frac{d}{dt} A_\ell(t) = \tilde{P}_\ell(t)\\
        & \frac{d}{dt} B_\ell(t) = \tilde{U}_\ell(t)\\
        & \frac{d}{dt} C_\ell(t) = \tilde{W}_\ell(t),
        \numberthis \label{e:t:oqiwmdals:7}
    \end{align*}
    where for $\ell = 0\ldots k$, $\tilde{P}$, $\tilde{U}$, and $\tilde{W}$ are defined as
    \begin{alignat*}{2}
        & \tilde{p}_\ell(t) := -\frac{1}{d} \tr\lrp{\Sigma^{-1/2} \nabla_{P_\ell} f(r(t),A(t),B(t),C(t)) \Sigma^{1/2}}
        \qquad && \tilde{P}_\ell(t) := \tilde{p}_\ell(t) I\\
        & \tilde{u}_\ell(t) := -\frac{1}{d} \tr\lrp{\nabla_{B_\ell} f(r(t),A(t),B(t),C(t)) \Sigma^{1/2}}
        \qquad && \tilde{U}_\ell(t) := \tilde{u}_\ell(t) \Sigma^{-1/2}\\
        & \tilde{w}_\ell(t) := -\frac{1}{d} \tr\lrp{\nabla_{C_\ell} f(r(t),A(t),B(t),C(t)) \Sigma^{1/2}}
        \qquad && \tilde{W}_\ell(t) := \tilde{w}_\ell(t) \Sigma^{-1/2}.
    \end{alignat*}
    It follows by definition of $\tilde{P}$, $\tilde{U}$, and $\tilde{W}$ that $(A(t), B(t), C(t)) \in \S$ for all $t$. We will show that at any time $t$, 
    \begin{align*}
        &\frac{d}{dt} f(r(t), A(t), B(t), C(t)) \\
        \leq& - \sum_{\ell=0}^{k} \lrp{\partial_{r_\ell} f(r(t),A(t),B(t),C(t))}^2 \\
        &\qquad - \sum_{\ell=0}^k \lrn{\nabla_{A_\ell} f(r(t),A(t),B(t),C(t))}_F^2\\
        &\qquad - \sum_{\ell=0}^k \lrn{\nabla_{B_\ell} f(r(t),A(t),B(t),C(t))}_F^2 - \sum_{\ell=0}^k \lrn{\nabla_{C_\ell} f(r(t),A(t),B(t),C(t))}_F^2.
        \numberthis \label{e:t:oqiwmdals:0}
    \end{align*}
    Let $\lin{A,B}_{\tr} := \tr\lrp{A^\top B}$. By definition of the dynamics in \eqref{e:t:oqiwmdals:7},
    \begin{align}
        &\frac{d}{dt} f(r(t), A(t), B(t), C(t))\\
        \label{e:t:oqiwmdals:1}
        =& \sum_{\ell=0}^k \partial_{r_\ell} f(r(t),A(t),B(t),C(t)) \cdot \lrp{- \partial_{r_\ell} f(r(t), A(t), B(t), C(t))}\\
        \label{e:t:oqiwmdals:4}
        &\quad + \sum_{\ell=0}^k \lin{\nabla_{A_\ell} f(r(t),A(t),B(t),C(t)), \tilde{P}_\ell(t)}_{\tr}\\
        \label{e:t:oqiwmdals:2}
        &\quad + \sum_{\ell=0}^k \lin{\nabla_{B_\ell} f(r(t),A(t),B(t),C(t)), \tilde{U}_\ell(t)}_{\tr}\\
        \label{e:t:oqiwmdals:3}
        &\quad + \sum_{\ell=0}^k \lin{\nabla_{C_\ell} f(r(t),A(t),B(t),C(t)), \tilde{W}_\ell(t)}_{\tr}.
    \end{align}
    We immediately verify that $\eqref{e:t:oqiwmdals:1} = - \sum_{\ell=0}^k \lrp{\partial{r_\ell} f(r(t),A(t),B(t),C(t))}^2$. By \eqref{e:t:bjkjrdnads_A:3} from Proposition \ref{p:master_full_A}, applied separately to each layer $\ell=0\ldots k$, 
    \begin{align*}
        \eqref{e:t:oqiwmdals:4}
        \leq& \sum_{\ell=0}^k \lin{\nabla_{A_\ell} f(r(t), A(t),B(t),C(t)), - \nabla_{A_\ell} f(r(t), A(t), B(t),C(t))}_{\tr}\\
        =& - \sum_{\ell=0}^k \lrn{\nabla_{B_\ell} f(r(t),A(t),B(t),C(t))}_F^2.
    \end{align*}

    By \eqref{e:t:bjkjrdnads:3} from Proposition \ref{p:master_full_BC}, applied separately to each layer $\ell=0\ldots k$, 
    \begin{align*}
        \eqref{e:t:oqiwmdals:2}
        \leq& \sum_{\ell=0}^k \lin{\nabla_{B_\ell} f(r(t),A(t),B(t),C(t)), - \nabla_{B_\ell} f(r(t), A(t), B(t), C(t))}_{\tr}\\
        =& - \sum_{\ell=0}^k \lrn{\nabla_{B_\ell} f(r(t),A(t),B(t),C(t))}_F^2.
    \end{align*}
    Similarly, by \eqref{e:t:bjkjrdnads:33} from Proposition \ref{p:master_full_BC}, applied separately to each layer $\ell=0\ldots k$,  
    \begin{align*}
        \eqref{e:t:oqiwmdals:3}
        \leq& \sum_{\ell=0}^k \lin{\nabla_{C_\ell} f(r(t),A(t),B(t),C(t)), - \nabla_{C_\ell} f(r(t), A(t), B(t), C(t))}_{\tr}\\
        =& - \sum_{\ell=0}^k \lrn{\nabla_{C_\ell} f(r(t),A(t),B(t),C(t))}_F^2.
    \end{align*}
    Combining the above bounds gives \eqref{e:t:oqiwmdals:0}. Suppose \eqref{e:full_theorem_condition} does not hold. Then there exists a positive constant $c>0$ such that for all $t$,
    \begin{align*}
        & \sum_{\ell=0}^{k} \lrp{\partial_{r_\ell} f(r(t),A(t),B(t),C(t))}^2 + \lrn{\nabla_{A_\ell} f(r(t),A(t),B(t),C(t))}_F^2
        \\& + \lrn{\nabla_{B_\ell} f(r(t),A(t),B(t),C(t))}_F^2 + \lrn{\nabla_{C_\ell} f(r(t),A(t),B(t),C(t))}_F^2 \geq c.
    \end{align*}
    Then by \eqref{e:t:oqiwmdals:0}, $\frac{d}{dt} f(r(t), A(t), B(t), C(t)) \leq -c$ for all $t$. This contradicts the fact that $f(\cdot)$ is bounded below by $0$ (see \eqref{d:ICL_loss}). Thus we prove \eqref{e:full_theorem_condition}.

\subsection{Key Lemmas}

\begin{proposition}
    \label{p:master_full_BC}
    Let $\th$ satisfy Assumption \ref{ass:th}, let $\tx{i}$'s satisfy Assumption \ref{ass:x_distribution} with matrix $\Sigma$, and $\ty{i}$'s satisfy Assumption \ref{ass:y_distribution}. Let $(A,B,C)\in \R^{(k+1)\times d \times d\times 3}$ satisfy, for all $\ell=0\ldots k$, 
    \begin{align*}
        A_\ell = a_\ell I \qquad B_\ell = b_\ell \Sigma^{-1/2} \qquad C_\ell = c_\ell \Sigma^{-1/2},
        \numberthis \label{ass:full_ABC_shape}
    \end{align*}
    where $a_\ell, b_\ell, c_\ell \in \R$ are scalars. Let $V\in \R^{(k+1) \times (d+1) \times (d+1)}$ satisfy, for all $\ell=0\ldots k$, $V_\ell = \bmat{A_\ell & 0\\0 & r_\ell}$, where $r_\ell$ are arbitrary scalars. Let $j\in \lrbb{0\ldots k}$ be an arbitrary but fixed layer index. For $S\in \R^{d\times d}$, let $B_j(S) := B_j + S$, and let $B_\ell(S) := B_\ell$ for $\ell\neq j$. Let $B(S) := \lrbb{B_\ell(S)}_{\ell=0\ldots k}$. Recall $f(V,B,C)$ as defined in \eqref{d:ICL_loss}. Let $R\in \R^{d\times d}$ be an arbitrary matrix. Let 
    \begin{align*}
        \tilde{r} := \frac{1}{d} \tr\lrp{R\Sigma^{1/2}} \qquad \qquad \tilde{R} := \tilde{r} \Sigma^{-1/2}
        \numberthis \label{e:t:bjkjrdnads:2}
    \end{align*}
    Then
    \begin{align*}
        \numberthis \label{e:t:bjkjrdnads:3}
        \at{\frac{d}{dt} f(V, B(tR), C)}{t=0} \leq \at{\frac{d}{dt} f(V, B(t\tilde{R}), C)}{t=0}.
    \end{align*}

    Similarly, let $C_j(S) := C_j + S$, and $C_\ell(S) := C_\ell$ for $\ell\neq j$, and let $C(S) := \lrbb{C_\ell(S)}_{\ell=0\ldots k}$, then
    \begin{align*}
        \numberthis \label{e:t:bjkjrdnads:33}
        \at{\frac{d}{dt} f(V, B, C(tR))}{t=0} \leq \at{\frac{d}{dt} f(V, B, C(t\tilde{R}))}{t=0}.
    \end{align*}
\end{proposition}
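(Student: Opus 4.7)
The plan is to closely follow the proof strategy of Proposition \ref{p:master_sparse}, exploiting the rotational symmetry generated by $U_\Sigma := \Sigma^{1/2} U \Sigma^{-1/2}$ for a uniformly random orthogonal $U$, and averaging perturbation directions over this group. The new ingredient compared to the sparse case is that with $A_\ell = a_\ell I$ no longer zero, the covariates $X_\ell$ genuinely evolve across layers, so the equivariance under $X_0 \mapsto U_\Sigma X_0$ must now be propagated through these dynamics. Since $a_\ell I$ commutes with every matrix, this propagation should go through cleanly -- this is precisely the reason the assumed form \eqref{ass:full_ABC_shape} is the ``correct'' class on which to characterize stationarity.

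First I would invoke Lemma \ref{l:icl_trace_form} to write $Y_{k+1} = Y_0 G(X_0, S)$ with $G(X_0, S) := \prod_{\ell=0}^{k} \lrp{I + r_\ell M \th\lrp{B_\ell(S) X_\ell(X_0, S), C_\ell X_\ell(X_0, S)}}$, where $X_\ell(X_0, S)$ is defined by the recursion $X_{\ell+1} = X_\ell + a_\ell X_\ell M \th\lrp{B_\ell(S) X_\ell, C_\ell X_\ell}$ starting from $X_0$. A straightforward induction on $\ell$ establishes the base equivariance $X_\ell(U_\Sigma X_0, 0) = U_\Sigma X_\ell(X_0, 0)$: the inductive step uses $a_\ell I \cdot U_\Sigma = U_\Sigma \cdot a_\ell I$ together with $B_\ell U_\Sigma = U B_\ell$ and $C_\ell U_\Sigma = U C_\ell$ (which hold by \eqref{ass:full_ABC_shape}, exactly as in \eqref{e:t:unkqdwon:6}) and the invariance $\th(U W, U V) = \th(W, V)$ from Assumption \ref{ass:th}. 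This immediately gives $G(U_\Sigma X_0, 0) = G(X_0, 0)$, the analog of \eqref{e:t:unkqdwon:3}.

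The main technical obstacle is the perturbation analog: I would establish jointly by induction on $\ell \geq j$ that $\at{\frac{d}{dt} X_\ell(U_\Sigma X_0, tR)}{t=0} = U_\Sigma \at{\frac{d}{dt} X_\ell(X_0, t U^\top R U_\Sigma)}{t=0}$ and its $G$-counterpart. The base case at layer $j+1$ mirrors \eqref{e:t:unkqdwon:7}: the perturbation enters through the Jacobian $\jA$ evaluated at $(B_j X_j, C_j X_j)$, and $\jA$ inherits Assumption \ref{ass:th}'s $S$-invariance, so applying it with $S = U^\top$ transfers the $U^\top$ from the outside into the perturbation direction. The inductive step requires a chain rule through both arguments of $\th$ -- the first, where $B_\ell$ acts on a perturbed $X_\ell$, and the second, where $C_\ell$ acts on a perturbed $X_\ell$; because $A_\ell = a_\ell I$ commutes with $U_\Sigma$ (and absorbs freely into the equivariance), no symmetry-breaking terms appear, and the same equivariance propagates into each factor $I + r_\ell M \th(\cdot)$ of the $G$-product.

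Finally, using $X_0 \overset{d}{=} U_\Sigma X_0$ from Assumption \ref{ass:x_distribution} together with $\Kmat(U_\Sigma X_0) = \Kmat(X_0)$ from Assumption \ref{ass:y_distribution}, I would rewrite $\at{\frac{d}{dt} f(V, B(tR), C)}{t=0}$ as an expectation over a uniformly random $U$ and then pull the expectation inside by linearity of the derivative in the perturbation direction, converting it to the derivative in the averaged direction $\E_U[U^\top R U_\Sigma] = \frac{1}{d} \tr(R \Sigma^{1/2}) \cdot \Sigma^{-1/2} = \tilde{R}$, where the first equality uses the standard Haar identity $\E_U[U^\top A U] = \frac{\tr(A)}{d} I$ with $A = R \Sigma^{1/2}$. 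This yields equality in \eqref{e:t:bjkjrdnads:3} (hence the stated $\leq$). The proof of \eqref{e:t:bjkjrdnads:33} is structurally identical upon swapping the roles of $B_j$ and $C_j$, invoking the symmetric treatment of $\th$'s two arguments in Assumption \ref{ass:th} (taking $S = U$ instead of $S = U^\top$).
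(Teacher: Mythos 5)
Your proposal is correct and follows essentially the same route as the paper's proof: same reformulation via Lemma \ref{l:icl_trace_form}, the same pair of equivariance identities for $X_\ell$ and $G$ (established by induction in $\ell$, with the key observation that $A_\ell = a_\ell I$ commutes with $U_\Sigma$), the same chain-rule treatment of the perturbed $\th$-argument (the paper packages this in Lemma \ref{l:th_ddt_invariance}; you invoke the Jacobian $\jA$ as in Proposition \ref{p:master_sparse} --- these are the same step), and the same Haar-averaging identity $\E_U[U^\top R\,U_\Sigma] = \tilde R$ to finish.
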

\begin{proof}[Proof of Proposition \ref{p:master_full_BC}]
    The proof of \eqref{e:t:bjkjrdnads:33} is identical to that of \eqref{e:t:bjkjrdnads:3}, so we only present the proof of \eqref{e:t:bjkjrdnads:3}.
    \paragraph{Loss Reformulation:} 
    Let us consider the reformulation of the in-context loss $f$ presented in Lemma \ref{d:ICL_loss}. Specifically, let $\overline{Z}_0$ be defined as
    \begin{align*}
    \overline{Z}_0 = \begin{bmatrix}
    \tx{1} & \tx{2} & \cdots & \tx{n} &\tx{n+1} \\
    \ty{1} & \ty{2} & \cdots &\ty{n}& \ty{n+1}
    \end{bmatrix} \in \R^{(d+1) \times (n+1)},
    \end{align*}
    Let $\overline{Z}_\ell$ denote the output of the $(i-1)^{th}$ layer of the linear transformer (as defined in \eqref{e:dynamics_Z}, initialized at $\overline{Z}_0$). For the rest of this proof, we will drop the bar, and simply denote $\overline{Z}_\ell$ by $Z_\ell$. Let $X_\ell\in \R^{d\times (n+1)}$ denote the first $d$ rows of $Z_\ell$ and let $Y_\ell\in \R^{1\times (n+1)}$ denote the $(d+1)^{th}$ row of $Z_k$. Under the theorem's assumption that $V_\ell = \bmat{A_\ell & 0\\0 & r_\ell}$, we verify that, for any $\ell \in \lrbb{0\ldots k}$,
    \begin{align*}
    & {X}_{\ell+1} = X_\ell + A_\ell X_\ell M \th\lrp{B_\ell X_\ell, C_\ell X_\ell}\\
    & {Y}_{\ell+1} = {Y}_\ell + r_\ell {Y}_\ell M \th\lrp{B_\ell {X}_\ell, B_\ell {X}_\ell} = Y_0 \prod_{\ell = 0}^i \lrp{I + r_\ell M \th\lrp{B_\ell X_0, C_\ell X_0}}.
    \numberthis \label{e:dynamic_XY_full_proof}
    \end{align*}
    By Lemma \ref{d:ICL_loss}, the in-context loss defined in \eqref{d:ICL_loss} is equivalent to 
    \begin{align*}
        f(V,B,C) = \E_{Z_0} \lrb{\tr\lrp{\lrp{I-M}{Y}_{k+1}^\top {Y}_{k+1}\lrp{I-M}}}
    \end{align*}
    
    We will introduce one more piece of notation: Following \eqref{e:dynamic_XY_full_proof}, notice that for any layer $i$, $X_i$ is a function of $A, B, C, X_0$. Since for this part of the proof, only $B$ is variable (function of $S$), and $A,C$ are fixed, we define $X_i(X, S)$ to be "the result of evolving as \eqref{e:dynamic_XY_full_proof}, initialized at $X_0=X$, where $B_i$ is replaced by $B_i(S)$", i.e.
    \begin{align*}
        \numberthis \label{e:Xi(X,S)_dynamics_full}
        {X}_{i+1}(X,S) = X_i(X,S) + A_i X_i(X,S) M \th\lrp{B_i(S) X_i(X,S), C_i X_i(X,S)}
    \end{align*}
    Let us also define
    \begin{align*}
        G_i(X, S) := \prod_{\ell = 0}^i \lrp{I + r_\ell M \th\lrp{B_\ell(S) X_\ell(X,S), C_\ell X_\ell(X,S)}},
    \end{align*}
    so that
    \begin{align*}
        f(V, B(S), C) =& \E_{Z_0} \lrb{\tr\lrp{\lrp{I-M}G_k(X, S)^\top {Y}_{0}^\top {Y}_{0} G_k(X, S) \lrp{I-M}}}\\
        =& \E_{X_0} \lrb{\tr\lrp{\lrp{I-M}G_k(X, S)^\top \Kmat G_k(X, S) \lrp{I-M}}},
    \end{align*}
    where recall that $\Kmat\in \R^{(n+1)\times (n+1)}$ and $\Kmat_{ij} = \K(\Sigma^{-1/2} \tx{i}, \Sigma^{-1/2}\tx{j})$ as defined in Assumption \ref{ass:y_distribution}. The second equality uses the assumption on distribution of $Y_0$ conditioned on $X_0$, as specified in Assumption \ref{ass:y_distribution}. Let $U$ denote a uniformly randomly sampled orthogonal matrix. Let $\US := \Sigma^{1/2} U \Sigma^{-1/2}$, so that $\US^{-1} = \Sigma^{1/2} U^\top \Sigma^{-1/2}$. We will repeatedly use the following identities:
    \begin{align*}
        & B_i \US = b_i \Sigma^{-1/2} \Sigma^{1/2} U \Sigma^{-1/2} = U B_i\\
        & C_i \US = c_i \Sigma^{-1/2} \Sigma^{1/2} U \Sigma^{-1/2} = U C_i
        \numberthis\label{e:B_US_commute_full}
    \end{align*}
    
    Using the fact that $X_0 \overset{d}{=} \US X_0$, we can verify
    \begin{align*}
        \at{\frac{d}{dt} f(V, B(tR), C)}{t=0}
        =& \at{\frac{d}{dt} \E_{X_0} \lrb{\tr\lrp{\lrp{I-M}G_k(X_0, tR)^\top \Kmat(X_0) G_k(X_0, tR) \lrp{I-M}}}}{t=0}\\
        =& 2\E_{X_0} \lrb{\tr\lrp{\lrp{I-M}G_k(X_0, 0)^\top \Kmat(X_0) \at{\frac{d}{dt} G_k(X_0, tR)}{t=0} \lrp{I-M}}}\\
        =& 2\E_{X_0, U} \lrb{\tr\lrp{\lrp{I-M}G_k(\US X_0, 0)^\top \Kmat(X_0) \at{\frac{d}{dt} G_k(\US X_0, tR)}{t=0} \lrp{I-M}}}.
        \numberthis \label{e:t:bjkjrdnads:9}
    \end{align*}
    The last equality uses the fact that $\Kmat(\US X_0) = \Kmat(X_0)$ by Assumption \ref{ass:y_distribution}.

    Henceforth, assume all $\frac{d}{dt}$ occurs at $t=0$, and we somtimes drop the explicit $\at{}{t=0}$ notation to save space.

    \paragraph{$X_i$ and $\frac{d}{dt} X_i$ under random transformation of $X_0$}\\
    In this part of the proof, we establish two important identities about the evolution of $X_i$ under random rotation of its arguments:
    \begin{align*}
        \numberthis \label{e:Xi_rotation_full}
        & X_i\lrp{\US X_0, 0} = \US X_i \lrp{X_0, 0},\\
        \numberthis \label{e:ddt_Xi_rotation_full}
        & \at{\frac{d}{dt} X_i\lrp{\US X_0, tR }}{t=0} = \US \at{\frac{d}{dt} X_i\lrp{X_0, tU^{\top} R \US}}{t=0}.
    \end{align*}
    
    We first verify \eqref{e:Xi_rotation_full} by induction. For $i=0$, this identity holds by definition. Assume the identity holds for some $i$. Then following \eqref{e:Xi(X,S)_dynamics_full}, 
    \begin{align*}
        X_{i+1}(\US X_0,0) 
        =& X_i(\US X_0,0) + A_i X_i(\US X_0,0) M \th\lrp{B_i X_i(\US X_0,0), C_i X_i(\US X_0,0)}\\
        =& \US X_i(X_0,0) + \US A_i X_i(X_0,0) M \th\lrp{B_i \US  X_i(X_0,0), C_i \US X_i(X_0,0)}\\
        =& \US X_i(X_0,0) + \US A_i X_i(X_0,0) M \th\lrp{B_i X_i(X_0,0), C_i X_i(X_0,0)}\\
        =& \US X_{i+1}(X_0,0).
    \end{align*}
    The second equality is by the inductive hypothesis, and the fact that $A_i = a_i I$. The third equality uses \eqref{e:B_US_commute_full} and Assumption \ref{ass:th}.

    Next, we verify \eqref{e:ddt_Xi_rotation_full}. By definition of $X_i(X_0,S)$, the case for $i\leq j$ is simple:
    \begin{align*}
        \at{\frac{d}{dt} X_i(\US X_0, tR)}{t=0} = 0 = \US \at{\frac{d}{dt} X_i(X_0, tU^{\top} R \US)}{t=0}.
        \numberthis \label{e:t:bjkjrdnads:0}
    \end{align*}
    
    For $i=j+1$, it follows from \eqref{e:Xi(X,S)_dynamics_full} and chain rule that
    \begin{align*}
        & \frac{d}{dt} X_{j+1}(\US X_0, tR)\\
        =& \frac{d}{dt} X_j (\US X_0, tR)  + A_j \lrp{\frac{d}{dt} X_j (\US X_0, tR)}M \th\lrp{B_j X_j(\US X_0, 0), C_j X_j (\US X_0,0))}\\
        &\qquad + A_j X_j(\US X_0, 0) M \frac{d}{dt} \th\lrp{ \underbrace{\lrp{B_j + tR} X_j(\US X_0, tR)}_{S(t)}, \underbrace{C_j X_j (\US X_0, tR)}_{T(t)}}.
        \numberthis \label{e:t:bjkjrdnads:1}
    \end{align*}
    We will now apply Lemma \ref{l:th_ddt_invariance}. Let $S(t):= \lrp{B_j + tR} \US X_j(\US X_0, tR)$ and $T(t) :=  C_j \US X_j (\US X_0, tR)$. By \eqref{e:t:bjkjrdnads:0}, we know that $\at{\frac{d}{dt} X_j(\US X_0, tR)}{t=0} = 0$. Thus, we can define $\tS(t):= \lrp{B_j + tR} X_j(\US X_0, 0)$ and $\tT(t):= C_j X_j(\US X_0, 0)$. Using \eqref{e:Xi_rotation_full} and \eqref{e:B_US_commute_full}, we verify that 
    \begin{align*}
        \tS(t)
        =& \lrp{B_j + tR} \US X_j(X_0, 0)= U \lrp{B_j + t U^\top R \US} X_j(X_0, 0)\\
        \tT(t)
        =& \lrp{C_j + tR} \US X_j(X_0, 0)= U \lrp{C_j + t U^\top R \US} X_j(X_0, 0).
    \end{align*}
    Let us therefore pick $\Gamma := U$. Applying Lemma \ref{l:th_ddt_invariance} and plugging into \eqref{e:t:bjkjrdnads:1} gives
    \begin{align*}
        & \frac{d}{dt} X_{j+1}(\US X_0, tR)\\
        =& \US \frac{d}{dt} X_j \lrp{X_0, t U^\top R \US} + A_j \US \lrp{\frac{d}{dt} X_j \lrp{X_0, t U^\top R \US}} M \th\lrp{B_j X_j(X_0, 0), C_j X_j (X_0,0))}\\
        &\qquad + A_j \US X_j(X_0, 0) M \frac{d}{dt} \th\lrp{ \underbrace{\lrp{B_j + tU^\top R \US} X_j( X_0, 0)}_{\Gamma^\top \tS(t)}, \underbrace{C_j X_j (X_0, 0)}_{\Gamma^{-1} \tT(t)}}\\
        =& \US \frac{d}{dt} X_{j+1}(X_0, tU^\top R \US),
    \end{align*}
    where the first equality also uses \eqref{e:B_US_commute_full} and \eqref{e:Xi_rotation_full} and \eqref{e:t:bjkjrdnads:0}. 

    Finally, we need to prove \eqref{e:ddt_Xi_rotation_full} for the $i>j+1$ case. We will prove this by induction over $i$. The proof is very similar to the $i=j+1$ case:
    \begin{align*}
        & \frac{d}{dt} X_{i+1}(\US X_0, tR)\\
        =& \frac{d}{dt} X_i (\US X_0, tR)  + A_i \lrp{\frac{d}{dt} X_i (\US X_0, tR)}M \th\lrp{B_i X_i(\US X_0, 0), C_i X_i (\US X_0,0))}\\
        &\qquad + A_i X_i(\US X_0, 0) M \frac{d}{dt} \th\lrp{ \underbrace{B_i X_i(\US X_0, tR)}_{S(t)}, \underbrace{C_i X_i (\US X_0, tR)}_{T(t)}}\\
        =& \frac{d}{dt} X_i (\US X_0, tR)  + A_i \lrp{\frac{d}{dt} X_i (\US X_0, tR)}M \th\lrp{B_i X_i(\US X_0, 0), C_i X_i (\US X_0,0))}\\
        &\qquad + A_i X_i(\US X_0, 0) M \frac{d}{dt} \th\lrp{ U \underbrace{B_i X_i(X_0, tU^\top R\US)}_{\tS(t)}, U \underbrace{C_i X_i (X_0, tU^\top R \US)}_{\tT(t)}}\\
        =& \US \frac{d}{dt} X_{i+1}(X_0, tU^\top R \US).
    \end{align*}
    In the second equality, we apply Lemma \ref{l:th_ddt_invariance} with $\Gamma = U$. We use the inductive hypothesis to verify that $\tS'(0) = S'(0)$ and $\tT'(0) = T'(0)$. This concludes the proof of \eqref{e:ddt_Xi_rotation_full}.

    \paragraph{$G$ and $\frac{d}{dt} G$ under random transformation of $X_0$}\\
    In this part of the proof, we establish two important identities about the evolution of $G$ under random rotation of its arguments:
    \begin{align*}
        \numberthis \label{e:Gi_rotation_full}
        & G_i(\US X_0,0) = G_i(X_0,0),\\
        \numberthis \label{e:ddt_Gi_rotation_full}
        & \at{\frac{d}{dt} G_i\lrp{\US X_0, tR }}{t=0} = \at{\frac{d}{dt} G_i\lrp{X_0, tU^{\top} R \US}}{t=0}.
    \end{align*}
    \eqref{e:Gi_rotation_full} is an immediate consequence of \eqref{e:Xi_rotation_full}:
    \begin{align*}
        G_i(\US X_0, 0) 
        :=& \prod_{\ell = 0}^i \lrp{I + r_\ell M \th\lrp{B_\ell X_\ell(\US X_0,0), C_\ell X_\ell(\US X_0,0)}}\\
        =& \prod_{\ell = 0}^i \lrp{I + r_\ell M \th\lrp{B_\ell X_\ell(X_0,0), C_\ell X_\ell(X_0,0)}}\\
        =& G_i(X_0, 0) ,
    \end{align*}
    where  the second equality uses \eqref{e:Xi_rotation_full}, \eqref{e:B_US_commute_full} and Assumption \ref{ass:th}.

    To verify \eqref{e:ddt_Gi_rotation_full}, we first verify the following recursive relationship:
    \begin{align*}
        & G_{i}\lrp{\US X_0,S} \\
        =& \lrp{I + r_{i} M \th\lrp{B_{i}(S) X_{i}(\US X_0,S), C_{i} X_{i}(\US X_0,S)}} G_{i-1}\lrp{\US X_0,S}\\
        \Rightarrow \qquad 
        & \at{\frac{d}{dt} G_{i}\lrp{\US X_0,tR}}{t=0}\\
        =& \lrp{\frac{d}{dt} \lrp{I + r_{i} M \th\lrp{B_{i}(tR) X_{i}(\US X_0,tR), C_{i} X_{i}(\US X_0,tR)}} } G_{i-1}\lrp{\US X_0,tR}\\
        & + \lrp{I + r_{i} M \th\lrp{B_{i} X_{i}(\US X_0,0), C_{i} X_{i}(\US X_0,0)}} \frac{d}{dt} G_{i-1}\lrp{\US X_0,tR}.
        \numberthis \label{e:ddt_Gi_recursion_full}
    \end{align*}
    We will analyze the two terms in \eqref{e:ddt_Gi_recursion_full} separately:
    \begin{align*}
        & \frac{d}{dt} \lrp{I + r_{i} M \th\lrp{B_{i}(tR) X_{i}(\US X_0,tR), C_{i} X_{i}(\US X_0,tR)}} \\
        =& I + r_{i} M \frac{d}{dt} \th\lrp{B_{i}(tR) X_{i}(\US X_0,tR), C_{i} X_{i}(\US X_0,tR)}.
    \end{align*}
    Let $S(t) := B_{i}(tR) X_{i}(\US X_0,tR)$ and $T(t) := C_{i} X_{i}(\US X_0,tR)$. Let $\tS(t) := U B_{i}(tU^\top R \US) X_{i}(X_0,t U^\top R \US)$ and $\tT(t) := U C_{i} X_{i}(X_0,tU^\top R\US)$. We verify that
    \begin{align*}
        S(0) =& B_{i} X_{i}(\US X_0,0) = U B_{i} \lrp{X_0, 0} = \tS(0)\\
        T(0) =& C_{i} X_{i}(\US X_0,0) = U C_{i} \lrp{X_0, 0} = \tT(0).
    \end{align*}
    By chain rule,
    \begin{align*}
        S'(0) 
        =& \lrp{\frac{d}{dt} B_{i} (tR)}X_{i}(\US X_0,0) + B_i \frac{d}{dt} X_{i}(\US X_0,tR)\\
        =& \lrp{\frac{d}{dt} B_{i} (tR)} \US X_{i}(X_0,0) + B_i \US \frac{d}{dt} X_{i}(X_0,tU^\top R \US)\\
        =& U \lrp{\frac{d}{dt} B_{i} (t U^\top R\US)}X_{i}(X_0,0) + U B_i \frac{d}{dt} X_{i}(X_0,tU^\top R \US)\\
        =& \tS'(0).
    \end{align*}
    The second equality follows from \eqref{e:Xi_rotation_full} and \eqref{e:ddt_Xi_rotation_full}. The third equality uses \eqref{e:B_US_commute_full}, as well as the fact that $U^\top \frac{d}{dt} B_i (tR) \US = \frac{d}{dt} B_i (tU^\top R \US)$; this is because for $i\neq j$, both sides are $0$, and for $i=j$, $\frac{d}{dt} B_j(tR) = R$.

    Similarly, we verify that
    \begin{align*}
        T'(0)
        =& C_i \frac{d}{dt} X_i (\US X_0, tR)\\
        =& U C_i \frac{d}{dt} X_i (X_0, t U^\top R \US)\\
        =& \tT'(0).
    \end{align*}
    Applying Lemma \ref{l:th_ddt_invariance} with $\Gamma = U$ gives
    \begin{align*}
        & \frac{d}{dt} \lrp{I + r_{i} M \th\lrp{B_{i}(tR) X_{i}(\US X_0,tR), C_{i} X_{i}(\US X_0,tR)}}\\
        =& \frac{d}{dt} \lrp{I + r_{i} M \th\lrp{B_{i}(tU^\top R \US) X_{i}(X_0,tU^\top R\US ), C_{i} X_{i}(X_0,tU^\top R\US)}}.
    \end{align*}
    Using \eqref{e:B_US_commute_full} and Assumption \ref{ass:th} and the inductive hypothesis, the second term of \eqref{e:ddt_Gi_recursion_full} satisfies
    \begin{align*}
        & \lrp{I + r_{i} M \th\lrp{B_{i} X_{i}(\US X_0,0), C_{i} X_{i}(\US X_0,0)}} \frac{d}{dt} G_{i-1}\lrp{\US X_0,tR}\\
        =& \lrp{I + r_{i} M \th\lrp{B_{i} X_{i}(X_0,0), C_{i} X_{i}(X_0,0)}} \frac{d}{dt} G_{i-1}\lrp{X_0,tU^\top R \US}.
    \end{align*}
    Combining the above identities for each term of \eqref{e:ddt_Gi_recursion_full}, we conclude that
    \begin{align*}
        \at{\frac{d}{dt} G_{i}\lrp{\US X_0,tR}}{t=0} = \at{\frac{d}{dt} G_{i}\lrp{X_0,tU^\top R\US}}{t=0}.
    \end{align*}
    This concludes the proof of \eqref{e:ddt_Gi_rotation_full}.

    \paragraph{Putting everything together:}\\
    We will now conclude the proof of \eqref{e:t:bjkjrdnads:3}. Plugging in \eqref{e:Gi_rotation_full} and \eqref{e:ddt_Gi_rotation_full} into \eqref{e:t:bjkjrdnads:9} gives
    \begin{align*}
        \at{\frac{d}{dt} f(V, B(tR), C)}{t=0}
        =& 2\E_{X_0, U} \lrb{\tr\lrp{\lrp{I-M}G_k(\US X_0, 0)^\top \Kmat \at{\frac{d}{dt} G_k(\US X_0, tR)}{t=0} \lrp{I-M}}}\\
        =& 2\E_{X_0, U} \lrb{\tr\lrp{\lrp{I-M}G_k(X_0, 0)^\top \Kmat \at{\frac{d}{dt} G_k(X_0, tU^\top R\US)}{t=0} \lrp{I-M}}}\\
        =& 2\E_{X_0} \lrb{\tr\lrp{\lrp{I-M}G_k(X_0, 0)^\top \Kmat \at{\frac{d}{dt} G_k(X_0, t\E_{U}\lrb{U^\top R\US})}{t=0} \lrp{I-M}}}\\
        =& 2\E_{X_0} \lrb{\tr\lrp{\lrp{I-M}G_k(X_0, 0)^\top \Kmat \at{\frac{d}{dt} G_k(X_0, t\tilde{R})}{t=0} \lrp{I-M}}}\\
        =& \at{\frac{d}{dt} f(V, B(t\tilde{R}), C)}{t=0}
    \end{align*}
    The third equality uses the fact that $\at{\frac{d}{dt} G_k(X_0, tS)}{t=0}$ is linear in $S$ for any $S$. The fourth equality is by \eqref{e:t:bjkjrdnads:2}. This concludes the proof of \eqref{e:t:bjkjrdnads:3}.

\end{proof}

\begin{proposition}
    \label{p:master_full_A}
    Let $\th$ satisfy Assumption \ref{ass:th}, let $\tx{i}$'s satisfy Assumption \ref{ass:x_distribution} with matrix $\Sigma$, and $\ty{i}$'s satisfy Assumption \ref{ass:y_distribution}. Let $(A,B,C)\in \R^{(k+1)\times d \times d\times 3}$ satisfy, for all $\ell=0\ldots k$, 
    \begin{align*}
        A_\ell = a_\ell I \qquad B_\ell = b_\ell \Sigma^{-1/2} \qquad C_\ell = c_\ell \Sigma^{-1/2},
        \numberthis \label{ass:full_ABC_shape_A}
    \end{align*}
    where $a_\ell, b_\ell, c_\ell \in \R$ are scalars. Let $V\in \R^{(k+1) \times (d+1) \times (d+1)}$ satisfy, for all $\ell=0\ldots k$, $V_\ell = \bmat{A_\ell & 0\\0 & r_\ell}$, where $r_\ell$ are arbitrary scalars. Let $j\in \lrbb{0\ldots k}$ be an arbitrary but fixed layer index. For $S\in \R^{d\times d}$, let $A_j(S) := A_j + S$, and let $A_\ell(S) := A_\ell$ for $\ell\neq j$. Let $A(S) := \lrbb{A_\ell(S)}_{\ell=0\ldots k}$. Let $V_\ell(S) := \lrbb{\bmat{A_\ell(S)&0\\0&r_\ell}}$ and $V(S) = \lrbb{V_\ell(S)}_{\ell=0\ldots k}$. Let $f(V,B,C)$ e as defined in \eqref{d:ICL_loss}. Let $R\in \R^{d\times d}$ be an arbitrary matrix. Let 
    \begin{align*}
        \tilde{r} := \frac{1}{d} \tr\lrp{\Sigma^{-1/2} R\Sigma^{1/2}} \qquad \qquad \tilde{R} := \tilde{r} I
        \numberthis \label{e:t:bjkjrdnads_A:2}
    \end{align*}
    Then
    \begin{align*}
        \numberthis \label{e:t:bjkjrdnads_A:3}
        \at{\frac{d}{dt} f(V(tR), B, C)}{t=0} \leq \at{\frac{d}{dt} f(V(t\tilde{R}), B, C)}{t=0}.
    \end{align*}
\end{proposition}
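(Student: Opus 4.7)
The plan is to mimic the proof of Proposition \ref{p:master_full_BC}, adapted so that the perturbation is placed on $A_j$ rather than $B_j$. As in that proof, I will in fact establish equality $\at{\frac{d}{dt} f(V(tR),B,C)}{t=0} = \at{\frac{d}{dt} f(V(t\tilde{R}),B,C)}{t=0}$, from which \eqref{e:t:bjkjrdnads_A:3} follows trivially.

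First, following Lemma \ref{l:icl_trace_form}, I would write
\begin{align*}
    f(V(S),B,C) = \E_{X_0}\lrb{\tr\lrp{(I-M)\,G_k(X_0,S)^\top \Kmat(X_0)\, G_k(X_0,S)\,(I-M)}},
\end{align*}
where $X_i(X,S)$ and $G_i(X,S)$ are defined exactly as in the proof of Proposition \ref{p:master_full_BC}, but with $A_j$ replaced by $A_j+S$ (and $B,C$ held fixed). Next, exploiting $X_0 \overset{d}{=} \US X_0$ (Assumption \ref{ass:x_distribution}) together with $\Kmat(\US X_0) = \Kmat(X_0)$ (Assumption \ref{ass:y_distribution}), I would rewrite the derivative at $t=0$ as
\begin{align*}
    \at{\frac{d}{dt} f(V(tR),B,C)}{t=0} = 2\,\E_{X_0,U}\lrb{\tr\lrp{(I-M)\,G_k(\US X_0,0)^\top \Kmat(X_0)\,\at{\frac{d}{dt} G_k(\US X_0,tR)}{t=0}(I-M)}}.
\end{align*}

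The heart of the argument is to prove by induction on $i$ the two conjugation identities
\begin{align*}
    X_i(\US X_0, S) = \US\, X_i(X_0, \US^{-1} S \US), \qquad G_i(\US X_0, S) = G_i(X_0, \US^{-1} S \US).
\end{align*}
Three facts drive the induction. (a) Since $A_\ell = a_\ell I$ commutes with every matrix, the unperturbed layers satisfy $A_\ell \US = \US A_\ell$, and at the perturbed layer we get the key identity $(A_j+S)\US = \US(A_j + \US^{-1} S \US)$. (b) The relations $B_\ell \US = U B_\ell$ and $C_\ell \US = U C_\ell$ hold because $B_\ell, C_\ell$ are scalar multiples of $\Sigma^{-1/2}$. (c) Assumption \ref{ass:th} applied with the orthogonal matrix $U$ yields $\th(U B_\ell X, U C_\ell X) = \th(B_\ell X, C_\ell X)$, so the nonlinear attention pattern is invariant under the $U$-action on its inputs. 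These let the inductive step propagate the $\US$-conjugation past each layer, and the corresponding statement for $G_i$ follows at once from the $X_i$ version combined with (c).

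Differentiating the $G$-identity at $t=0$ gives $\at{\frac{d}{dt} G_k(\US X_0, tR)}{t=0} = \at{\frac{d}{dt} G_k(X_0, t\,\US^{-1} R \US)}{t=0}$, which is linear in $R$. Taking the $U$-expectation and using the standard Haar identity $\E_U[U^\top M U] = \tfrac{\tr(M)}{d} I$ with $M = \Sigma^{-1/2} R \Sigma^{1/2}$, one computes
\begin{align*}
    \E_U\lrb{\US^{-1} R \US} = \Sigma^{1/2}\, \E_U\lrb{U^\top \Sigma^{-1/2} R \Sigma^{1/2} U}\, \Sigma^{-1/2} = \tfrac{\tr(\Sigma^{-1/2} R \Sigma^{1/2})}{d}\, I = \tilde{r} I = \tilde{R}.
\end{align*}
Combining this with $G_k(\US X_0,0) = G_k(X_0,0)$ and plugging back into the trace expression yields the claimed equality, whence \eqref{e:t:bjkjrdnads_A:3}.

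The main obstacle will be verifying the inductive step for the derivative of $G_i$ at layers $\ell > j$, where the layer-$j$ perturbation has already altered $X_\ell(\US X_0, tR)$ as a function of $t$. This requires an application of the $\th$-derivative invariance lemma (Lemma \ref{l:th_ddt_invariance}), entirely analogous to its use in Proposition \ref{p:master_full_BC}. The present case is actually \emph{slightly cleaner} than the $B,C$ case because $A_j = a_j I$ genuinely commutes with $\US$, so the conjugation $\US^{-1}(\cdot)\US$ acts transparently on the perturbation variable, whereas the $B$-perturbation only satisfied the weaker one-sided relation $B_j \US = U B_j$.
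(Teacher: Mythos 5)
Your proposal is correct and follows essentially the same strategy as the paper (conjugation identities for $X_i$ and $G_i$ under the $\US$-action, followed by Haar averaging). The one notable difference is that you assert the \emph{full} conjugation identities $X_i(\US X_0, S) = \US\, X_i(X_0, \US^{-1} S \US)$ and $G_i(\US X_0, S) = G_i(X_0, \US^{-1} S \US)$ for \emph{all} $S$, whereas the paper only establishes them at $S=0$ together with the corresponding $\frac{d}{dt}|_{t=0}$ identities. Your stronger claim is in fact true: the key identity $(a_j I + S)\US = \US(a_j I + \US^{-1}S\US)$ propagates cleanly through the recursion, and at every layer $B_\ell \US = U B_\ell$, $C_\ell \US = U C_\ell$ plus the $\th(U\cdot, U\cdot)=\th(\cdot,\cdot)$ invariance from Assumption~\ref{ass:th} absorbs the rotation. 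Because the identity holds for all $S$, you can simply differentiate both sides at $t=0$, so you do \emph{not} actually need Lemma~\ref{l:th_ddt_invariance}---your final remark hedges unnecessarily on that point. This is where the $A$-perturbation is genuinely cleaner than the $B$-perturbation handled in Proposition~\ref{p:master_full_BC}: since $A_j$ is a scalar multiple of identity and hence the perturbation conjugates, the full-identity route works, whereas for a $B$-perturbation the corresponding claim $X_i(\US X_0, S) = \US X_i(X_0, U^\top S \US)$ would fail in general and the paper is forced to work with derivatives. If you tighten the write-up by dropping the appeal to Lemma~\ref{l:th_ddt_invariance} and committing to the full identities, the resulting proof is both correct and shorter than the paper's.
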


\begin{proof}
    \begin{proof}[Proof of Proposition \ref{p:master_full_A}]
    \paragraph{Loss Reformulation:} 
    Let us consider the reformulation of the in-context loss $f$ presented in Lemma \ref{d:ICL_loss}. Specifically, let $\overline{Z}_0$ be defined as
    \begin{align*}
    \overline{Z}_0 = \begin{bmatrix}
    \tx{1} & \tx{2} & \cdots & \tx{n} &\tx{n+1} \\
    \ty{1} & \ty{2} & \cdots &\ty{n}& \ty{n+1}
    \end{bmatrix} \in \R^{(d+1) \times (n+1)},
    \end{align*}
    Let $\overline{Z}_i$ denote the output of the $(i-1)^{th}$ layer of the linear transformer (as defined in \eqref{e:dynamics_Z}, initialized at $\overline{Z}_0$). For the rest of this proof, we will drop the bar, and simply denote $\overline{Z}_i$ by $Z_i$. Let $X_i\in \R^{d\times (n+1)}$ denote the first $d$ rows of $Z_i$ and let $Y_i\in \R^{1\times (n+1)}$ denote the $(d+1)^{th}$ row of $Z_k$. Under the assumption that $V_\ell = \bmat{A_\ell &0\\0&r_\ell}$, we verify that for all $i\in\lrbb{0\ldots k}$:
    \begin{align*}
    & {X}_{i+1} = X_i + A_i X_i M \th\lrp{B_i X_i, C_i X_i}\\
    & {Y}_{i+1} = {Y}_i + r_i {Y}_i M \th\lrp{B_i {X}_i, B_i {X}_i} = Y_0 \prod_{\ell = 0}^i \lrp{I + r_\ell M \th\lrp{B_\ell X_0, C_\ell X_0}}.
    \numberthis \label{e:dynamic_XY_full_proof_A}
    \end{align*}
    By Lemma \ref{d:ICL_loss}, the in-context loss defined in \eqref{d:ICL_loss} is equivalent to 
    \begin{align*}
        f(V,B,C) = \E_{Z_0} \lrb{\tr\lrp{\lrp{I-M}{Y}_{k+1}^\top {Y}_{k+1}\lrp{I-M}}}
    \end{align*}
    We will introduce one more piece of notation: Following \eqref{e:dynamic_XY_full_proof_A}, notice that for any layer $i$, $X_i$ is a function of $A, B, C, X_0$. Since for this part of the proof, only $A$ is variable (function of $S$), and $B,C$ are fixed, we define $X_i(X, S)$ to be "the result of evolving as \eqref{e:dynamic_XY_full_proof_A}, initialized at $X_0=X$, where $A_i$ is replaced by $A_i(S)$", i.e.
    \begin{align*}
        \numberthis \label{e:Xi(X,S)_dynamics_full_A}
        {X}_{i+1}(X,S) = X_i(X,S) + A_i(S) X_i(X,S) M \th\lrp{B_i X_i(X,S), C_i X_i(X,S)}
    \end{align*}
    Let us also define
    \begin{align*}
        G_i(X, S) := \prod_{\ell = 0}^i \lrp{I + r_\ell M \th\lrp{B_\ell X_\ell(X,S), C_\ell X_\ell(X,S)}},
    \end{align*}
    so that
    \begin{align*}
        f(V(S),B,C) =& \E_{Z_0} \lrb{\tr\lrp{\lrp{I-M}G_k(X, S)^\top {Y}_{0}^\top {Y}_{0} G_k(X, S) \lrp{I-M}}}\\
        =& \E_{X_0} \lrb{\tr\lrp{\lrp{I-M}G_k(X, S)^\top \Kmat G_k(X, S) \lrp{I-M}}},
    \end{align*}
    where recall that $\Kmat\in \R^{(n+1)\times (n+1)}$ and $\Kmat_{ij} = \K(\Sigma^{-1/2} \tx{i}, \Sigma^{-1/2}\tx{j})$ as defined in Assumption \ref{ass:y_distribution}. The second equality uses the assumption on distribution of $Y_0$ conditioned on $X_0$, as specified in Assumption \ref{ass:y_distribution}. Let $U$ denote a uniformly randomly sampled orthogonal matrix. Let $\US := \Sigma^{1/2} U \Sigma^{-1/2}$, so that $\US^{-1} = \Sigma^{1/2} U^\top \Sigma^{-1/2}$. We will repeatedly use the following identities:
    \begin{align*}
        & B_i \US = b_i \Sigma^{-1/2} \Sigma^{1/2} U \Sigma^{-1/2} = U B_i\\
        & C_i \US = c_i \Sigma^{-1/2} \Sigma^{1/2} U \Sigma^{-1/2} = U B_i
        \numberthis\label{e:B_US_commute_full_A}
    \end{align*}
    
    Using the fact that $X_0 \overset{d}{=} \US X_0$, we can verify
    \begin{align*}
        \at{\frac{d}{dt} f(V(tR),B,C)}{t=0}
        =& \at{\frac{d}{dt} \E_{X_0} \lrb{\tr\lrp{\lrp{I-M}G_k(X_0, tR)^\top \Kmat(X_0) G_k(X_0, tR) \lrp{I-M}}}}{t=0}\\
        =& 2\E_{X_0} \lrb{\tr\lrp{\lrp{I-M}G_k(X_0, 0)^\top \Kmat(X_0) \at{\frac{d}{dt} G_k(X_0, tR)}{t=0} \lrp{I-M}}}\\
        =& 2\E_{X_0, U} \lrb{\tr\lrp{\lrp{I-M}G_k(\US X_0, 0)^\top \Kmat(X_0) \at{\frac{d}{dt} G_k(\US X_0, tR)}{t=0} \lrp{I-M}}}.
        \numberthis \label{e:t:bjkjrdnads_A:9}
    \end{align*}
    The last equality uses the fact that $\Kmat(\US X_0) = \Kmat(X_0)$ by Assumption \ref{ass:y_distribution}.

    Henceforth, assume all $\frac{d}{dt}$ occurs at $t=0$, and we somtimes drop the explicit $\at{}{t=0}$ notation to save space.

    \paragraph{$X_i$ and $\frac{d}{dt} X_i$ under random transformation of $X_0$}\\
    In this part of the proof, we establish two important identities about the evolution of $X_i$ under random rotation of its arguments:
    \begin{align*}
        \numberthis \label{e:Xi_rotation_full_A}
        & X_i\lrp{\US X_0, 0} = \US X_i \lrp{X_0, 0},\\
        \numberthis \label{e:ddt_Xi_rotation_full_A}
        & \at{\frac{d}{dt} X_i\lrp{\US X_0, tR }}{t=0} = \US \at{\frac{d}{dt} X_i\lrp{X_0, t\US^{-1} R \US}}{t=0}.
    \end{align*}

    We first verify \eqref{e:Xi_rotation_full_A} by induction. For $i=0$, this identity holds by definition. Assume the identity holds for some $i$. Then following \eqref{e:Xi(X,S)_dynamics_full}, 
    \begin{align*}
        X_{i+1}(\US X_0,0) 
        =& X_i(\US X_0,0) + A_i(0) X_i(\US X_0,S) M \th\lrp{B_i X_i(\US X_0,0), C_i X_i(\US X_0,0)}\\
        =& \US X_i(X_0,0) + \US A_i(0) X_i(X_0,S) M \th\lrp{B_i \US  X_i(X_0,0), C_i \US X_i(X_0,0)}\\
        =& \US X_i(X_0,0) + \US A_i(0) X_i(X_0,S) M \th\lrp{B_i X_i(X_0,0), C_i X_i(X_0,0)}\\
        =& \US X_{i+1}(X_0,0).
    \end{align*}
    The second equality is by the inductive hypothesis, and the fact that $A_i = a_i I$. The third equality uses \eqref{e:B_US_commute_full} and Assumption \ref{ass:th}.

    Next, we verify \eqref{e:ddt_Xi_rotation_full}. By definition of $X_i(X_0,S)$, the case for $i\leq j$ is simple:
    \begin{align*}
        \at{\frac{d}{dt} X_i(\US X_0, tR)}{t=0} = 0 = \US \at{\frac{d}{dt} X_i(X_0, t\US^{-1} R \US)}{t=0}.
        \numberthis \label{e:t:bjkjrdnads_A:0}
    \end{align*}
    
    For $i=j+1$, it follows from \eqref{e:Xi(X,S)_dynamics_full} and chain rule that
    \begin{align*}
        & \frac{d}{dt} X_{j+1}(\US X_0, tR)\\
        =& \frac{d}{dt} X_j (\US X_0, tR)  + \lrp{\frac{d}{dt} A_j(tR)} X_j (\US X_0, 0) M \th\lrp{B_j X_j(\US X_0, 0), C_j X_j (\US X_0,0))}\\
        &\qquad + A_j(0) \lrp{\frac{d}{dt} X_j (\US X_0, tR)}M \th\lrp{B_j X_j(\US X_0, 0), C_j X_j (\US X_0,0))}\\
        &\qquad + A_j(0) X_j(\US X_0, 0) M \frac{d}{dt} \th\lrp{ B_j X_j(\US X_0, 0), C_j X_j (\US X_0, 0)}.
        \numberthis \label{e:t:bjkjrdnads_A:1}
    \end{align*}
    We can simplify each term on the RHS above separately:

    By \eqref{e:B_US_commute_full_A}, the first, third and fourth terms are $0$. By definition of $A_j$, $\frac{d}{dt} A_j (tR) = R$. Furthermore, using \eqref{e:t:bjkjrdnads_A:0}, \eqref{e:B_US_commute_full_A} and Assumption \ref{ass:th}, the second term simplifies to $\US \US^{-1} R \US X_j (X_0, 0) M \th\lrp{B_j X_j(X_0, 0), C_j X_j (X_0,0))}$. Therefore,
    \begin{align*}
        \frac{d}{dt} X_{j+1}(\US X_0, tR)
        =& \US \US^{-1} R \US X_j (X_0, 0) M \th\lrp{B_j X_j(X_0, 0), C_j X_j (X_0,0))}\\
        =& \US \frac{d}{dt} X_{j+1}(X_0, t\US^{-1} R \US ).
    \end{align*}
    We have thus verified \eqref{e:ddt_Xi_rotation_full_A} for $i\leq j+1$. For $i>j+1$, we will use proof by induction. Assume \eqref{e:ddt_Xi_rotation_full_A} holds for all $\ell \leq i$ for some $i\geq j+1$. Then for $i+1$,
    \begin{align*}
        & \frac{d}{dt} X_{i+1}(\US X_0, tR)\\
        =& \frac{d}{dt} X_i (\US X_0, tR) + \lrp{\frac{d}{dt}A_i(tR) } X_i (\US X_0, 0)M \th\lrp{B_i X_i(\US X_0, 0), C_i X_i (\US X_0,0))}\\
        &\qquad + A_i(0) \lrp{\frac{d}{dt} X_i (\US X_0, tR)}M \th\lrp{B_i X_i(\US X_0, 0), C_i X_i (\US X_0,0))}\\
        &\qquad + A_i(0) X_i(\US X_0, 0) M \frac{d}{dt} \th\lrp{ \underbrace{B_i X_i(\US X_0, tR)}_{S(t)}, \underbrace{C_i X_i (\US X_0, tR)}_{T(t)}}.
    \end{align*}
    Since $i\geq j+1$, we know that $\frac{d}{dt} A_i(tR)=0$, so the second term on RHS is 0. By the inductive hypothesis and \eqref{e:ddt_Xi_rotation_full_A}, and $A_i(0) =a_i I$, and Assumption \ref{ass:th}, the third RHS term can be simplified to be \\
    $\US A_i(0) \lrp{\frac{d}{dt} X_i\lrp{X_0, t \US^{-1} R\US}} M \th\lrp{B_i X_i(X_0, 0), C_i X_i (X_0,0))}$. Finally, to simplify the last RHS term, we apply Lemma \ref{l:th_ddt_invariance}. Let $S(t):= B_i X_i(\US X_0, tR)$ and $T(t) :=  C_i X_i (\US X_0, tR)$. Let $\tS(t) := U B_i X_i \lrp{X_0, t \US^{-1} R \US}$ and $\tT(t) := U C_i X_i \lrp{X_0, t \US^{-1} R \US}$. Let $\Gamma := U$. Then $\frac{d}{dt} \th\lrp{ B_i X_i(\US X_0, tR), C_i X_i (\US X_0, tR)} =$ \\$\frac{d}{dt} \th\lrp{ B_i X_i(X_0, t \US^{-1} R \US), C_i X_i (X_0, t\US^{-1} R \US)}$. Put together, we conclude that
    \begin{align*}
        \frac{d}{dt} X_{i+1}(\US X_0, tR) = \US \frac{d}{dt} X_{i+1}(X_0, t\US^{-1} R \US).
    \end{align*}
    We thus complete the proof of \eqref{e:ddt_Xi_rotation_full_A}.

    \paragraph{$G$ and $\frac{d}{dt} G$ under random transformation of $X_0$}\\
    In this part of the proof, we establish two important identities about the evolution of $G$ under random rotation of its arguments:
    \begin{align*}
        \numberthis \label{e:Gi_rotation_full_A}
        & G_i(\US X_0,0) = G_i(X_0,0),\\
        \numberthis \label{e:ddt_Gi_rotation_full_A}
        & \at{\frac{d}{dt} G_i\lrp{\US X_0, tR }}{t=0} = \at{\frac{d}{dt} G_i\lrp{X_0, t\US^{-1} R \US}}{t=0}.
    \end{align*}
    \eqref{e:Gi_rotation_full_A} is an immediate consequence of \eqref{e:Xi_rotation_full_A}:
    \begin{align*}
        G_i(\US X_0, 0) 
        :=& \prod_{\ell = 0}^i \lrp{I + r_\ell M \th\lrp{B_\ell X_\ell(\US X_0,0), C_\ell X_\ell(\US X_0,0)}}\\
        =& \prod_{\ell = 0}^i \lrp{I + r_\ell M \th\lrp{B_\ell X_\ell(X_0,0), C_\ell X_\ell(X_0,0)}}\\
        =& G_i(X_0, 0) ,
    \end{align*}
    where  the second equality uses \eqref{e:Xi_rotation_full_A}, \eqref{e:B_US_commute_full_A} and Assumption \ref{ass:th}.

    To verify \eqref{e:ddt_Gi_rotation_full_A}, we first verify the following recursive relationship:
    \begin{align*}
        & G_{i}\lrp{\US X_0,S} \\
        =& \lrp{I + r_{i} M \th\lrp{B_{i}(S) X_{i}(\US X_0,S), C_{i} X_{i}(\US X_0,S)}} G_{i-1}\lrp{\US X_0,S}\\
        \Rightarrow \qquad 
        & \at{\frac{d}{dt} G_{i}\lrp{\US X_0,tR}}{t=0}\\
        =& \lrp{\frac{d}{dt} \lrp{I + r_{i} M \th\lrp{B_{i} X_{i}(\US X_0,tR), C_{i} X_{i}(\US X_0,tR)}} } G_{i-1}\lrp{\US X_0,tR}\\
        & + \lrp{I + r_{i} M \th\lrp{B_{i} X_{i}(\US X_0,0), C_{i} X_{i}(\US X_0,0)}} \frac{d}{dt} G_{i-1}\lrp{\US X_0,tR}.
        \numberthis \label{e:ddt_Gi_recursion_full_A}
    \end{align*}
    We will analyze the two terms in \eqref{e:ddt_Gi_recursion_full_A} separately:
    \begin{align*}
        & \frac{d}{dt} \lrp{I + r_{i} M \th\lrp{B_{i} X_{i}(\US X_0,tR), C_{i} X_{i}(\US X_0,tR)}} \\
        =& I + r_{i} M \frac{d}{dt} \th\lrp{B_{i} X_{i}(\US X_0,tR), C_{i} X_{i}(\US X_0,tR)}.
    \end{align*}
    Let $S(t) := B_{i} X_{i}(\US X_0,tR)$ and $T(t) := C_{i} X_{i}(\US X_0,tR)$. Let $\tS(t) := U B_{i} X_{i}(X_0,t \US^{-1} R \US)$ and $\tT(t) := U C_{i} X_{i}(X_0,t\US^{-1} R\US)$. We verify that
    \begin{align*}
        S(0) =& B_{i} X_{i}(\US X_0,0) = U B_{i} \lrp{X_0, 0} = \tS(0)\\
        T(0) =& C_{i} X_{i}(\US X_0,0) = U C_{i} \lrp{X_0, 0} = \tT(0)\\
        S'(0) =& B_i \frac{d}{dt} X_{i}(\US X_0,tR)
        = U B_i \frac{d}{dt} X_{i}(X_0,t\US^{-1} R \US) = \tS'(0)\\
        T'(0) =& B_i \frac{d}{dt} X_{i}(\US X_0,tR) = U B_i \frac{d}{dt} X_{i}(X_0,t\US^{-1} R \US) = \tT'(0),
    \end{align*}
    where the last two equalities use \eqref{e:ddt_Xi_rotation_full_A} and \eqref{e:B_US_commute_full_A}. Applying Lemma \ref{l:th_ddt_invariance} with $\Gamma = U$ gives
    \begin{align*}
        & \frac{d}{dt} \lrp{I + r_{i} M \th\lrp{B_{i}(tR) X_{i}(\US X_0,tR), C_{i} X_{i}(\US X_0,tR)}}\\
        =& \frac{d}{dt} \lrp{I + r_{i} M \th\lrp{B_{i}X_{i}(X_0,t\US^{-1} R\US ), C_{i} X_{i}(X_0,t\US^{-1} R\US)}}.
    \end{align*}
    Using \eqref{e:B_US_commute_full_A} and Assumption \ref{ass:th} and the inductive hypothesis, the second term of \eqref{e:ddt_Gi_recursion_full_A} satisfies
    \begin{align*}
        & \lrp{I + r_{i} M \th\lrp{B_{i} X_{i}(\US X_0,0), C_{i} X_{i}(\US X_0,0)}} \frac{d}{dt} G_{i-1}\lrp{\US X_0,tR}\\
        =& \lrp{I + r_{i} M \th\lrp{B_{i} X_{i}(X_0,0), C_{i} X_{i}(X_0,0)}} \frac{d}{dt} G_{i-1}\lrp{X_0,t\US^{-1} R \US}.
    \end{align*}
    Combining the above identities for each term of \eqref{e:ddt_Gi_recursion_full_A}, we conclude that
    \begin{align*}
        \at{\frac{d}{dt} G_{i}\lrp{\US X_0,tR}}{t=0} = \at{\frac{d}{dt} G_{i}\lrp{X_0,t\US^{-1} R\US}}{t=0}.
    \end{align*}
    This concludes the proof of \eqref{e:ddt_Gi_rotation_full}.

    \paragraph{Putting everything together:}\\
    We will now conclude the proof of \eqref{e:t:bjkjrdnads_A:3}. Plugging in \eqref{e:Gi_rotation_full_A} and \eqref{e:ddt_Gi_rotation_full_A} into \eqref{e:t:bjkjrdnads_A:9} gives
    \begin{align*}
        \at{\frac{d}{dt} f(V(tR),B,C)}{t=0}
        =& 2\E_{X_0, U} \lrb{\tr\lrp{\lrp{I-M}G_k(\US X_0, 0)^\top \Kmat \at{\frac{d}{dt} G_k(\US X_0, tR)}{t=0} \lrp{I-M}}}\\
        =& 2\E_{X_0, U} \lrb{\tr\lrp{\lrp{I-M}G_k(X_0, 0)^\top \Kmat \at{\frac{d}{dt} G_k(X_0, tU^\top R\US)}{t=0} \lrp{I-M}}}\\
        =& 2\E_{X_0} \lrb{\tr\lrp{\lrp{I-M}G_k(X_0, 0)^\top \Kmat \at{\frac{d}{dt} G_k(X_0, t\E_{U}\lrb{U^\top R\US})}{t=0} \lrp{I-M}}}\\
        =& 2\E_{X_0} \lrb{\tr\lrp{\lrp{I-M}G_k(X_0, 0)^\top \Kmat \at{\frac{d}{dt} G_k(X_0, t\tilde{R})}{t=0} \lrp{I-M}}}\\
        =& \at{\frac{d}{dt} f(V(t\tilde{R}),B,C)}{t=0}
    \end{align*}
    The third equality uses the fact that $\at{\frac{d}{dt} G_k(X_0, tS)}{t=0}$ is linear in $S$ for any $S$. The fourth equality is by \eqref{e:t:bjkjrdnads_A:2}. This concludes the proof of \eqref{e:t:bjkjrdnads_A:3}.
\end{proof}

\end{proof}

\begin{lemma}
    \label{l:th_ddt_invariance}
    Let $S(t),T(t), \tilde{S}, \tilde{T}: \R \to \Gamma \in \R^{d\times d}$, denote arbitrary continuously  differentiable, matrix-valued, functions of time. Assume that $S(0) = \tilde{S}(0)$, $T(0) = \tilde{T}(0)$, $S'(0) =  \tilde{S}'(0)$ and $T'(0) = \tilde{T}'(0)$ (i.e. have the same time derivative at $t=0$). Let $\Gamma\in \R^{d\times d}$ be an arbitrary invertible matrix. Then for any $\th$ satisfying Assumption \ref{ass:th},
    \begin{align*}
        \at{\frac{d}{dt} \th\lrp{S(t), T(t)}}{t=0} = \at{\frac{d}{dt} \th\lrp{\Gamma^{\top} \tilde{S}(t), \Gamma^{-1} \tilde{T}(t)}}{t=0}
    \end{align*}
\end{lemma}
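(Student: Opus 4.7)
The plan is to reduce the identity to a statement about first-order information at $t=0$ using two ingredients: (i) the invariance of $\tilde{h}$ under the transformation $(W,V)\mapsto(S^\top W, S^{-1}V)$ supplied by Assumption \ref{ass:th}, and (ii) the fact that $\frac{d}{dt}\tilde{h}(S(t),T(t))\big|_{t=0}$ depends on the paths $S(\cdot),T(\cdot)$ only through their values and first derivatives at $0$.

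First I would apply Assumption \ref{ass:th} directly to the right-hand side with the invertible matrix $\Gamma$: for every fixed $t$, $\tilde{h}\bigl(\Gamma^\top \tilde{S}(t),\Gamma^{-1}\tilde{T}(t)\bigr)=\tilde{h}\bigl(\tilde{S}(t),\tilde{T}(t)\bigr)$. This is a pointwise equality of functions of $t$, so differentiating at $t=0$ gives
\begin{align*}
\at{\frac{d}{dt}\tilde{h}\bigl(\Gamma^\top \tilde{S}(t),\Gamma^{-1}\tilde{T}(t)\bigr)}{t=0} = \at{\frac{d}{dt}\tilde{h}\bigl(\tilde{S}(t),\tilde{T}(t)\bigr)}{t=0}.
\end{align*}
This collapses the problem to showing that the time derivatives of $\tilde{h}(S(t),T(t))$ and $\tilde{h}(\tilde{S}(t),\tilde{T}(t))$ agree at $t=0$.

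Second, I would invoke the chain rule. Writing $\mathcal{J}_1$ and $\mathcal{J}_2$ for the Jacobians of $\tilde{h}(\cdot,\cdot)$ with respect to its first and second arguments, evaluated at the common point $(S(0),T(0))=(\tilde{S}(0),\tilde{T}(0))$, we have
\begin{align*}
\at{\frac{d}{dt}\tilde{h}(S(t),T(t))}{t=0} = \mathcal{J}_1\bigl[S'(0)\bigr] + \mathcal{J}_2\bigl[T'(0)\bigr],
\end{align*}
and likewise with $\tilde{S},\tilde{T}$ in place of $S,T$. By the hypotheses $S(0)=\tilde{S}(0)$, $T(0)=\tilde{T}(0)$, $S'(0)=\tilde{S}'(0)$, $T'(0)=\tilde{T}'(0)$, the two resulting expressions are identical, which combined with the first step gives the claim.

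There isn't really a hard step here; the only subtlety is being explicit that Assumption \ref{ass:th} is an \emph{identity of functions} (it holds for arbitrary matrix arguments), so it may be applied with $W=\tilde{S}(t)$, $V=\tilde{T}(t)$, $S=\Gamma$ for every $t$ simultaneously, justifying the differentiation in $t$. Everything else is a direct application of the chain rule and the matching data at $t=0$.
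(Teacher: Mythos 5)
Your proof is correct and follows essentially the same two ingredients as the paper's: the invariance of $\th$ from Assumption \ref{ass:th} and the chain rule together with the matching first-order data at $t=0$. Your ordering is in fact a bit cleaner: you observe that Assumption \ref{ass:th} gives the \emph{pointwise} identity $\th(\Gamma^\top \tilde{S}(t),\Gamma^{-1}\tilde{T}(t))=\th(\tilde{S}(t),\tilde{T}(t))$ for every $t$ and differentiate this identity directly, whereas the paper first applies the chain rule to both sides and then invokes the invariance at the level of the Jacobians $\jA^1,\jA^2$. The latter requires knowing that the Jacobians themselves satisfy $\jA^1(W,V)[\Delta W]=\jA^1(\Gamma^\top W,\Gamma^{-1}V)[\Gamma^\top\Delta W]$ (and similarly for $\jA^2$), which is an extra, easily missed step; your route makes the use of Assumption \ref{ass:th} more transparent and sidesteps the need to track the evaluation points of the Jacobians. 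Either way, the argument is the same in substance, and yours is a correct and slightly tidier presentation of it.
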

\begin{proof}
    Let $\jA^1$ and $\jA^2$ denote the Jacobians of $\th(A,B)[\cdot]$ with respect to $A$ and $B$ respectively. Then
    \begin{align*}
        & \at{\frac{d}{dt} \th\lrp{S(t), T(t)}}{t=0}\\
        =& \jA^1\lrp{S(0), T(0)}[S'(0)] + \jA^2\lrp{S(0), T(0)}[T'(0)]\\
        =& \jA^1\lrp{\tilde{S}(0), \tilde{T}(0)}[\tilde{S}'(0)] + \jA^2\lrp{\tilde{S}(0), \tilde{T}(0)}[\tilde{T}'(0)]\\
        =& \jA^1\lrp{\tilde{S}(0), \tilde{T}(0)}[\Gamma^\top \tilde{S}'(0)] + \jA^2\lrp{\tilde{S}(0), \tilde{T}(0)}[\Gamma^{-1}\tilde{T}'(0)]\\
        =& \at{\frac{d}{dt} \th\lrp{\Gamma^\top \tilde{S}(t), \Gamma^{-1} \tilde{T}(t)}}{t=0}.
    \end{align*}
    The third equality follows from Assumption \ref{ass:th}.
\end{proof}

\section{Experiment Details}
\label{ss:common_experiment_details}
The following are common to all experiments in this paper:

We train the Transformer to minimize the in-context loss given in \eqref{d:ICL_loss}. 

\textbf{Covariate Distribution}\\
The covariates $\tx{i} = \Sigma^{1/2} \xi^{(i)}$, where $\xi^{(i)}$ are sampled iid from the unit sphere. The dimension is $d=5$. The covariance matrix $\Sigma = U^T D U$, where $U$ is a uniformly random orthogonal matrix that changes across seeds, and $D$ is a fixed diagonal matrix with entries $(1,1,0.25,2.25,1)$. 

\textbf{Label Distribution}\\
Conditioned on $\tx{i}$'s, the labels $\ty{i}$ are jointly sampled from the $\K$ Gaussian Process (see Definition \ref{d:k_gaussian_process}). We consider three choices of kernels: $\K^{linear}(u,v)=\lin{u,v}$, ${\K}^{relu}(u,v)=\relu\lrp{\lin{u,v}}$, and $\K^{exp}(u,v)=\exp(\lin{u,v})$ (as defined \eqref{e:3_kernel_choices}).

\textbf{Transformer Architecture}\\
Unless otherwise stated, we train a three-layer linear Transformer (see \eqref{e:dynamics_Z}), where the matrices are initialized by i.i.d. Gaussian matrices. We consider three different choices of nonlinearity $\th$: linear, ReLU and softmax, defined in \eqref{e:4_kernel_choices} (see also Examples \ref{ex:linear}, \ref{ex:relu} and \ref{ex:softmax}). The Transformer is parameterized by $(r_\ell, A_\ell, B_\ell, C_\ell)_{\ell=0,1,2}$. (the value matrix $V_\ell$ is parameterized by the $A_\ell, r_\ell$, see Assumption \ref{ass:full_attention}). 

\textbf{Training Algorithm}\\
We train the Transformer using ADAM with gradient clipping. Each gradient step is computed from a minibatch of size 30000, and we resample the minibatch every 10 steps. All plots are averaged over $3$ runs with different $U$ (i.e. $\Sigma$) sampled each time, and different seeds for sampling training data.

\section{Background on RKHS}
\label{s:rkhs_basics}
Here we introduce a number of results from RKHS literature, which we will use later.

\begin{theorem}[\cite{wainwright2019high} Theorem 12.11, Kernel Reproducing Property]
    \label{t:kernel_reproducing_property}
    Given any positive semidefinite kernel function $\K$, defined on the cartesian product space $\X \times \X$, there is a unique Hilbert space $\HH$ in which the kernel satisfies the reproducing property: for any $x\in \X$, the function $\K(\cdot, x)$ belongs to $\HH$, and satisfies the relation
    \begin{align*}
        \lin{f, \K(\cdot, x)}_{\HH} = f(x)
    \end{align*}
    \label{t:rkhs_existence_wainwr}
\end{theorem}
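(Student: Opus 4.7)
The plan is to prove this by the standard Moore--Aronszajn construction, building $\HH$ explicitly from the kernel and then verifying the reproducing property by construction. First I would introduce the pre-Hilbert space $\HH_0$ consisting of all finite linear combinations $f = \sum_{i=1}^m \alpha_i \K(\cdot, x_i)$ with $\alpha_i \in \R$ and $x_i \in \X$. On $\HH_0$ I would define the bilinear form
\begin{align*}
    \lin{f,g}_{\HH_0} := \sum_{i=1}^m \sum_{j=1}^n \alpha_i \beta_j \K(x_i, y_j)
\end{align*}
for $f = \sum_i \alpha_i \K(\cdot, x_i)$ and $g = \sum_j \beta_j \K(\cdot, y_j)$. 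The first step is to verify that this is well-defined (independent of the chosen representation), which follows from rewriting $\lin{f,g}_{\HH_0} = \sum_j \beta_j f(y_j) = \sum_i \alpha_i g(x_i)$, each sum being manifestly representation-free.

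Next I would check that $\lin{\cdot,\cdot}_{\HH_0}$ is a genuine inner product. Symmetry and bilinearity are immediate. Positive semi-definiteness $\lin{f,f}_{\HH_0} = \sum_{i,j}\alpha_i \alpha_j \K(x_i,x_j) \geq 0$ is exactly the hypothesis that $\K$ is positive semidefinite. To upgrade to positive definiteness, I would use the Cauchy--Schwarz inequality (valid for any PSD form) to get $|f(x)|^2 = |\lin{f, \K(\cdot,x)}_{\HH_0}|^2 \leq \K(x,x)\lin{f,f}_{\HH_0}$; thus $\lin{f,f}_{\HH_0} = 0$ forces $f \equiv 0$ pointwise. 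Note that this pointwise evaluation identity is already the reproducing property on $\HH_0$, so it comes for free from the construction.

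The remaining step is to complete $\HH_0$ into a Hilbert space $\HH$ while preserving the interpretation of elements as functions on $\X$. Given any Cauchy sequence $\{f_n\} \subset \HH_0$, the bound $|f_n(x) - f_m(x)| \leq \sqrt{\K(x,x)}\,\|f_n - f_m\|_{\HH_0}$ shows that $\{f_n(x)\}$ is Cauchy in $\R$ for every $x$, so it converges pointwise to a limit function $f$. I would identify the completion with the set of such pointwise limits, and extend the inner product by continuity. The reproducing property passes to the limit: $\lin{f, \K(\cdot,x)}_{\HH} = \lim_n \lin{f_n, \K(\cdot,x)}_{\HH_0} = \lim_n f_n(x) = f(x)$. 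Uniqueness follows because any Hilbert space of functions with the reproducing property must contain each $\K(\cdot,x)$ and must assign the inner products prescribed above on their span, hence must contain $\HH_0$ isometrically; since $\HH_0$ is dense in any such Hilbert space (any element orthogonal to every $\K(\cdot,x)$ vanishes pointwise by the reproducing property), the completion is forced.

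The main obstacle to be careful about is the identification of the abstract completion with a concrete space of functions: a priori the completion consists of equivalence classes of Cauchy sequences, and one must check that two Cauchy sequences with the same pointwise limit define the same element, and conversely. Both directions rely on the inequality $|f(x)| \leq \sqrt{\K(x,x)}\,\|f\|_{\HH}$, which ensures that norm convergence implies pointwise convergence and that the pointwise zero function corresponds to the zero element of the completion. Once this identification is in hand, the reproducing property and uniqueness are essentially bookkeeping.
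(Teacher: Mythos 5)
The paper does not prove this statement; it is cited verbatim from \citet{wainwright2019high} (Theorem 12.11) and used as background. Your proposal is a correct and careful reconstruction of the standard Moore--Aronszajn argument, which is exactly the proof given in the cited reference: build the pre-Hilbert span of the kernel sections, verify the inner product is representation-independent and positive definite via the Cauchy--Schwarz bound $|f(x)|^2 \le \K(x,x)\,\lin{f,f}$, complete, and check that the reproducing property and uniqueness pass to the limit. You also correctly flag the one genuinely delicate point, namely identifying the abstract completion with a concrete function space, which requires showing that a Cauchy sequence whose pointwise limit is zero has norm tending to zero (this follows from the fact that $\lin{f_n,g}\to 0$ for each fixed $g\in\HH_0$ together with boundedness of Cauchy sequences). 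No gaps; this is the same route as the source.
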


\begin{theorem}[\cite{scholkopf2001generalized} Theorem 1, Nonparametric Representer Theorem]
    \label{t:representer_theorem}
    Given $\X$, positive semi-definite kernel $\K$ over $\X\times\X$, a set of $m$ training samples $(x_1,y_1)\ldots (x_m,y_m) \in \X \times \R$, a strictly monotonically increasing real-valued function $g$ on $[0,\infty]$, an arbitrary cost function $c : (\X \times \R^2)^m \to \R$. Then any $f\in \HH$ minimizing the regularized risk functional
    \begin{align*}
        c\lrp{(x_1,y_1,f(x_1))\ldots (x_m,y_m,f(x_m))} + g (\lrn{f}_{\HH})
    \end{align*}
    admits a representation of the form
    \begin{align*}
        f(\cdot) = \sum_{i=1}^m \alpha_i \K(\cdot, x_i)
    \end{align*}
\end{theorem}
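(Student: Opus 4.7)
The plan is to prove the representer theorem by the standard orthogonal decomposition argument: write $f$ as the sum of its projection onto the span of the kernel sections at the data, plus an orthogonal remainder, and then show that (a) the cost term depends only on the projection and (b) the regularizer can only be decreased by killing the remainder.

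First I would set $\mathcal{V} := \mathrm{span}\{\K(\cdot, x_1), \ldots, \K(\cdot, x_m)\} \subseteq \HH$. Since $\mathcal{V}$ is a finite-dimensional (hence closed) subspace of the Hilbert space $\HH$, any $f \in \HH$ admits a unique orthogonal decomposition
\begin{align*}
f = f_\parallel + f_\perp, \qquad f_\parallel \in \mathcal{V}, \quad f_\perp \in \mathcal{V}^\perp,
\end{align*}
and by construction $f_\parallel$ can be written as $\sum_{i=1}^m \alpha_i \K(\cdot, x_i)$ for some scalars $\alpha_i$.

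Next I would use the reproducing property from Theorem~\ref{t:kernel_reproducing_property} to show that the cost term is unaffected by $f_\perp$. For each training input $x_j$,
\begin{align*}
f(x_j) = \lin{f, \K(\cdot, x_j)}_{\HH} = \lin{f_\parallel, \K(\cdot, x_j)}_{\HH} + \lin{f_\perp, \K(\cdot, x_j)}_{\HH} = f_\parallel(x_j),
\end{align*}
because $\K(\cdot, x_j) \in \mathcal{V}$ forces the second inner product to vanish. Hence $c\lrp{(x_1,y_1,f(x_1)),\ldots,(x_m,y_m,f(x_m))} = c\lrp{(x_1,y_1,f_\parallel(x_1)),\ldots,(x_m,y_m,f_\parallel(x_m))}$, independent of $f_\perp$.

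For the regularizer, orthogonality gives $\lrn{f}_\HH^2 = \lrn{f_\parallel}_\HH^2 + \lrn{f_\perp}_\HH^2 \geq \lrn{f_\parallel}_\HH^2$, with equality iff $f_\perp = 0$. Since $g$ is strictly monotonically increasing on $[0,\infty)$, this yields $g(\lrn{f}_\HH) \geq g(\lrn{f_\parallel}_\HH)$. Combining with the previous paragraph, the regularized risk at $f_\parallel$ is no larger than at $f$; so if $f$ is a minimizer then so is $f_\parallel$, and any minimizer $f$ must already lie in $\mathcal{V}$, i.e. $f = \sum_{i=1}^m \alpha_i \K(\cdot, x_i)$. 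The only subtle point is really the uniqueness/closedness of the projection, which is immediate because $\mathcal{V}$ is finite-dimensional; no limiting argument or separability of $\HH$ is needed. The main ``obstacle,'' to the extent there is one, is simply making sure we invoke the reproducing property to collapse $f(x_j)$ to $f_\parallel(x_j)$ — without that, both the cost and the regularizer would still depend on $f_\perp$ and the argument would fail.
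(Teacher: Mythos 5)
The paper does not prove this theorem; it is cited directly from Schölkopf et al.\ (2001), Theorem~1. Your orthogonal-decomposition argument is correct and is in fact the standard proof given in that reference: project $f$ onto $\mathcal{V} = \mathrm{span}\{\K(\cdot,x_i)\}$, use the reproducing property to show the cost term sees only $f_\parallel$, and invoke strict monotonicity of $g$ together with the Pythagorean identity to conclude $f_\perp = 0$. One small point to make fully explicit in the final sentence: because $f$ is assumed to be a minimizer and the risk at $f_\parallel$ is no larger, the two risks are equal, so $g(\lrn{f}_\HH) = g(\lrn{f_\parallel}_\HH)$; strict monotonicity then forces $\lrn{f}_\HH = \lrn{f_\parallel}_\HH$, and the Pythagorean identity forces $f_\perp = 0$. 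You state the pieces but compress the chain at the end; spelling it out removes any ambiguity about whether you are merely showing that \emph{some} minimizer has the stated form versus that \emph{every} minimizer does (the latter is what is claimed and what the strict-monotonicity hypothesis buys).
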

Finally, we establish the explicit form of steepest descent in Hilbert space. The proof is quite standard (see e.g. Martin's 241B lecture 6), and we include it for completeness.
\begin{lemma}[Steepest Descent in Hilbert Space]
    \label{l:rkhs_descent}
    Given any $f\in \HH$, let $g^*$ denote the steepest descent direction of the weighted empirical least-squares loss wrt $\lrn{\cdot}_{\HH}$, i.e.
    \begin{align*}
        g^* := \argmin_{g\in \HH, \lrn{g}_{\HH}=1} \at{\frac{d}{dt} \sum_{i=1}^n \lrp{\ty{i} - (f+tg)(\tx{i})}^2}{t=0}.
    \end{align*}
    Then $g^*(\cdot) = c \sum_{i=1}^n\lrp{\ty{i} - f(\tx{i})} \K(\cdot, \tx{i})$
    for some scalar $c\in \R^+$ (we give explicit expression for $c$ in the proof). 
\end{lemma}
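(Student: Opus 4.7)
The plan is to apply the reproducing property of the RKHS (Theorem \ref{t:kernel_reproducing_property}) to convert the pointwise evaluations $g(\tx{i})$ into Hilbert-space inner products, after which the problem reduces to a standard Cauchy--Schwarz argument on the unit sphere of $\HH$.

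First, I would compute the directional derivative explicitly. Expanding the square and differentiating in $t$,
\begin{align*}
\at{\frac{d}{dt} \sum_{i=1}^n \lrp{\ty{i} - (f+tg)(\tx{i})}^2}{t=0} = -2\sum_{i=1}^n \lrp{\ty{i} - f(\tx{i})} g(\tx{i}).
\end{align*}
Next, I would invoke the reproducing property $g(\tx{i}) = \lin{g, \K(\cdot,\tx{i})}_{\HH}$ guaranteed by Theorem \ref{t:kernel_reproducing_property}, and pull the sum inside the linear functional in the second argument. Defining
\begin{align*}
h(\cdot) := \sum_{i=1}^n \lrp{\ty{i} - f(\tx{i})} \K(\cdot, \tx{i}) \in \HH,
\end{align*}
the derivative rewrites as $-2\lin{g, h}_{\HH}$.

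The minimization then becomes $\min_{\lrn{g}_{\HH} = 1} -2\lin{g,h}_{\HH}$. By the Cauchy--Schwarz inequality, $\lin{g,h}_{\HH} \leq \lrn{g}_{\HH} \lrn{h}_{\HH} = \lrn{h}_{\HH}$, with equality if and only if $g$ is a positive scalar multiple of $h$. Hence the unique minimizer on the unit sphere is
\begin{align*}
g^* = \frac{h}{\lrn{h}_{\HH}} = c \sum_{i=1}^n \lrp{\ty{i} - f(\tx{i})} \K(\cdot, \tx{i}), \qquad c := \frac{1}{\lrn{h}_{\HH}} > 0,
\end{align*}
which is the stated form. (A degenerate case arises if $h = 0$, i.e.\ $f$ already interpolates the data in a way that kills the gradient; then any unit-norm $g$ trivially achieves the minimum, and one may take $c$ to be any positive constant by convention.)

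The only genuinely delicate point is the appeal to the reproducing property to convert the pointwise functional $g \mapsto g(\tx{i})$ into an inner product; everything else is a one-line Cauchy--Schwarz. To make the constant $c$ explicit, one would expand $\lrn{h}_{\HH}^2 = \sum_{i,j=1}^n \lrp{\ty{i}-f(\tx{i})}\lrp{\ty{j}-f(\tx{j})} \K(\tx{i},\tx{j})$ using bilinearity of the inner product and the reproducing property a second time, giving $c = \lrp{\sum_{i,j=1}^n \lrp{\ty{i}-f(\tx{i})}\lrp{\ty{j}-f(\tx{j})} \K(\tx{i},\tx{j})}^{-1/2}$.
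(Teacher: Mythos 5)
Your proof is correct, and it takes a genuinely cleaner route than the paper's. The paper first applies a Lagrangian relaxation of the unit-norm constraint, then invokes the Representer Theorem (Theorem~\ref{t:representer_theorem}) to deduce that $g^*$ lies in the span of $\{\K(\cdot,\tx{i})\}$, and finally optimizes over the coefficient vector $\alpha\in\R^n$ in that finite-dimensional subspace. You bypass both the Lagrangian and the Representer Theorem entirely: after using the reproducing property to recognize the linearized objective as $-2\lin{g,h}_{\HH}$ for $h := \sum_i (\ty{i}-f(\tx{i}))\K(\cdot,\tx{i})$, a single application of Cauchy--Schwarz on the unit sphere finishes the argument. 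This is more elementary and transparent; it also sidesteps the subtlety the paper has to address separately, namely that the representer coefficients are not unique when $\Kmat$ is singular (in your framing, that non-uniqueness is invisible since the minimizer $g^*=h/\lrn{h}_\HH$ is determined as an element of $\HH$ regardless, modulo the degenerate case $h=0$ which you correctly flag). One further point in your favor: your explicit constant $c = \lrn{h}_{\HH}^{-1} = \bigl((Y-F)^\top\Kmat(Y-F)\bigr)^{-1/2}$ is the correct normalization to make $\lrn{g^*}_\HH = 1$, whereas the paper's final line writes $c = \bigl((Y-F)^\top\Kmat(Y-F)\bigr)^{-1}$, which is missing a square root (the paper also momentarily drops the $-f(\tx{i})$ term after the Lagrangian step before restoring it). These are harmless slips in the paper since only the direction of $g^*$ matters downstream, but your version is the careful one.
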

\begin{proof}
    Using the method of Lagrangian multipliers, there exists some $\lambda$ for which the above is equivalent to
    \begin{align*}
        g^* = & \argmin_{g\in \HH} \at{\frac{d}{dt} \sum_{i=1}^n \lrp{\ty{i} - (f+tg)(\tx{i})}^2 + \lambda \lrn{g}_{\HH}^2 }{t=0}\\
        =& \argmin_{g\in \HH} \sum_{i=1}^n -2\ty{i} g(\tx{i}) + \lambda \lrn{g}_{\HH}^2.
    \end{align*}
    The second line is by simple algebra.

    Applying Theorem \ref{t:representer_theorem} with $f:= g$ and $g(r):=\frac{\lambda}{2}r^2$, we know that $g^*(\cdot) = \sum_{i=1}^n \alpha_i \K(\cdot, \tx{i})$, for some $\alpha \in \R^{n}$. Using this together with Theorem \ref{t:kernel_reproducing_property}, we can write 
    \begin{align*}
        \lrn{g}_{\HH}^2
        =& \sum_{i,j=1}^n \alpha_i \alpha_j \lin{\K(\cdot,\tx{i}), \K(\cdot, \tx{j})}_{\HH}
        = \sum_{i,j=1}^n \alpha_i \alpha_j \K(\tx{i},\tx{j}).
    \end{align*}
    Thus $g^*(\cdot) = \sum_{i=1}^n \alpha^* \K(\cdot, \tx{i})$. Let $Y,F\in \R^n$ be defined such that $Y_i = \ty{i}$ and $F_i = f(\tx{i})$. Then
    \begin{align*}
        \alpha^* 
        =& \argmin_{\alpha \in \R^n} \sum_{i,j=1}^n  - 2 (\ty{i}-f(\tx{i})) \alpha_j \K(\tx{i},\tx{j}) + \lambda \alpha_i \alpha_j \K(\tx{i},\tx{j})\\
        =& \argmin_{\alpha \in \R^n} -2\lrp{\lrp{Y - F}}^\top \Kmat \alpha + \lambda \alpha^\top \Kmat \alpha.
    \end{align*}
    Taking $\nabla_\alpha = 0$, we get $\alpha^* \propto \lrp{Y-F}$. Recall that our original constraint is $\alpha^\top \Kmat \alpha = 1$, it follows that
    \begin{align*}
        \alpha^* = \frac{1}{(Y-F)^\top \Kmat (Y-F)} (Y-F).
    \end{align*}

    This implies that $g^*(\cdot) = \frac{1}{(Y-F)^\top \Kmat (Y-F)}\sum_{i=1}^n \lrp{\ty{i} - f(\tx{i})} \K\lrp{\cdot, \tx{i}}$, which concludes our proof.

    Note that the choice of $\alpha^*$ is in fact not unique when $rank(\Kmat)<n$. To see this, let $b\in \R^n$ be such that $b\neq 0, \Kmat b =0$. Then by the same argument above, 
    \begin{align*}
        \lrn{\sum_{i=1}^n b_i \K(\cdot, \tx{i})}_{\HH}^2 = b^\top \Kmat b = 0,
    \end{align*}
    so that $\sum_{i=1}^n \lrb{\alpha^* + b}_i \K(\cdot, \tx{i}) = \sum_{i=1}^n \lrb{\alpha^*}_i \K(\cdot, \tx{i})$, where equality is in the sense of $\lrn{\cdot}_{\HH}$.
\end{proof}

For intuition, let us consider the reduction of Lemma \ref{l:rkhs_descent} to the linear regression setting. Let $\theta(t) : \R \to \R^d$ denote the gradient flow of the parameter $\theta$ with respect to the empirical least-squares loss $\sum_{i=1}^n \lrp{\ty{i} - \lin{\theta(t), \tx{i}}}^2$ in Euclidean norm. It follows that 
\begin{align*}
    \frac{d}{dt} \theta(t) 
    =& 2 \sum_{i=1}^n (\ty{i} - \lin{\theta(t), \tx{i}}) \tx{i}\\
    =& 2 \sum_{i=1}^n X (Y - X^\top \theta(t)).
\end{align*}
Now let $f_t(x) := \lin{x, \theta(t)}$, and let $F$ denote the vector with $F_i = f_t(\tx{i}) = \lrb{X^\top \theta(t)}_i$. Then $\frac{d}{dt} f_t(x) = -2 \sum_{i=1}^n \lrb{Y-F}_i \K(\cdot, \tx{i})$, which is equal to the direction in Lemma \ref{l:rkhs_descent}.

\end{document}